\title{Efficient Point-to-Subspace Query in $\ell^1$ with Application to Robust Object Instance Recognition} 
\author{Ju Sun\thanks{Department of Electrical Engineering, Columbia University, New York, USA (\email{jusun@ee.columbia.edu}). JS gratefully acknowledges support from the Wei Family Private Foundation. YZ and JW were partially supported by ONR N00014-13-1-0492 and Columbia University startup funding.} \and Yuqian Zhang\thanks{Department of Electrical Engineering, Columbia University, New York, USA (\email{yz2409@columbia.edu}).} \and John Wright\thanks{Department of Electrical Engineering, Columbia University, New York, USA (\email{johnwright@ee.columbia.edu}).}}
\begin{document}
\maketitle
\newtheorem{conjecture}[theorem]{Conjecture}
\newtheorem{problem}[theorem]{Problem}
\newtheorem{claim}[theorem]{Claim}
\newtheorem{remark}[subsection]{Remark}
\newtheorem{example}[subsection]{Example}

\newcommand{\eps}{\varepsilon}
\newcommand{\R}{\mathbb{R}}
\newcommand{\Z}{\mathbb{Z}}
\newcommand{\N}{\mathbb{N}}

\renewcommand{\Re}{\R}
\newcommand{\event}{\mc E}

\newcommand{\mb}{\mathbf}
\newcommand{\mc}{\mathcal}
\newcommand{\mf}{\mathfrak}
\newcommand{\mbb}{\mathbb}
\newcommand{\msc}{\mathscr}

\newcommand{\vtrz}{\mathrm{vec}}

\newcommand{\norm}[2]{\left\| #1 \right\|_{#2}}
\newcommand{\innerprod}[2]{\left\langle #1,  #2 \right\rangle}
\newcommand{\prob}[1]{\mbb P\left[ #1 \right]}
\newcommand{\expect}[1]{\mbb E\left[ #1 \right]}
\newcommand{\function}[2]{#1 \left(#2\right)}
\newcommand{\integral}[4]{\int_{#1}^{#2}\; #3\; #4}
\newcommand{\js}[1]{{\color{magenta}{\bf Note: #1}}}
\newcommand{\jw}[1]{{\color{blue}{\bf John: #1}}}

\newcommand{\red}{\color{red}}
\newcommand{\slugmaster}{%
\slugger{siims}{xxxx}{xx}{x}{x--x}}

\begin{abstract}
Motivated by vision tasks such as robust face and object recognition, we consider the following general problem: given a collection of low-dimensional linear subspaces in a high-dimensional ambient (image) space, and a query point (image), efficiently determine the nearest subspace to the query in $\ell^1$ distance. In contrast to the naive exhaustive search which entails large-scale linear programs, we show that the computational burden can be cut down significantly by a simple two-stage algorithm: (1) projecting the query and data-base subspaces into lower-dimensional space by random Cauchy matrix, and solving small-scale distance evaluations (linear programs) in the projection space to locate candidate nearest; (2) with few candidates upon independent repetition of (1), getting back to the high-dimensional space and performing exhaustive search. To preserve the identity of the nearest subspace with nontrivial probability, the projection dimension typically is low-order polynomial of the subspace dimension multiplied by logarithm of number of the subspaces (Theorem~\ref{thm:main}). The reduced dimensionality and hence complexity renders the proposed algorithm particularly relevant to vision application such as robust face and object instance recognition that we investigate empirically. 
\end{abstract}

\begin{keywords}$\ell^1$ point-to-subspace distance, nearest subspace search, Cauchy projection, face recognition, subspace modeling\end{keywords}

\begin{AMS}68U10, 68T45, 68W20, 68T10, 15B52\end{AMS}

\pagestyle{myheadings}
\thispagestyle{plain}
\markboth{Efficient Point-to-Subspace Query in $\ell^1$}{Sun, Zhang, and Wright}

\section{Introduction}
Although visual data reside in very high-dimensional spaces, they often exhibit much lower-dimensional intrinsic structure. Modeling and exploiting this low-dimensional structure is a central goal in computer vision, with impact on applications from low-level tasks such as signal acquistion and denoising to higher-level tasks such as object detection and recognition. 

In face and object recognition alone, many popular, effective techniques can be viewed as searching for the low-dimensional model which best matches the query (test) image (e.g., ~\cite{ho2003clustering, basri2007approximate}). To each object $\mc O$ of interest, we may associate a low-dimensional subset $\mc M \subset \Re^D$, which approximates the set of images of $\mc O$ that can be generated under different physical conditions -- say, varying pose or illumination. Given $n$ objects $\mc O_i$ and their corresponding approximation subsets $\mc M_i$, the recognition problem becomes one of finding the nearest low-dimensional structure. To put it formal, 
\begin{align*}
\mathop{\arg\min}_i d(\mb q,\mc M_i),
\end{align*} 
where $\mb q \in \Re^D$ is the test image, and $d(\cdot,\cdot)$ is some prescribed point-to-set distance function. 

This paradigm is broad enough to encompass very classical work in face recognition \cite{Turk1991-CVPR} and object instance recognition \cite{Murase1995-IJCV}, as well as more recent developments \cite{Cootes2001-PAMI,Blanz2003-PAMI,Wright2009-PAMI}. In situations when sufficient training data are available to accurately fit the $\mc M_i$, it can achieve high recognition rates \cite{Wagner2012-PAMI}. In applying it to a particular scenario, however, at least three critical questions must be answered:

First, {\em what is the most appropriate class of low-dimensional models $\mc M_i$?} The proper class of models may depend on the properties of the object $\mc O$, as well as the types of nusiance variations that may be encountered. For example, variations in illumination may be well-captured using low-dimensional {\em linear} models \cite{Georghiades2001-PAMI,Basri2003-PAMI}, whereas variations in pose or alignment are highly nonlinear \cite{Donoho2005-JMIV}. 

Second, {\em  how should we measure the distance $d(\mb q,\mc M_i)$ between $\mb q$ and $\mc M_i$?} Typically, one adopts a metric $\mathrm{dist}\left(\cdot, \cdot\right)$ on $\Re^D$, and then sets 
\[
d(\mb q,\mc M_i) = \min_{\mb v \in \mc M_i} \mathrm{dist}\left(\mb q, \mb v\right). 
\]
Here, again, the appropriate metric $\mathrm{dist}\left(\cdot, \cdot\right)$ depends on our prior knowledge. For example, if the observation $\mb q$ is known to be perturbed by iid Gaussian noise, minimizing the metric induced by the $\ell^2$ norm $\mathrm{dist}\left(\mb q, \mb v\right) = \| \mb q - \mb v \|_2$ yields a maximum likelihood estimator. However, in practice other norms may be more appropriate: for example, in situations where the data may have errors due to occlusions, shadows, specularities, the $\ell^1$ norm is a more robust alternative \cite{Wright2009-PAMI}. 

Finally, given an appropriate model and error distance, {\em how can we efficiently determine the nearest model to a given input query?} That is to say, we would like to solve 
\begin{equation}
\mathop{\arg\min}_{i \in \left\{1, \cdots, n\right\}} \min_{\mb v \in \mc M_i} \mathrm{dist}\left(\mb q, \mb v\right)
\end{equation}
using computational resources that depend as gracefully as possible on the ambient dimension $D$ (typically number of pixels in the image) and the number of models $n$. In practical applications, both of these quantities could be very large.

\paragraph{This paper} In this paper, we consider the case when the low-dimensional models $\mc M_i$ are {\em linear subspaces}. As mentioned above, subspace models are well-justified for modeling illumination variations \cite{Georghiades2001-PAMI,Basri2003-PAMI} (say, in near-frontal face recognition), and also form a basic building block for modeling and computing with more general, nonlinear sets \cite{Simard98transformationinvariance,Roweis2000-Science}. 

Our methodology pertains to distances $\mathrm{dist}(\mb q,\mb v)$ induced by the $\ell^p$ norm $\| \mb q - \mb v \|_p$, with $p \in (0,2]$\footnote{Mathematically $\norm{\mb x}{p} = \left(\sum_{i} \left|x_i\right|^p\right)^{1/p}$ defines a valid norm only when $p \geq 1$, which in turn induces valid metric $\norm{\mb x - \mb y}{p}$. For $p \in \left(0, 1\right)$, though $\norm{\cdot}{p}$ is not a valid norm, one can verify that $\norm{\mb x}{p}^p = \sum_{i} \left|x_i\right|^p$ indeed also induces valid metric, i.e., for all $\mb x, \mb y, \mb z \in \R^D$, $\norm{\mb x - \mb y}{p}^p \geq 0$, $\norm{\mb x - \mb y}{p}^p = 0 \Longleftrightarrow \mb x = \mb y$, $\norm{\mb x - \mb y}{p}^p = \norm{\mb y - \mb x}{p}^p$, and also the triangular inequality holds: 
$\norm{\mb x - \mb z}{p}^p \leq \norm{\mb x - \mb y}{p}^p + \norm{\mb y - \mb z}{p}^p$. These latter cases may turn out to be empirically interesting, as $\ell^p$ ``norm" for $p \in \left(0, 1\right)$ is actually sharper proxy for the $\ell^0$ counting norm (which is the main count for robustness to errors as discussed in subsequent parts) than the $\ell^1$ norm. Since stable distributions exist for all $\norm{\cdot}{p}$ ($p \in \left(0, 2\right]$), our current algorithm and analysis methodology is likely to extend to all $p \in \left(0, 2\right]$.}. We focus here on the $\ell^1$ norm, $\| \mb q - \mb v \|_1 = \sum_i | q_i - v_i |$. The $\ell^1$ norm is a natural and well-justified choice when the test image contains pixels that do not fit the model -- say, due to moderate occlusion, cast shadows, or specularities \cite{Wright2009-PAMI}. For $p \in (0,2]$, the $\ell^p$ norm with $p = 1$ strikes a unique compromise between computational tractability (convexity) and robustness to gross errors. 

With this choice of models and distance, at recognition time we are left with the following computational task:
\begin{problem}\label{prob:main}
Given $n$ linear subspaces $\mc S_1, \dots, \mc S_n$ of dimension $r$ and a query point $\mb q$, all in $\Re^D$, determine the nearest $\mc S_i$ to $\mb q$ in $\ell^1$ norm. 
\end{problem}

This problem has a straightforward solution: solve a sequence of $n$ $\ell^1$ regression problems:
\begin{equation} \label{eqn:L1}
  \min_{\mb v \in \mc S_i} \| \mb q - \mb v\|_1,
\end{equation}
and choose the $i$ with the smallest optimal objective value. The total cost is $O( n \cdot T_{\ell^1}(D,r) )$, where $T_{\ell^1}(D,r)$ is the time required to solve the linear program \eqref{eqn:L1}. For example, for interior point methods \cite{Boyd}, we have $T_{\ell^1}(D,r) = O(D^{3.5})$ \footnote{We have suppressed the dependency on other factors, such as $\log \frac{1}{\eps}$ (where $\eps$ denotes the target precision) and $r$ to make things concise, because our main interest is mostly in the effect of $D$ on the complexity. Lower order is possible for our specific case by some careful implementation, see, e.g., 11.8.2, page 617 of~\cite{Boyd}. See also our discussion of running time in Section~\ref{sec:running_time}. }. There exist more scalable first-order methods \cite{Efron04leastangle,Beck2009-SJIS,Yin_bregmaniterative,Yang2010-ICIP}, which improve on the dependence on $D$ at the expense of higher iteration complexity. The best known complexity guarantees for each of these methods are again superlinear in $D$, although linear runtimes may be achievable when the residual $\mb q - \mb v_\star$ is very sparse \cite{Donoho06fastsolution} or the problem is otherwise well-structured \cite{AgarwalNIPS-2011}. Even in the best case, however, the aforementioned algorithms have complexity $\Omega( n D )$.\footnote{On a more technical level, when the $\mc S_i$ are fit to sample data, the aforementioned first-order methods may require tuning for optimal performance.} 
When both terms are large, this dependence is prohibitive: Although Problem \ref{prob:main} is simple to state and easy to solve in polynomial time, achieving real-time performance or scaling massive databases of objects appears to require a more careful study.

In this paper, we present a very simple, practical approach to Problem \ref{prob:main}, with much improved computational complexity, and reasonably strong theoretical guarantees. Rather than working directly in the high-dimensional space $\Re^D$, we randomly embed the query $\mb q$ and subspaces $\mc S_i$ into $\Re^d$, with $d \ll D$. The random embedding is given by a $d \times D$ matrix $\mb P$ whose entries are i.i.d. standard Cauchy random variables. That is to say, instead of solving \eqref{eqn:L1}, we solve
\begin{equation} \label{eqn:L1-proj}
\min_{\mb v \in \mc S_i} \| \mb P \mb q - \mb P \mb v \|_1.
\end{equation}
We prove that if the embedded dimension $d$ is sufficiently large -- say $d = \mathrm{poly}( r \log n )$ (i.e., $d$ bounded by some polynomial of $r\log n$), then with constant probability the model $\mc S_i$ obtained from \eqref{eqn:L1-proj} is the same as the one obtained from the original optimization \eqref{eqn:L1}. 

The required dimension $d$ does not depend in any way on the ambient dimension $D$, and is often significantly smaller: e.g., $d = 25$ vs.\ $D = 32,000$ for one typical example of face recognition. The resulting (small) $\ell^1$ regression problems can be solved very efficiently using customized interior point solvers (e.g., \cite{Mattingley}). These methods are numerically reliable, and can yield a speedup of several folds over the standard approach relying on solving \eqref{eqn:L1}. 

The price paid for this improved computational profile is a small increase in the probability of failure of the recognition algorithm, due to the use of a randomized embedding. Our theory quantifies how large $d$ needs to be to render this probability of error under control. Repeated trials with independent projections $\mb P$ can then be used to make the probability of failure as small as desired. Because $\ell^1$ regression is so much cheaper in the low-dimensional space $\Re^d$ than in the original space $\Re^D$ provided $d \ll D$, these repeated trials are affordable. 

The end result is a simple, practical algorithm that guarantees to maintain the good properties of $\ell^1$ regression, with substantially improved computational complexity. We demonstrate this on model problems in subspace-based face and object instance recognition. In addition to improved complexity in theory, we observe remarkable improvements on real data examples, suggesting that point-to-subspace query in $\ell^1$ could become a  practical strategy (or basic building block) for face and object recognition tasks involving large databases, or small databases under hard time constraints. 

\paragraph{Relationship to existing work} Problem \ref{prob:main} is an example of a {\em subspace search} problem. For $0$-dimensional affine subspaces in $\ell^2$ (i.e., points), this problem coincides with the nearest neighbor problem. Its approximate version can be solved in time {\em sublinear} in $n$, the number of points, using randomized techniques such as locality sensitive hashing \cite{Datar04LSH}. When the dimension $r$ is larger than zero, the problem becomes significantly more challenging. For the case of $r=1$, sublinear time algorithms exist, although they are more complicated \cite{Andoni_approximateline}. 

Recently two groups have proposed approaches to tackling larger $r$. Basri et.\ al.\ \cite{Basri2011-pami} lift subspaces into a higher dimensional vector space (identifying the subspace with its $D\times D$ orthoprojector) and then apply point-based near neighbor search. Jain et.\ al.\ give several random hash functions for the case when the $\mc S_i$ are hyperplanes \cite{Jain2010-NIPS}. Both of these approaches pertain to $\ell^2$ only. Both perform well on numerical examples, but have limitations in theory, as neither is known to yield an algorithm with provably sublinear complexity for all inputs. Results in theoretical computer science suggest that these limitations may be intrinsic to the problem: a sublinear time algorithm for approximate nearest hyperplane search would refute the strong version of the ``exponential time hypothesis'', which conjectures that general boolean satisfiability problems cannot be solved in time $O(2^{c n})$ for any $c < 1$ \cite{Williams05anew}.

The above algorithms exploit special properties of the $\ell^2$ version of Problem \ref{prob:main}, and do not apply to its $\ell^1$ variant. However, the $\ell^1$ variant retains the aforementioned difficulties, suggesting that an algorithm for $\ell^1$ near subspace search with sublinear dependence on $n$ is unlikely as well.\footnote{Although it could be possible if we are willing to accept time and space complexity exponential in $r$ or $D$, ala \cite{Magen2008}.} This motivates us to focus on ameliorating the dependence on $D$. Our approach is very simple and very natural: Cauchy projections are chosen because the Cauchy family is the unique $\ell^1$-stable distribution, i.e., Cauchy projection of any given vector remains iid Cauchy (see Equation~\eqref{eq:l1_stable_def} and Appendix~\ref{app:preliminary} for details), a property which has been widely exploited in previous algorithmic work \cite{Datar04LSH,Li2007-JMLR,sohler2011subspace}. 

However, on a technical level, it is not obvious that Cauchy embedding should succeed for this problem. The Cauchy is a heavy tailed distribution, and because of this it does not yield embeddings that very tightly preserve distances between points, as in the Johnson-Lindenstrauss lemma\footnote{One version of the lemma (taken from~\cite{dasgupta2003elementary}) states that: for any $\eps \in \left(0, 1\right)$ and any $n \in \N$, let $k \in \N$ satisfy $k \geq 4\left(\eps^2/2 - \eps^3/3\right)^{-1} \log n$. Then for any set $\mc V$ of $n$ points in $\R^d$, there is a map $f: \R^d \to \R^k$ such that for all $\mb u, \mb v \in \mc V$, $\left(1-\eps\right) \norm{\mb u - \mb v}{}^2 \leq \norm{f\left(\mb u\right) - f\left(\mb v\right)}{}^2 \leq \left(1+\eps\right)\norm{\mb u -\mb v}{}^2$. Note in particular that $k$ is independent of the ambient dimension $d$, and depends on $n$ only through its logarithm. } (JL Lemma, ~\cite{johnson1984extensions, dasgupta2003elementary}). In fact, for $\ell^1$, there exist lower bounds showing that certain point sets in $\ell^1$ cannot be embedded in significantly lower-dimensional spaces without incurring non-negligible distortion \cite{Brinkman} \footnote{In particular, it is shown in~\cite{Brinkman} that to keep the distortion within $\eps$, it is necessary the projection dimension is $n^{\Omega\left(1/\eps^2\right)}$. }. For a single subspace, embedding results exist -- most notably due to Sohler and Woodruff \cite{sohler2011subspace}, but the distortion incurred is so large as to render them inapplicable to Problem \ref{prob:main}. Nevertheless, several elegant technical ideas in the proof of \cite{sohler2011subspace} turn out to be useful for analyzing Problem \ref{prob:main} as well. 

The problem studied here is also related to recent work on sparse modeling and sparse error correction. Indeed, one of the strongest technical motivations for using the $\ell^1$ norm is its provable good performance in sparse error correction \cite{CandesE2005-IT,Wright2008-IT}. These results give conditions under which it is possible to recover a vector $\mb v$ from grossly corrupted observation
\[
\mb q = \mb v + \mb e,
\] 
with $\mb v \in \mc S$, and the sparse error $\mb e$ unknown. These results are quite strong: they imply exact recovery, even if the error $\mb e$ has constant fractions of nonzero entries, of arbitrary magnitude. For example, \cite{CandesE2005-IT} proves that under technical conditions, $\ell^1$ minimization 
\begin{equation}\label{eqn:L1-r}
\min \| \mb e \|_1 \quad \text{s.t.} \quad \mb q - \mb e \in \mc S
\end{equation}
 exactly recovers $\mb e$ when $\mc S$ is a linear subspace. \cite{Wright2008-IT} presents similar theory for the case when $\mc S$ is a union of linear subspaces solved by a variant of optimization in \eqref{eqn:L1-r}. 

On the other hand, exact recovery may be stronger than what is needed for recognition. For recognition, as formulated in this work, we only need to know which subspace minimizes the distance $d(\mb q,\mc S_i)$ -- we do not need to precisely estimate the difference vector itself. The distinction is important: while \cite{Wright2009-PAMI} shows that significant dimensionality reduction is possible if there are no gross errors $\mb e$, when errors are present, the cardinality of the error vector gives a hard lower bound on the number of observations required for correct recovery. In contrast, for the simpler problem of finding the nearest model, it is possible to give an algorithm that uses very small $d$, and is agnostic to the properties of $\mb q$ and $\mc S_1 \dots \mc S_n$. 

To solve the component regression problem in projected space is also reminiscent of research on approximate $\ell^1$ regression (see, e.g., ~\cite{sohler2011subspace, ClarksonDrineasEtAl2012Fast}). The purpose in that line of work is to efficiently obtain an $\eps$-approximate solution to a single $\ell^1$ regression: any $\mb x$ such that 
\[
\norm{\mb y - \mb A\mb x}{\ell^1} \leq \left(1 + \eps\right) \min_{\mb z}\norm{\mb y - \mb A\mb z}{\ell^1}. 
\]
Our purpose here is quite different: for a bunch of $\ell^1$ regression  problems, instead of being concerned with quality of solving each individual problem, one only needs to ensure that the regression problem with the smallest objective value remains so after approximation. Moreover, state-of-the-art coreset-based approximation algorithms for $\ell^1$ regression such as those in~\cite{dasgupta2008sampling, sohler2011subspace, ClarksonDrineasEtAl2012Fast} depend heavily on obtaining some importance sampling measure (e.g., $\ell^1$ leverage score of an $\ell^1$ well conditioned basis in~\cite{ClarksonDrineasEtAl2012Fast}), which in turn depends on $\mb A$ and $\mb y$ simultaneously. In a database-query model that is common in recognition tasks, this complicated dependency directs lots of computation to query time. By comparison, considerable portion of computation (e.g., projection of the subspaces) in our framework can be performed during training, rendering the framework attractive when the recognition is under hard time constraint. 

\paragraph{Notation} We define some most commonly used notations here. $d_{\ell^1}\left(\cdot, \cdot\right)$ is the $\ell^1$ distance of a point to a subspace, i.e., $d_{\ell^1}\left(\mb q, \mc S\right) = \min_{\mb v \in \mc S} \norm{\mb q - \mb v}{\ell^1}$. For any $k \in \N$, $[k] = \left\{1, \cdots, k\right\}$ and $\equiv_d$ denotes equality in distribution. Other notations will be defined inline. 

\section{Our Algorithm and Main Results} \label{sec:overview}
The flow of our algorithm is summarized as follows. 
\begin{table}[!htbp]
\centering
  \begin{tabular}{p{0.9\linewidth}}
  \hline
      \textbf{Input}: $n$ subspaces $\mc S_1, \cdots, \mc S_n$ of dimension $r$ and query $\mb q$ \\
      \textbf{Output}: Identity of the closest subspace $\mc S_\star$ to $\mb q$ \\
      \hline \hline 
      \textbf{Preprocessing}: Generate $\mb P \in \R^{d\times D}$ with iid Cauchy RV's ($d \ll D$) and compute the projections $\mb P\mc S_1$, $\cdots$, $\mb P \mc S_n$; Repeat for independent repetitions of $\mb P$ \\
 \\
       \textbf{Candidates Search}: Compute the projection $\mb P \mb q$, and compute its $\ell^1$ distance to each of $\mb P \mc S_i$. Repeat for several versions of $\mb P$, and locate nearest candidates \\
       \\
       \textbf{Refined Scanning}: Scan the candidates in $\R^D$ and return $\mc S_\star$.  \\
\hline
  \end{tabular}
\end{table}
\\
Our main theoretical result states that if $d$ is chosen appropriately, with at least constant probability, the subspace $\mc S_{i_\star}$ selected will be the original closest subspace $\mc S_\star$:
\begin{theorem}\label{thm:main}
Suppose we are given $n$ linear subspaces $\left\{\mc S_1, \cdots, \mc S_n\right\}$ of dimension $r$ in $\R^D$ and any query point $\mb q$, and $d_{\ell^1}\left(\mb q, \mc S_1\right) \leq d_{\ell^1}\left(\mb q, \mc S_i\right)/\eta$ for all $i\in [n]\setminus\left\{1\right\}$ and some $\eta > 1$. Then for any fixed $\alpha < 1 - 1/\eta$, there exists $d = O\left[\left(r \log n\right)^{1/\alpha}\right]$ (assuming $n > r$), such that if $\mb P  \in \R^{d\times D}$ is iid Cauchy, we have
\begin{equation}
\mathop{\arg\min}_{i\in [n]} d_{\ell^1}\left(\mb P \mb{q}, \mb P \mc S_i\right) = 1
\end{equation}
 with (nonzero) constant probability. 
\end{theorem}

The choice of the first subspace as the nearest is only for notational and expository convenience. Also we write $\mathop{\arg\min}_{i\in \left[n\right]} d_{\ell^1}\left(\mb P \mb q, \mb P \mb S_i\right) = 1$ to mean that the first subspace is the nearest \emph{unambiguously}, i.e., the set of minimizers is a singleton (this comment applies to similar situations below). The condition in Theorem \ref{thm:main} depends on several factors. Perhaps the most interesting is the relative gap $\eta$ between the closest subspace distance and the second closest subspace distance. Notice that $\eta \in [1,\infty)$, and that the exponent $1/\alpha$ becomes large as $\eta$ approaches one. This suggests that our dimensionality reduction will be most effective when the relative gap is nonnegligible. For example, when $\eta = 2$ the required dimension is proportional to $r^2$. 

Notice also that $d$ depends on the number of models $n$ only through its logarithm. This rather weak dependence is a strong point, and, interestingly, mirrors the Johnson-Lindenstrauss lemma for dimensionality reduction in $\ell^2$, even though JL-syle embeddings are impossible for $\ell^1$. 

Before stating our overall algorithm, we suggest two additional practical implications of Theorem \ref{thm:main}. First, Theorem \ref{thm:main} only guarantees success with constant probability. This probability is easily amplified by taking $T$ independent trials. Because the probability of failure drops exponentially in $T$, it usually suffices to keep $T$ rather small. Each of these $T$ trials generates one or more candidate subspaces $\mb S_i$. We can then perform $\ell^1$ regression in $\Re^D$ to determine which of these candidates is actually nearest to the query. Note that it may also be possible to perform this second step in $\Re^{d'}$, where $d < d' \ll D$. 

Second, the importance of the gap $\eta$ suggests another means of controlling the resources demanded by the algorithm. Namely, if we have reason to believe that $\eta$ will be especially small (i.e., approaching one), we may instead set $d$ according to the gap between $\xi_{1'}$ and $\xi_{k'}$, for some $k' > 2$, where for any $i \in [n]$, $\xi_{i'}$ denotes the $\ell^1$ distance of the query $\mb q$ to its $i^{th}$ nearest subspace. With this choice, Theorem \ref{thm:main} implies that with constant probability the desired subspace is amongst the $k' - 1$ nearest to the query. Again, all of these $k'-1$ subspaces need to be retained for further examination. However, if $k' \ll n$, this is still a significant saving over the standard approach. 

We complement our main result above with a result on the lower bound of the projecting dimension $d$, which basically says any randomized embedding that is oblivious to the query and subspaces has the target dimension dictated by $\log n, r$ and reciprocal of $\log \eta_{\min}$, where $\eta_{\min}$ is a nominal relative distance gap (see below), in order to preserve the identity of the nearest subspace with non-negligible probability. 

\begin{theorem} \label{thm:lower_bound}
Fix any $r, n\in \N, \eta_{min} \in \left(1, \infty\right)$ and $\gamma \in \left(1/n, 1\right)$. Let $d \in \N$ satisfy: for all $\; D \geq r$, there exists a distribution $\mu$ over $\R^{d \times D}$, such that for all set $\left\{\mc S_1, \cdots, \mc S_n\right\}$ of $r$-dimensional subspaces and point $\mb q$ in $\R^D$ with the property $d_{\ell^1}\left(\mb q, \mc S_1\right) \leq d_{\ell^1}\left(\mb q, \mc S_i\right)/\eta_{\min}$ for all $i \in[n]$, one has 
\begin{equation}
\mbb{P}_{\mb P \sim \mu}\left[\mathop{\arg\min}_{i\in \left[n\right]} d_{\ell^1}\left(\mb P \mb q, \mb P \mc S_i\right) = 1\right] \geq \gamma. 
\end{equation}
Then $d \geq \max\left(C_1 \frac{1}{\log 3\left(\eta_{\min} + 1\right)}\log\frac{1}{1-\gamma}\log n - C_2 \frac{r}{\log r}, r\right)$ for some numerical constants $C_1, C_2$. 
\end{theorem}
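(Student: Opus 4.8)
The plan is to prove the two arguments of the $\max$ separately, since they come from genuinely different obstructions. The floor $d \ge r$ is a rank obstruction and is the easy half. If $d < r$, then for \emph{any} fixed $\mb P_0 \in \R^{d\times D}$ the kernel has dimension at least $D-d$, so every $r$-dimensional $\mc S_i$ meets $\ker \mb P_0$ in dimension at least $r-d>0$; moreover, for subspaces in general position relative to $\ker \mb P_0$ the image $\mb P_0\mc S_i$ fills all of $\R^d$, forcing $d_{\ell^1}(\mb P_0\mb q,\mb P_0\mc S_i)=0$ simultaneously for every $i$ and making the $\arg\min$ a tie rather than the unambiguous singleton the success event demands. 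Randomizing the configuration over such general-position arrangements (legal, since we may take the ambient dimension $D$ as large as we like) shows that no distribution $\mu$ supported on $\R^{d\times D}$ with $d<r$ can satisfy the hypothesis, so $d\ge r$ is forced.

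For the main logarithmic term I would first strip the randomness from $\mb P$ by a Yao-type averaging. Because the hypothesis grants success probability at least $\gamma$ for \emph{every} admissible configuration, it in particular grants expected success at least $\gamma$ against \emph{any} distribution $\nu$ over configurations that I care to design; swapping the order of expectation then yields a single deterministic matrix $\mb P_0$ in the support of $\mu$ with $\mbb P_{c\sim\nu}[\,\arg\min_i d_{\ell^1}(\mb P_0\mb q^{(c)},\mb P_0\mc S^{(c)}_i)=1\,]\ge\gamma$. The task reduces to showing that a fixed linear compression to $\R^d$, followed by the $\ell^1$ nearest-subspace rule, cannot correctly resolve too rich a family of configurations.

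Next I would build the hard distribution $\nu$ as a large family of valid instances (each satisfying the gap condition with slot $1$ as the target) designed so that ``which subspace is nearest'' encodes a message of $\approx\log n$ bits, and then bound the number of such messages that $\mb P_0$ can decode reliably. The crux is a capacity estimate in the compressed $\ell^1$ space: the gap $\eta_{\min}$ is exactly what lets one discretize the continuous projected data $(\mb P_0\mb q,\mb P_0\mc S_1,\dots,\mb P_0\mc S_n)$ to a grid whose resolution is controlled by $\eta_{\min}$ without changing the winner, so that the whole configuration carries at most $O\!\left((d + r/\log r)\log(3(\eta_{\min}+1))\right)$ bits of decodable information — the $\log(3(\eta_{\min}+1))$ coming from the per-coordinate grid and the additive $r/\log r$ from the extra Grassmannian entropy needed to encode the $r$-dimensional projected subspaces. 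Equating this with the $\log n$ bits that the family demands gives $d \gtrsim \log n/\log(3(\eta_{\min}+1)) - C_2\, r/\log r$. The multiplicative factor $\log\frac{1}{1-\gamma}$ I would extract by a direct-product amplification: replicating the basic family $\Theta(\log\frac{1}{1-\gamma})$ times and gluing the copies into one instance (concatenated queries, block-diagonal subspaces) whose near-subspace answer is correct only when every copy is resolved, which multiplies the per-copy dimension requirement while leaving the gap at $\eta_{\min}$.

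I expect the geometric capacity lemma of the third step to be the main obstacle: turning the $\eta_{\min}$-gap into a clean covering/discretization bound on arrangements of $r$-dimensional subspaces under the (non-Euclidean, heavy-tailed-friendly) $\ell^1$ point-to-subspace distance, and pinning down both the base $3(\eta_{\min}+1)$ and the precise $r/\log r$ correction, is where the real work lies. The Yao reduction and the $d\ge r$ floor are routine by comparison, and the amplification producing the exact multiplicative $\log\frac{1}{1-\gamma}$ is the remaining technical point to be handled with care in the bookkeeping.
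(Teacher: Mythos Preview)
Your $d\ge r$ argument is close in spirit to the paper's (both are rank obstructions against canonical coordinate subspaces), so that half is fine.

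For the main term, however, the paper takes a completely different route: it \emph{reduces to known lower bounds for stable sparse recovery} rather than attempting a direct capacity/packing argument. Concretely, the paper builds a $k$-sparse recovery scheme out of the hypothesized subspace-search oracle: the $n=\binom{t}{k}$ canonical $k$-dimensional coordinate subspaces play the role of the $\mc S_i$, finding the nearest one to $\mb x$ in $\ell^1$ is exactly locating the top-$k$ support of $\mb x$, and $\ell=\Theta\!\bigl(1/\log\frac{1}{1-\gamma}\bigr)$ independent copies of the oracle are stacked to boost the per-copy success $\gamma$ to a constant. A separate ``approximate subset query'' sketch (Price--Woodruff, costing $O(k/\log k)$ rows) then reads off the values on the recovered support, yielding $(1+\eps)\eta_{\min}$-approximate sparse recovery. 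Invoking the Do--Ba--Indyk--Price lower bound on the total number of measurements and dividing by $\ell$ gives the per-copy bound on $d$. In this proof the constant $3(\eta_{\min}+1)$ is the $2C+3$ from the sparse-recovery lower bound with $C=1.5\,\eta_{\min}$, the factor $\log\frac{1}{1-\gamma}$ is precisely $1/\ell$, and the subtractive $r/\log r$ is the overhead of the subset-query sketch --- none of these arise from grid resolution, Grassmannian entropy, or direct-product amplification.

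Your proposed capacity lemma is the real gap. Discretizing the projected data so that the $\ell^1$ nearest-subspace winner is preserved is not at all straightforward (you would need to control ratios of $\ell^1$ regression values under coordinate rounding of both the query and bases for the $\mc S_i$), and you yourself flag this as the main obstacle without a concrete plan. Your amplification step is also shaky: gluing $T$ copies into block-diagonal subspaces changes the subspace dimension from $r$ to $rT$ (or the number of subspaces from $n$ to $n^T$), but the hypothesis is stated for fixed $(r,n)$, so you cannot invoke it on the glued instance. The paper sidesteps both difficulties by pushing all the hard work onto an existing sparse-recovery lower bound; I would recommend you do the same rather than try to develop the packing argument from scratch.
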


We restrict the probability to be greater than $1/n$ to rule out any case worse than random guess.  The proof is provided in Appendix~\ref{app:proof_lower_bound}. We note that there is a significant gap between the upper bound in Theorem~\ref{thm:main} and the lower bound in Theorem~\ref{thm:lower_bound}. In particular, it is not clear whether $\eta_{\min}$ should enter the bound in its current form, which is extremely bad for small $\eta_{\min}$, or resemble our lower bound, which is significantly milder. To resolve these issues remains an open problem.  

\section{A Sketch of the Analysis} \label{sec:sketch_proof}

In this section, we sketch the analysis leading to Theorem \ref{thm:main}. The basic rationale for using Cauchy projection is that the standard Cauchy is a {\em stable} distribution for the $\ell^1$ norm: if $\mb v \in \Re^D$ is any fixed vector, and $\mb P \in \Re^{d \times D}$ is a matrix with iid Cauchy entries, then the vector 
\begin{align} \label{eq:l1_stable_def}
\mb P \mb v \equiv_d \| \mb v \|_1 \times \mb z,
\end{align}
where $\mb z$ is again an iid Cauchy vector. In fact, the Cauchy family is also the only stable distribution for the $\ell^1$ norm (see Appendix~\ref{app:preliminary} for more details). So, $\| \mb P \mb v \|_1 \equiv_d \| \mb v \|_1 \| \mb z \|_1 = \| \mb v \|_1 \sum_i |z_i|$. The random variables $|z_i|$ are iid {\em half-Cauchy}, with probability density function 
\begin{equation}
f_{\mc{HC}}(x) = \frac{2}{\pi} \frac{1}{1+x^2} \;\quad \text{if}\;\; x \ge 0, 
\end{equation}
and $f_{\mc{HC}}(x) = 0$ for $x < 0$. 

In point-to-subspace query, we need to understand how $\mb P$ acts on many vectors $\mb v$ simultaneously -- including the query $\mb q$ and all of the subspaces $\mc S_1 \dots \mc S_n$. Here, we encounter a challenge: although the Cauchy is unambiguously the correct distribution for estimating $\ell^1$ norms, it is rather ill-behaved: its mean and variance do not exist, and the sample averages $\tfrac{1}{n} \sum_i |z_i|$ do not obey the classical Central Limit Theorem. 

\newcommand{\done}[2]{d_{\ell^1}\left(#1,#2\right)}
\newcommand{\bests}{\mc S_\star}

Fig. \ref{fig:distances} shows how this behavior affects the point-to-subspace distance $\done{\mb q}{\mc S}$. The figure shows a histogram of the random variable $\psi = \done{\mb P \mb q}{\mb P \mc S}$, over randomly generated Cauchy matrices $\mb P$, for two different configurations of query $\mb q$ and subspace $\mc S$. Two properties are especially noteworthy. First, the upper tail of the distribution can be quite heavy: with non-negligible probability, $\psi$ may significantly exceed its median. On the other hand, the lower tail is much better behaved: with very high probability, $\psi$ is not significantly smaller than its median. 
\begin{figure}[!htbp]
\centering
\includegraphics[width = 0.65\linewidth]{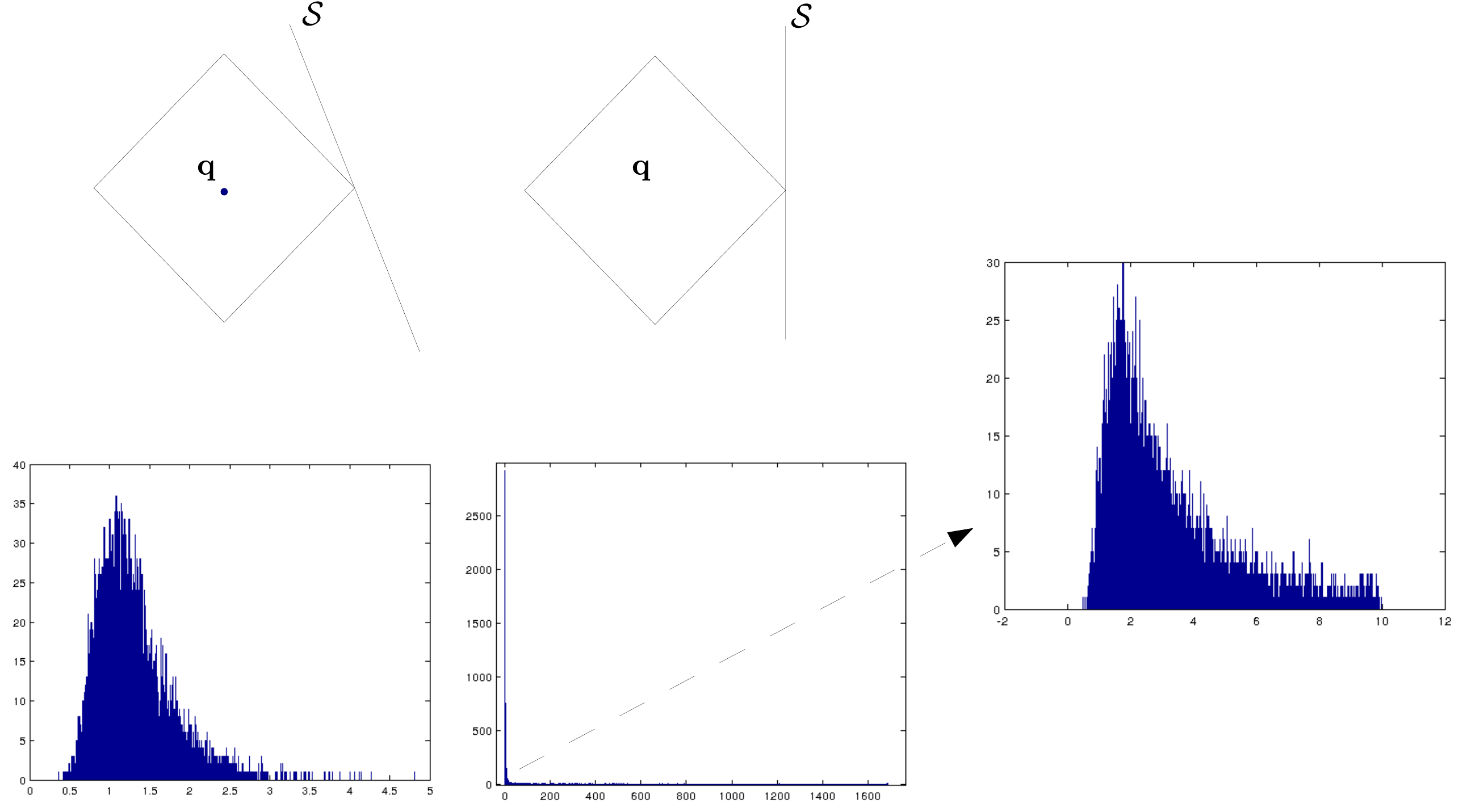}
\caption{Statistics of $\ell^1$ distance ratios (after vs. before) by random projections over $10000$ trials. The subspaces are randomly-oriented ($1^{st}$ column) and axis-aligned ($2^{nd}$ column), respectively. Here $r=10$, $D = 10000$, $d = 35$, and $d_{\ell^1}\left(q, \mc S\right) = 1$.  } \label{fig:distances}
\end{figure}
This inhomogeneous behavior (in particular, the heavy upper tail) precludes very tight distance-preserving embeddings using the Cauchy. However, our goal is {\em not} to find an embedding of the data, per se, but rather to find the nearest subspace, $\bests$, to the query. In fact, for nearest subspace search, this inhomogeneous behavior is much less of an obstacle. To guarantee to find $\bests$, we need to ensure qualitatively that 
\begin{quote}
\begin{itemize} 
\item[- (i) ] $\mb P$ does not increase the distance from $\mb q$ to $\bests$ too much, and, 
\item[- (ii)] $\mb P$ does not shrink the distance from $\mb q$ to any of the other subspaces $\mc S_i$ too much. 
\end{itemize}
\end{quote}
The first property, (i), holds with constant probability: although the tail of $\psi$ is heavy, with probability at least $1/2$, $\psi \le \mathrm{median}(\psi)$. For the second event, (ii), $\mb P$ needs to be well-behaved on $n-1$ subspaces simultaneously. Notice, however, that for the bad subspaces $\mc S_i$, the lower tail in Figure \ref{fig:distances} is most important. If projection happens to significantly increase the distance between $\mb q$ and $\mc S_i$, this will not cause an error (and may even help, in the sense that amplifying the distance to a ``bad'' subspace renders the event that the ``good'' subspace be mis-detected (hence failure) less likely). Since the lower tail is sharp, we {\em can} guarantee that if $d$ is chosen correctly, $\mb P \mb q$ will not be significantly closer to any of the $\mb P \mc S_i$. 

Below we describe some of the technical manipulations needed to carry this argument through rigorously, and state key lemmas for each part. Sec. \ref{sec:no-expand} elaborates on property (i), while Sec. \ref{sec:no-contract} describes the arguments needed to establish property (ii). Theorem \ref{thm:main} follows directly from the results in Secs. \ref{sec:no-expand} and \ref{sec:no-contract}. This argument, as well as proofs of several routine or technical lemmas are deferred to the appendix. 

\subsection{Bounded expansion for the good subspace} \label{sec:no-expand} Let $\mb v_\star \in \mc S_\star$ be a closest point to $\mb q$ in $\ell^1$ norm, before projection:
\[
\mb v_\star \in \arg \min_{\mb v \in \mc S_\star} \| \mb q - \mb v \|_1.
\]
Such a point $\mb v_\star$ may not be unique, but always exists. After projection, $\mb P \mb v_\star$ might no longer be the closest point to $\mb P \mb q$. However, the distance $\| \mb P \mb q - \mb P \mb v_\star \|_1$ {\em does} upper bound the distance from $\mb P \mb q$ to $\mb P \mc S_\star$:
$$\done{\mb P \mb q}{\mb P \mc S_\star} \;=\; \min_{\mb h \in \mb P \mc S_\star} \| \mb P \mb q - \mb h \|_1 \;\le\; \| \mb P \mb q - \mb P \mb v_\star\|_1 \;=\; \| \mb P (\mb q- \mb v_\star) \|_1.$$
Hence, it is enough to show that $\mb P$ preserves the norm of the particular vector $\mb w = \mb q - \mb v_\star$. We use the following lemma for this purpose, the proof of which can be found in Appendix~\ref{app:proof_expansion}. 
\begin{lemma} \label{lemma:good_subspace}
There exists a numerical constant $c \in \left(0, 1\right)$ with the following property. If $\mb w \in \Re^D$ be any fixed vector, $2 \leq d \in \N$, and suppose that $\mb P \in \Re^{d \times D}$ is a matrix with i.i.d. standard Cauchy entries, then 
\begin{equation}
\prob{\| \mb P \mb w \|_1 \leq \frac{2}{\pi} d \log d \, \| \mb w \|_1 } \;\geq \; c. 
\end{equation}
\end{lemma}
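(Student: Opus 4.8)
The plan is to reduce the statement to a one-dimensional question about a sum of half-Cauchy variables and then to pin down the correct scale of that sum. By the $\ell^1$-stability relation \eqref{eq:l1_stable_def}, for the fixed vector $\mb w$ we have $\norm{\mb P \mb w}{1} \equiv_d \norm{\mb w}{1}\, S_d$, where $S_d = \sum_{i=1}^d |z_i|$ and the $|z_i|$ are i.i.d.\ half-Cauchy. The lemma is therefore equivalent to $\prob{S_d \le \tfrac{2}{\pi} d \log d} \ge c$. The first point to stress is that $S_d$ has neither a finite mean nor a finite variance, so none of the usual tools (law of large numbers, central limit theorem, Bernstein/Chebyshev concentration) apply off the shelf; the whole difficulty is to locate the typical size of $S_d$ and to show the threshold $\tfrac{2}{\pi} d\log d$ sits (weakly) above it.

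First I would identify the scale by truncation. Using $\prob{|z_i| > t} = \tfrac{2}{\pi}\arctan(1/t) \sim \tfrac{2}{\pi t}$, the largest of $d$ samples is typically of order $d$ (the expected number of samples exceeding $T$ is $\approx \tfrac{2d}{\pi T}$, which is $\Theta(1)$ precisely when $T \asymp d$). Truncating at such a level $T \asymp d$ gives truncated first moment $\expect{|z_i|\,\mb{1}[|z_i|\le T]} = \tfrac{1}{\pi}\log(1+T^2) \approx \tfrac{2}{\pi}\log d$, so the truncated sum has mean $\approx \tfrac{2}{\pi} d \log d$ --- exactly the threshold in the lemma. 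This computation explains both constants: the $\tfrac{2}{\pi}$ is the half-Cauchy tail constant, and the $d\log d$ comes from $d$ terms each contributing $\approx \tfrac{2}{\pi}\log d$.

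Next I would split $S_d = \Sigma_{\le} + \Sigma_{>}$ according to whether $|z_i| \le T$ or $|z_i| > T$, with $T \asymp d$. On the event that no sample exceeds $T$ --- which has constant probability, since the expected number of exceedances is $\Theta(1)$ --- we have $S_d = \Sigma_{\le}$, a sum of bounded variables; this is the step where the heavy upper tail costs only a constant factor, in sharp contrast to the two-sided distortion it would force in a Johnson--Lindenstrauss-type embedding. The hard part, however, is that $\Sigma_{\le}$ does not concentrate: its variance is $\Theta(dT) = \Theta(d^2)$, so its standard deviation is $\Theta(d)$, the same order as both the truncation level $T$ and the slack between the threshold and the mean. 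A few terms of size $\Theta(d)$ dominate the fluctuations, and for this reason Chebyshev's and Bernstein's inequalities are too weak to push the failure probability below the constant loss already incurred on the exceedance event. I expect this non-concentration to be the central obstacle.

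To finish I would replace concentration by the stable limit theorem. The half-Cauchy lies in the domain of attraction of the maximally skewed $\alpha = 1$ stable law, with the logarithmic correction absorbed precisely into the centering $\tfrac{2}{\pi} d \log d$; hence $(S_d - \tfrac{2}{\pi} d\log d)/d$ converges in distribution to a random variable $W$ with a continuous distribution on $\R$. Consequently $\prob{S_d \le \tfrac{2}{\pi} d\log d} \to \prob{W \le 0}$, a strictly positive constant, and since only a one-sided (upper) bound on $S_d$ is needed this is all we require. To obtain a single numerical constant $c$ valid for every $d \ge 2$, I would combine this convergence (which handles all large $d$) with the trivial observation that $\prob{S_d \le \tfrac{2}{\pi} d\log d} > 0$ for each of the finitely many remaining small values of $d$, since $S_d$ has a strictly positive density on $(0,\infty)$; taking the minimum yields the claimed $c \in (0,1)$.
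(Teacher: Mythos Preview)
Your proposal is correct and follows essentially the same route as the paper: reduce via $\ell^1$-stability to $S_d=\sum_{i=1}^d|z_i|$, invoke the generalized (stable) central limit theorem to get $(S_d-\tfrac{2}{\pi}d\log d)/d \Rightarrow G^A(\cdot;1,1)$ and hence $\prob{S_d\le \tfrac{2}{\pi}d\log d}\to G^A(0;1,1)>0$, then handle the finitely many small $d$ by a direct positivity argument and take the minimum. Your truncation discussion is helpful motivation but not part of the actual argument; the paper simply quotes the GCLT with the explicit centering/normalizing constants and notes numerically that $G^A(0;1,1)>0.3$.
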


\subsection{Bounded contraction for the bad subspaces} \label{sec:no-contract} For the ``bad'' subspaces $\mc S_2 \dots \mc S_n$, our task is more complicated, since we have to show that under projection $\mb P$, {\em no} point in $\mc S_i$ comes close to $\mb q$. In fact, we will show something slightly stronger: for appropriate $\gamma$, with high probability the following holds for any $i$:
\begin{equation} \label{eqn:sum-contract}
\forall \;\mb w \in \mc S_i \oplus \mathrm{span}( \mb q ), \quad \| \mb P \mb w \|_1 \;\ge\; \gamma \| \mb w \|_1. 
\end{equation}
Above, $\oplus$ denotes the direct sum of subspaces, so $\tilde{\mc S}_i = \mc S_i \oplus \mathrm{span}(\mb q)$ is the linear span of $\mc S_i$ and the query together. Since for any $\mb v \in \mc S_i$, $\mb q - \mb v \in \tilde{\mc S}_i$, whenever \eqref{eqn:sum-contract} holds, we have 
\begin{eqnarray}
\done{\mb P \mb q}{\mb P \mc S_i} \;&=&\; \min_{\mb v \in \mc S_i} \| \mb P \mb q - \mb P \mb v \|_1 \quad\ge\quad \min_{\mb v \in \mc S_i} \| \mb P (\mb q - \mb v) \|_1 \nonumber \\ &\ge&\; \min_{\mb v \in \mc S_i} \gamma \| \mb q - \mb v \|_1 \quad=\quad \gamma \, \done{\mb q}{\mc S_i},
\end{eqnarray}
and the distance to any ``bad'' subspace $\mc S_i$ contracts by at most a factor of $\gamma$. 

To show \eqref{eqn:sum-contract}, we use a discretization argument. Let $\Gamma$ denote the intersection of the unit $\ell^1$ ``sphere'' with the expanded subspace $\tilde{\mc S}_i$: 
$$\Gamma = \{ \mb w \mid \| \mb w \|_1 = 1 \} \cap \tilde{\mc S}_i.$$
Recall that for any set $\Gamma$, an {\em $\eps$-net} is a subset $N_i \subset \Gamma$ such that for every $\mb w \in \Gamma$, $\| \mb w - \mb w' \|_1 \le \epsilon$ for some $\mb w' \in N_i$. Standard arguments (see Lemma $3.18$, page $63$ of \cite{Ledoux}) show that for any $\epsilon > 0$, there exists an $\epsilon$ net $N_i$ for $\Gamma$ of size at most $(3/\epsilon)^{r+1}$.

Consider the following two events:
\begin{quote}
\begin{itemize}
\item [- (ii.a)] $\min_{\mb w' \in N} \; \| \mb P \mb w' \|_1 \ge \beta$, and 
\item [- (ii.b)] For all $\mb w \in \tilde{\mc S}_i$, $\| \mb P \mb w \|_1 \;\le\; L \| \mb w\|_1$.
\end{itemize}
\end{quote}
When both hold, we have for any $\mb w \in \Gamma$ (with associated closest point $\mb w' \in N_i$)
\begin{eqnarray}
\| \mb P \mb w \|_1 &\ge& \| \mb P \mb w' + \mb P (\mb w - \mb w') \|_1 \;\ge\; \| \mb P \mb w' \|_1 - \| \mb P (\mb w - \mb w' ) \|_1 \;\ge\; \beta - L \epsilon.
\end{eqnarray}
Moreover, since for any $\mb w \in \tilde{\mc S}_i$, $\mb w / \| \mb w \|_1 \in \Gamma$, we have that $$\forall \; \mb w \in \tilde{\mc S}_i, \quad \|\mb P \mb w \|_1 \ge ( \beta - L \epsilon ) \| \mb w \|_1,$$
and we may set $\gamma = \beta - L \epsilon$. So, it is left to establish items (ii.a) and (ii.b) above. 

\paragraph{Establishing (ii.a)} We use the following tail bound:
\begin{lemma}[Concentration in Lower Tail]\label{lemma:contract}
Let $\mb P \in \Re^{d \times D}$ be an iid Cauchy matrix. Then for any fixed vector $\mb w \in \R^D$ and $\alpha, \delta \in (0,1)$, 
\begin{equation}
\prob{\norm{\mb P \mb w}{1} < \left(1-\alpha\right)\left(1-\delta\right) \frac{2}{\pi} d\log d\norm{\mb w}{1}} \;<\; d^{1-\alpha}\exp\left(-\frac{\delta^2}{2\pi}d^{\alpha}\right). 
\end{equation}
\end{lemma}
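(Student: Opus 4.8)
The plan is to reduce the statement to a one-dimensional concentration question via $\ell^1$-stability, and then tame the heavy-tailed summands by a truncation argument followed by a lower-tail Chernoff bound.

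First I would invoke the stability relation \eqref{eq:l1_stable_def}: since $\mb P$ is iid Cauchy, $\mb P \mb w \equiv_d \norm{\mb w}{1}\, \mb z$ with $\mb z \in \R^d$ iid standard Cauchy, so that $\norm{\mb P \mb w}{1} \equiv_d \norm{\mb w}{1}\sum_{i=1}^d |z_i|$, where the $|z_i|$ are iid half-Cauchy. Dividing through by $\norm{\mb w}{1}$, it suffices to bound the lower tail of $S := \sum_{i=1}^d |z_i|$ around the target value $(1-\alpha)(1-\delta)\tfrac{2}{\pi} d\log d$. The key structural observation---anticipated by the discussion of Figure~\ref{fig:distances}---is that although $|z_i|$ has no finite mean (its upper tail is heavy), its lower tail is benign, so truncation from above can only help.

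Accordingly, I would fix a truncation level $T$ and set $Y_i := \min(|z_i|, T)$, so that $S \ge \sum_{i=1}^d Y_i$ pointwise and hence $\prob{S < a} \le \prob{\sum_i Y_i < a}$ for every $a$. The $Y_i$ are now bounded, iid, and possess all moments. A direct computation with the half-Cauchy density gives $\expect{Y_i} \ge \tfrac1\pi \log(1+T^2) \ge \tfrac{2}{\pi}\log T$ and $\expect{Y_i^2} \le \tfrac{4}{\pi}T$ (the dominant contributions being $\tfrac1\pi\log(1+T^2)$ and $\tfrac{2}{\pi}T$ respectively, using $\prob{|z_i| > T} \le \tfrac{2}{\pi T}$). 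Choosing $T = d^{1-\alpha}$ makes $\expect{Y_i} \ge \tfrac{2}{\pi}(1-\alpha)\log d$, so that $d\,\expect{Y_i} \ge (1-\alpha)\tfrac{2}{\pi} d\log d$ and the target threshold is at most $(1-\delta)\,d\,\expect{Y_i}$; this is precisely where the factors $(1-\alpha)$ and $(1-\delta)$ enter.

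It then remains to apply a standard lower-tail (Chernoff/Bernstein) bound to the bounded sum $\sum_i Y_i$. Using $e^{-\lambda y} \le 1 - \lambda y + \tfrac{\lambda^2}{2} y^2$ for $y \ge 0$ to control the negative exponential moment, $\expect{e^{-\lambda Y_i}} \le \exp(-\lambda\expect{Y_i} + \tfrac{\lambda^2}{2}\expect{Y_i^2})$, and optimizing over $\lambda > 0$ yields
\[
\prob{\textstyle\sum_i Y_i < (1-\delta)\, d\,\expect{Y_i}} \le \exp\left(-\frac{\delta^2 d\, (\expect{Y_i})^2}{2\,\expect{Y_i^2}}\right).
\]
Substituting the moment estimates and $T = d^{1-\alpha}$ turns the exponent into a quantity of order $\tfrac{\delta^2}{2\pi}(1-\alpha)^2(\log d)^2\, d^{\alpha}$, which dominates $\tfrac{\delta^2}{2\pi} d^\alpha$ once $d$ is moderately large, giving the claimed bound (the benign prefactor $d^{1-\alpha} \ge 1$ leaves room to absorb small-$d$ and lower-order corrections). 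The main obstacle is conceptual rather than computational: the half-Cauchy has infinite mean and no moment generating function, so no off-the-shelf concentration inequality applies directly. The whole argument hinges on choosing the truncation level $T = d^{1-\alpha}$ so as to balance the retained mean (which grows like $(1-\alpha)\log d$) against the truncated second moment (which grows like $d^{1-\alpha}$); it is this ratio that produces the characteristic $d^\alpha$ rate in the exponent, and verifying that truncation costs essentially nothing in the lower tail is exactly what renders the Cauchy's heavy upper tail irrelevant here.
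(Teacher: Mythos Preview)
Your approach is correct and takes a genuinely different route from the paper. The paper does \emph{not} truncate and apply an MGF bound; instead it lower-bounds each half-Cauchy summand by a telescoping sum of indicators $\Phi_i \ge \sum_{j=1}^{k}\mathbf 1_{\Phi_i\ge j}$, applies the multiplicative Chernoff bound to each binomial count $\vartheta_j=\sum_i \mathbf 1_{\Phi_i\ge j}$, and union-bounds over $j\in[k]$. With $k=\lfloor d^{1-\alpha}\rfloor$ and the elementary estimate $\sum_{j=1}^{k}\arctan(1/j)\ge\log(k+1)$, this yields exactly $k\exp\!\bigl(-\tfrac{\delta^2 d}{2\pi k}\bigr)\le d^{1-\alpha}\exp\!\bigl(-\tfrac{\delta^2}{2\pi}d^{\alpha}\bigr)$. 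Your truncation argument is cleaner conceptually (one Chernoff step instead of a union bound over $k$ levels) and in fact yields the \emph{stronger} exponent $\tfrac{\delta^2}{2\pi}(1-\alpha)^2(\log d)^2\,d^{\alpha}$; the paper's grid argument, by contrast, produces the stated constants on the nose for every $d\ge 1$.

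One small caveat: your closing sentence, that the prefactor $d^{1-\alpha}\ge 1$ ``leaves room to absorb small-$d$ and lower-order corrections,'' is not quite enough to recover the lemma verbatim. Writing $u=(1-\alpha)\log d$, the implication you need is $\exp\!\bigl(-\tfrac{\delta^2}{2\pi}u^2 d^{\alpha}\bigr)\le d^{1-\alpha}\exp\!\bigl(-\tfrac{\delta^2}{2\pi}d^{\alpha}\bigr)$, i.e.\ $\tfrac{\delta^2}{2\pi}d^{\alpha}(1-u^2)\le u$, and this fails for small $u$ (e.g.\ $\alpha$ close to $1$, modest $d$, and $\delta$ not too small). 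So your method proves a lower-tail bound that is asymptotically sharper but with slightly different constants; it does not literally prove the inequality as stated for all $(d,\alpha,\delta)$. For the downstream application (Theorem~\ref{thm:main}) this is harmless, but if you want the exact statement you should either mimic the paper's indicator-grid argument or adjust the lemma's constants to match what your truncation bound actually delivers.
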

In hindsight, the exponent $\alpha$ in the power gives rise to the exponential factor in our bound for $d$ in Theorem~\ref{thm:main}. Unfortunately, we are able to establish a concrete lower bound on the probability, which shows this estimate gives the optimal power. Detailed discussions and proofs are deferred to Appendix~\ref{app:proof_contract}. 

This bound is sharp enough to allow us to simultaneously lower bound $\| \mb P \mb w' \|_1$ over all $\mb w' \in N_i$. Set $$\beta_{\alpha,\delta} = (1-\alpha)(1-\delta) \tfrac{2}{\pi} d \log d,$$ and let $\event_{\text{net},i}$ denote the event that there exists $\mb w' \in N_i$ with $\| \mb P \mb w' \|_1 < \beta_{\alpha,\delta} \| \mb w' \|_1$. 
\begin{equation}
\prob{ \event_{\text{net},i} } \;\;<\;\; |N_i| \, d^{1-\alpha} \exp\left(- \tfrac{\delta^2}{2\pi} d^\alpha \right).
\end{equation}

\paragraph{Establishing (ii.b)} In bounding the Lipschitz constant $L$ in (ii.b), we have to cope with the heavy tails of the Cauchy, and simple arguments like the above argument for $\beta$ are insufficient. Rather, we borrow an elegant argument of Sohler and Woodruff \cite{sohler2011subspace}. The rough idea is to work with a certain special basis for $\tilde{\mc S}_i$, which can be considered an $\ell^1$ analogue of an orthonormal basis. Just as an orthonormal basis preserves the $\ell^2$ norm, an {\em $\ell^1$ well-conditioned basis} approximately preserves the $\ell^1$ norm, up to distortion $(r+1)$. The argument then controls the action of $\mb P$ on the elements of this basis. Due to space limitations, we defer further discussion of this idea to Appendix~\ref{app:proof_lips}, and instead simply state the resulting bound:
\begin{lemma} \label{lemma:lipschitz}
Let $\mb P \in \Re^{d \times D}$ be an iid Cauchy matrix, and $\mc S$ a fixed subspace of dimension $r+1$. Set $L = \sup_{\mb w \in \mc S \setminus \{ \mb 0 \}} \| \mb P \mb w \|_1/\| \mb w \|_1$. Then for any $B > 0$, we have 
\begin{equation}
\prob{ L > t\left(r+1\right) } \le \frac{2 d (r+1)}{\pi B} + \frac{2 d}{\pi t} \log \sqrt{1+B^2}. 
\end{equation}
\end{lemma}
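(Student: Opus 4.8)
The plan is to replace the supremum over the whole subspace $\mc S$ by a bound involving only $r+1$ fixed vectors, and then to control the heavy Cauchy tails by a truncation argument. First I would fix an $\ell^1$ well-conditioned (Auerbach) basis $\left\{\mb u_1, \dots, \mb u_{r+1}\right\}$ of $\mc S$, normalized so that $\norm{\mb u_j}{1} = 1$ for each $j$ and satisfying the coefficient bound $\norm{\mb x}{\infty} \le \norm{\sum_{j} x_j \mb u_j}{1}$ for every $\mb x \in \R^{r+1}$; such a basis exists for any finite-dimensional $\ell^1$ subspace, and it is precisely the object underlying the distortion-$(r+1)$ claim referenced above (cf. \cite{sohler2011subspace}). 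Expanding an arbitrary $\mb w \in \mc S$ as $\mb w = \sum_j x_j \mb u_j$ and using the triangle inequality gives
\[
\norm{\mb P \mb w}{1} \;\le\; \sum_{j=1}^{r+1} |x_j|\,\norm{\mb P \mb u_j}{1} \;\le\; \norm{\mb x}{\infty}\sum_{j=1}^{r+1}\norm{\mb P \mb u_j}{1} \;\le\; \norm{\mb w}{1}\sum_{j=1}^{r+1}\norm{\mb P \mb u_j}{1}.
\]
Dividing by $\norm{\mb w}{1}$ and taking the supremum yields the deterministic bound $L \le \sum_{j=1}^{r+1}\norm{\mb P \mb u_j}{1}$, in which the factor $r+1$ is simply the number of basis vectors.

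It then remains to bound this sum of $r+1$ projected norms. Writing $\mb p_k$ for the $k$-th row of $\mb P$, $\ell^1$-stability (Equation \eqref{eq:l1_stable_def}) makes each entry $\innerprod{\mb p_k}{\mb u_j}$ a standard Cauchy variable (scale $\norm{\mb u_j}{1} = 1$), so $\norm{\mb P \mb u_j}{1} = \sum_{k=1}^d \left|\innerprod{\mb p_k}{\mb u_j}\right|$ is a sum of $d$ iid half-Cauchy terms. Here lies the main obstacle: the half-Cauchy has infinite mean, so applying Markov's inequality directly to $\sum_j \norm{\mb P \mb u_j}{1}$ is vacuous. I would resolve this by truncating each of the $(r+1)d$ scalar entries at a level $B$, trading a (small, polynomially decaying) tail probability against a now-finite truncated mean. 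The two relevant scalar computations are the tail bound $\prob{|z| > B} = \tfrac{2}{\pi}\arctan(1/B) \le \tfrac{2}{\pi B}$ and the truncated mean $\expect{|z|\,\mb 1\left\{|z|\le B\right\}} = \tfrac{2}{\pi}\log\sqrt{1+B^2}$ for a standard Cauchy $z$; tellingly, these are exactly the two quantities appearing in the statement.

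Finally I would assemble the estimate by a single union bound. Let $\event_B$ denote the event that every entry satisfies $\left|\innerprod{\mb p_k}{\mb u_j}\right| \le B$. The union bound over all $(r+1)d$ entries gives $\prob{\event_B^c} \le (r+1)d \cdot \tfrac{2}{\pi B} = \tfrac{2d(r+1)}{\pi B}$, which is the first term. On $\event_B$ the bound from the first paragraph becomes $L \le \bar L := \sum_{j,k} \left|\innerprod{\mb p_k}{\mb u_j}\right|\, \mb 1\left\{\left|\innerprod{\mb p_k}{\mb u_j}\right| \le B\right\}$, so that $\prob{L > t(r+1)} \le \prob{\event_B^c} + \prob{\bar L > t(r+1)}$, and Markov applied to the aggregate truncated sum yields
\[
\prob{\bar L > t(r+1)} \;\le\; \frac{\expect{\bar L}}{t(r+1)} \;=\; \frac{(r+1)d}{t(r+1)}\cdot\frac{2}{\pi}\log\sqrt{1+B^2} \;=\; \frac{2d}{\pi t}\log\sqrt{1+B^2},
\]
the second term. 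The one point worth flagging is the bookkeeping that produces the asymmetric appearance of $(r+1)$: dividing the aggregate sum $\bar L$ (over all $(r+1)d$ entries) by $t(r+1)$ cancels the $(r+1)$ in the second term, while it survives only in the union bound for the truncation event. Note also that neither step requires any independence across the columns $\mb u_j$ — which share the rows of $\mb P$ and are therefore dependent — since both Markov's inequality and the union bound rely only on linearity of expectation and subadditivity.
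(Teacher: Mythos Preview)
Your proof is correct and follows essentially the same approach as the paper: fix an $\ell^1$ well-conditioned (Auerbach) basis to reduce the supremum to the sum $\sum_j \norm{\mb P \mb u_j}{1}$, then handle the heavy Cauchy tails by truncation at level $B$, combining a union bound over the $(r+1)d$ entries with Markov's inequality on the truncated sum. The only cosmetic differences are that the paper keeps the basis unnormalized and argues via a conditional Markov inequality (conditioning on the event $\{\Psi_{i,j}\le B,\ \forall i,j\}$ and then relating the conditional mean back to the truncated mean), whereas you normalize to $\norm{\mb u_j}{1}=1$ and apply Markov directly to the truncated variable $\bar L$; your version is in fact slightly cleaner.
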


The proof of Theorem \ref{thm:main} follows from Lemmas 1-3 above, by choosing appropriate values of the parameters $B$, $t$, $\delta$ and $\epsilon$. We give the detailed calculation in Appendix~\ref{app:proof_main}. 
\begin{remark}
We do not allow $\eta = 1$ in Theorem~\ref{thm:main}, corresponding to ties in the nearest subspaces. In this special case, it seems natural that one instead ask the dimension reduction to preserve \emph{any one of the nearest subspaces}; the problem actually becomes easier. To see this, one can fix one of the nearest subspaces as the ``good'' one, ignore the rest of the nearest, and treat all the rest as ``bad'' subspaces. Now the new relative distance gap $\eta_{\mathrm{effective}} > 1$, and the number of distances we want to control becomes smaller than the number of subspaces present, hence the problem is actually \emph{easier} as compared to a generic problem setting as in Theorem~\ref{thm:main} with the same parameters (except for the slightly slacked target as stated above).
\end{remark}

\section{Experiments}
We present three experiments to corroborate our theoretical results and demonstrate their particular relevance to subspace-based robust instance recognition. 
\subsection{Note on Implementation}
\paragraph{Projection Matrices and Subspaces} Theorem~\ref{thm:main} is for any fixed set of subspaces and any fixed query point. Of course, if we fix the projection matrix $\mb P$ and consider many different query points, the success or failure of approximation to each query will be dependent. This suggests sampling a new matrix $\mb P$ for each new query, which would then require that we re-project each of the subspaces $\left\{\mc S_i\right\}$. In practice, it is more efficient to maintain a pool of $k$ Cauchy projection matrices\footnote{The standard Cauchy projection matrix $\mb P$ can be generated as $\mb A./\mb B$, where both $\mb A$ and $\mb B$ are i.i.d. standard normals and ``$./$'' denotes element-wise matrix division. } $\left\{\mb P_j\right\}$ and store $\mb P_j \mc S_i$ for each $i$ and $j$. During testing, we randomly sample a combination of $N_{rep}$ (``rep'' for repetition) matrices and corresponding projected subspaces and also apply these projections to the query. This sampling strategy from a finite pool does not generate independent projections for different query points, but it allows economic implementation and empirically still yields impressive performance. We will specify the values for $k$ and $N_{rep}$ for different experiments. 
\paragraph{Solvers for $\ell^1$ Regression} We perform high-dimensional Nearest Subspace (NS) search in $\ell^1$ (HDL1) as baseline. Considering the scale of $\ell^1$ regression in this case, we employ an Augmented Lagrange Method (ALM) numerical solver~\cite{YangA2010-pp} whenever the recognition performance is not noticeably affected (the case on extended Yale B below); otherwise we employ the more accurate interior point method (IPM) solvers~\cite{candes2005l1} (for the synthesized experiment and ALOI). All the instances of $\ell^1$ regression in the projected low dimensions are handled by interior point method (IPM) solvers. 

\subsection{Experiments with Synthesized Data} \label{sec:synthesized_exp}
We independently generated $n = 100$ random subspaces in $\R^{10000}$ (i.e., $D = 10000$), each of which is $5$-dimensional (i.e., $r = 5$). Each subspace is generated as the column span of an $D \times r$ iid standard normal matrix. We also prepared a pool of $k = 100$ Cauchy matrices of dimension $d \times D$, where $d$ takes values in $\left\{10, 30, 50, 70, 90\right\}$. 

To verify our theory (Theorem~\ref{thm:main}), we randomly picked one subspace, and generate a sample $\mb y = \mb B \mb x$, where $\mb B$ is one orthonormal basis for the subspace and $\mb x$ contains iid standard normal entries. To induce reasonable distance gap, and also simulate some sparse errors, we divided $\mb y$ by the magnitude of its largest entries, and added errors that is uniformly distributed in $\left[-1, 1\right]$ to a $\theta$-fraction of $\mb y$'s entries, i.e., we got $\hat{\mb y} = \mb y + \mb e_{\theta}$. We varied $\theta$ from $0.05$ with $0.3$, with $0.05$ as step size. Growth in fraction of corruption diminishes the distance gap $\eta$, as evidenced from the legend of the left subfigure in Figure~\ref{fig:syn_exp}. To estimate the success probability of low-dimensional regression to retrieve the \emph{nearest} (in principle not necessarily the originating) subspace, in each setting we exhausted our pool of projection matrices and obtained the empirical success rate. Left subfigure of Figure~\ref{fig:syn_exp} reports the results. Note that here $r\log n \approx 23$, when the distance gap is not so small, say $\eta > 2$, $d = 30$ actually enjoys at least $50\%$ chance to preserve the nearest subspace. Also reasonably to get the same level of success probability, small distance gaps evidently entails large projection dimensions. 

\begin{figure}[!htbp]
\centering
\includegraphics[width = 0.45\linewidth]{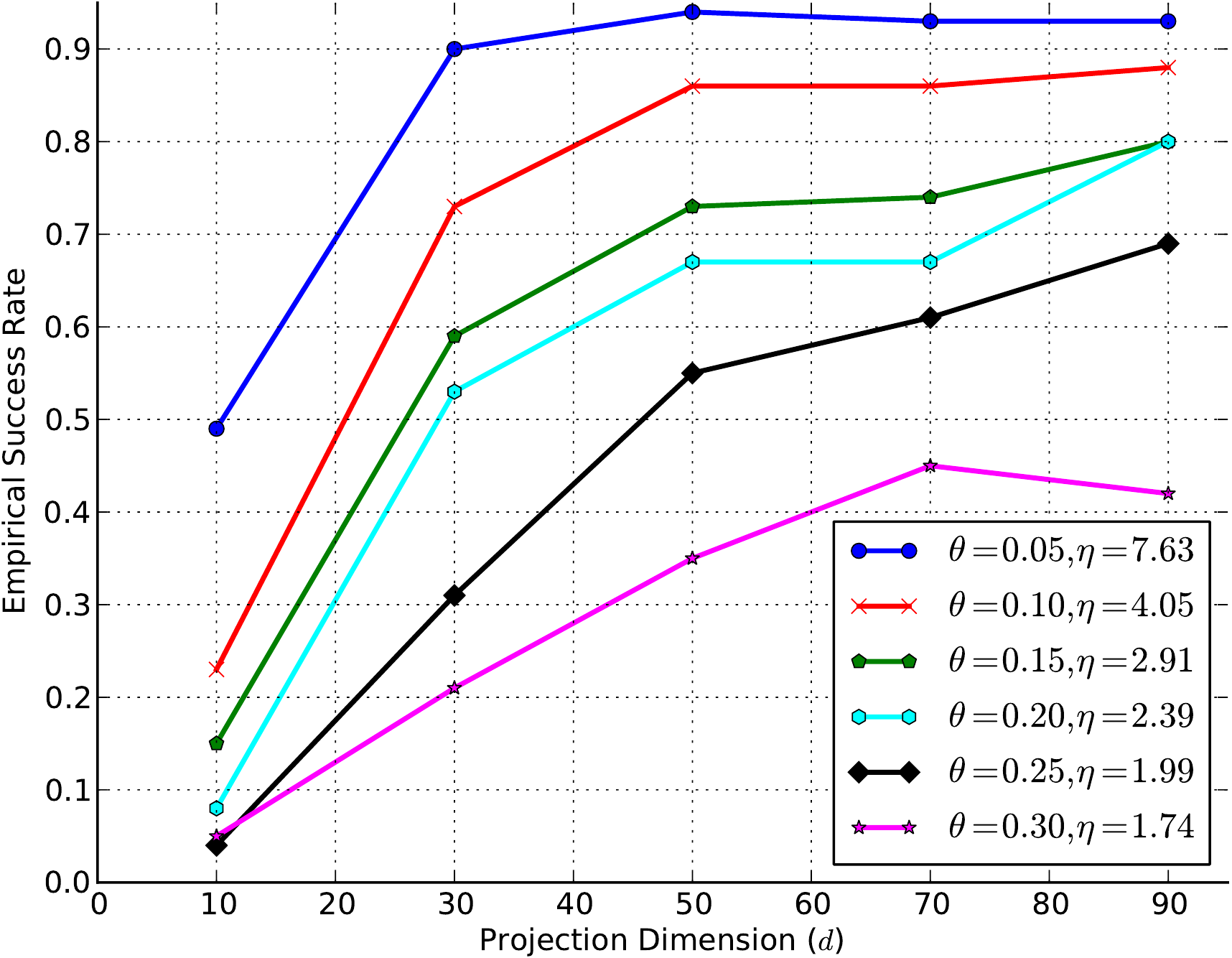}
\includegraphics[width = 0.45\linewidth]{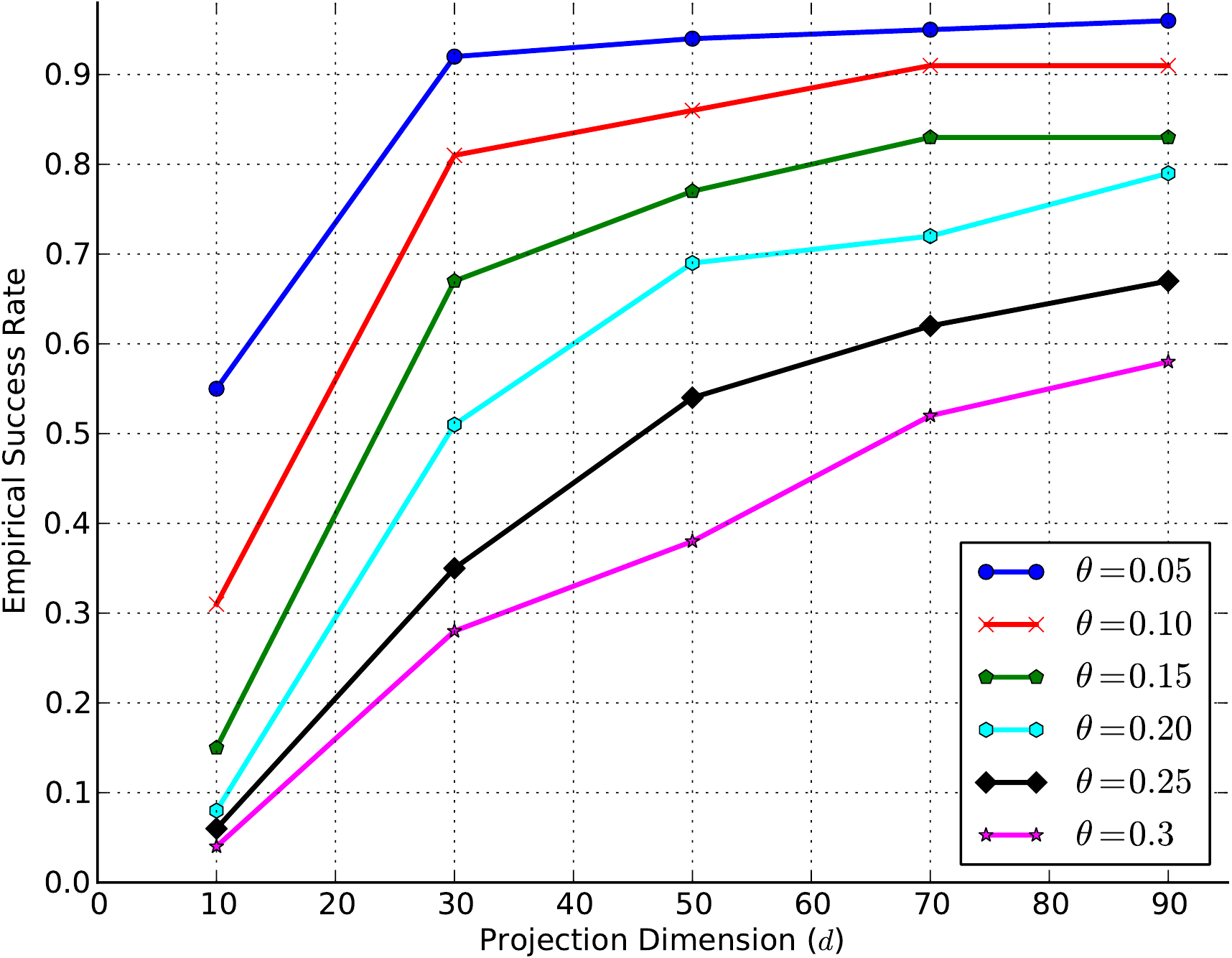}
\caption{Left: Probabilities of preserving the nearest subspaces by different projection dimensions for a fixed sample corrupted by different levels of additive errors; Right: Fraction of samples that still identify their nearest subspace after random projections of different dimensions.}
\label{fig:syn_exp}
\end{figure}

To emulate visual recognition scenarios such as we will do in the next experiments, we independently randomly generated $500$ query points similar to $\hat{\mb y}$ and also varied $\theta$ similarly as above to induce different distance gaps. To keep things simple, for each query we randomly picked up one projection from the pool and omitted repetitions refined scanning altogether. The success probability is now defined as the fraction of samples that successfully identify their respective nearest subspaces in randomly chosen low-dimensional space. The right subfigure in Figure~\ref{fig:syn_exp} gives such results. Again even on this much trimmed version of our algorithm, $d = 30$ helps half of the samples find their nearest subspace when the corruption level is below $0.20$!

\subsection{Robust Face Recognition on Extended Yale B} 
Under certain physical assumptions, images of one person taken with fixed pose and varying illumination can be well-approximated using a nine-dimensional linear subspace~\cite{Basri2003-PAMI}. Because physical phenomena such as occlusions and specularities, as well as physical properties such as nonconvexity~\cite{Zhang2013-ICCV} may cause violation of the low-dimensional linear model, we formulate the recognition problem as one of finding the closest subspace to $\mb q$ in $\ell^1$ norm~\cite{Wright2009-PAMI}\footnote{In other words, we formulate the problem as $\ell^1$ NS search. This is different from the idea of sparse representation in SRC~\cite{Wright2009-PAMI} for face recognition. Since our focus here is not to propose a new or optimal face recognition algorithm (although $\ell^1$ NS method happens to be new for the task), we prefer to save detailed discussions in this line for future work. Nevertheless, our preliminary results indeed suggest $\ell^1$ NS is as competitive as SRC for the popular extended Yale B face recognition benchmark we have used here. }. 

The Extended Yale B face dataset~\cite{Georghiades2001-PAMI} (EYB, cropped version) contains cropped, well-aligned frontal face images ($168 \times 192$) of $38$ subjects under 64 illuminations ($2,432$ images in total, the $18$ corrupted during acquisition not used here). For each subject, we randomly divided the images into two halves, leading to $1205$ training images and $1209$ test images. To better illustrate the behavior of our algorithm, we strategically divided the test set into two subsets: moderately illuminated ($909$, \textbf{Subset M}) and extremely illuminated ($300$, \textbf{Subset E}). The division is based on the light source direction (\emph{wrt}. the camera axis): images taken with either azimuth angle greater than $90^\circ$ or elevation angle greater than $60^\circ$ would be classified as extremely illuminated \footnote{Note that this division does not closely match in any way the four subset division coming with the database, as described in~\cite{Georghiades2001-PAMI}. }. Since all faces are supposed to known, hence the closed-world assumption holds true in this setting. 

\paragraph{Recognition with Original Images} Figure~\ref{fig:EYB_moderate_vary_d} presents the evolution of recognition rate on \textbf{Subset M} as the projection dimension ($d$) grows \emph{with only one repetition of the projection} ($N_{rep} = 1$). 
\begin{figure}[!htbp]
\centering
\includegraphics[width = 0.5\linewidth]{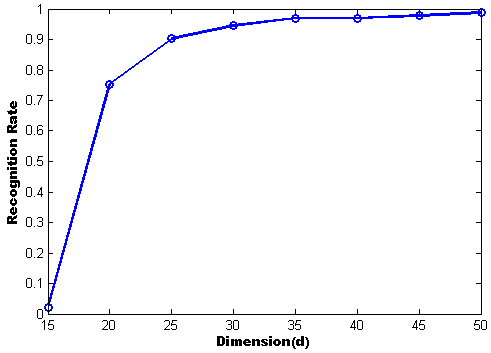}
\caption{Recognition rate versus projection dimension ($d$) \emph{with one repetition} on \textbf{Subset M} face images of EYB. The recognition rate stays stable above $90\%$ with $d \geq 25$. The high-dimensional NS in $\ell^1$ achieves perfect ($100\%$) recognition. Note the ambient dimension in this case is $D = 168 \times 192 = 32256$. }
\label{fig:EYB_moderate_vary_d}
\end{figure}
We took the subspace dimension to be nine ($r = 9$) as conventional. Our experiment shows the HDL1 achieves perfect recognition ($100\%$) on this subset, implying recognition in this subset corresponds perfectly to NS search in $\ell^1$. So Figure~\ref{fig:EYB_moderate_vary_d} actually represents the evolution of ``average'' success probability for \emph{one repetition} over the subset. Suppose the distance gap $\eta$ is significant such that $1/\alpha \to 1$ (recall $\alpha$ is near $1-1/\eta$ in our Theorem~\ref{thm:main}), our theorem suggests that one needs to set roughly $d = r \log n = 9 * \log 38 \approx 33$ to achieve a constant probability of success. Our result is consistent with this theoretical prediction and the probability is already stable above $0.9$ for $d \geq 25$. With $3$ repetitions and $d=25$, the overall recognition rate is $99.56\%$ ($4$ errors out of $909$), nearly perfect. Figure~\ref{fig:EYB_moderate_fail} presents the failing cases. 
\begin{figure}[!htbp] 
\centering
\includegraphics[width = 0.6\linewidth]{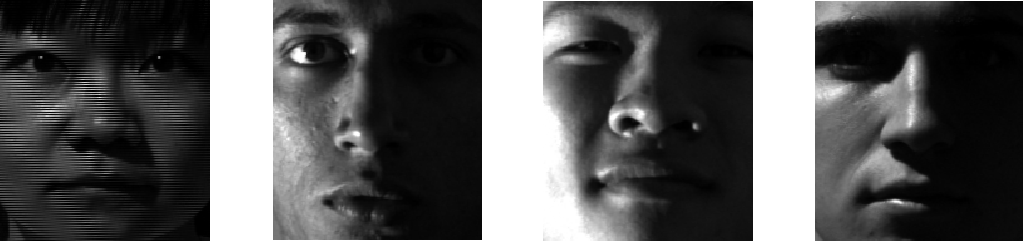}
\caption{Failing cases of our method on \textbf{Subset M} of EYB. }
\label{fig:EYB_moderate_fail}
\end{figure}
They either contain significant artifacts or approach the extremely illuminated cases, the failing mechanism and remedy of which are explained below. 

For extremely illuminated face images, the $\ell^1$ distance gap between the first and second nearest subspaces is much less significant (one example shown in Figure~\ref{fig:EYB_extreme_gap}). 
\begin{figure}[!htbp]
\centering
\includegraphics[width = 0.5\linewidth]{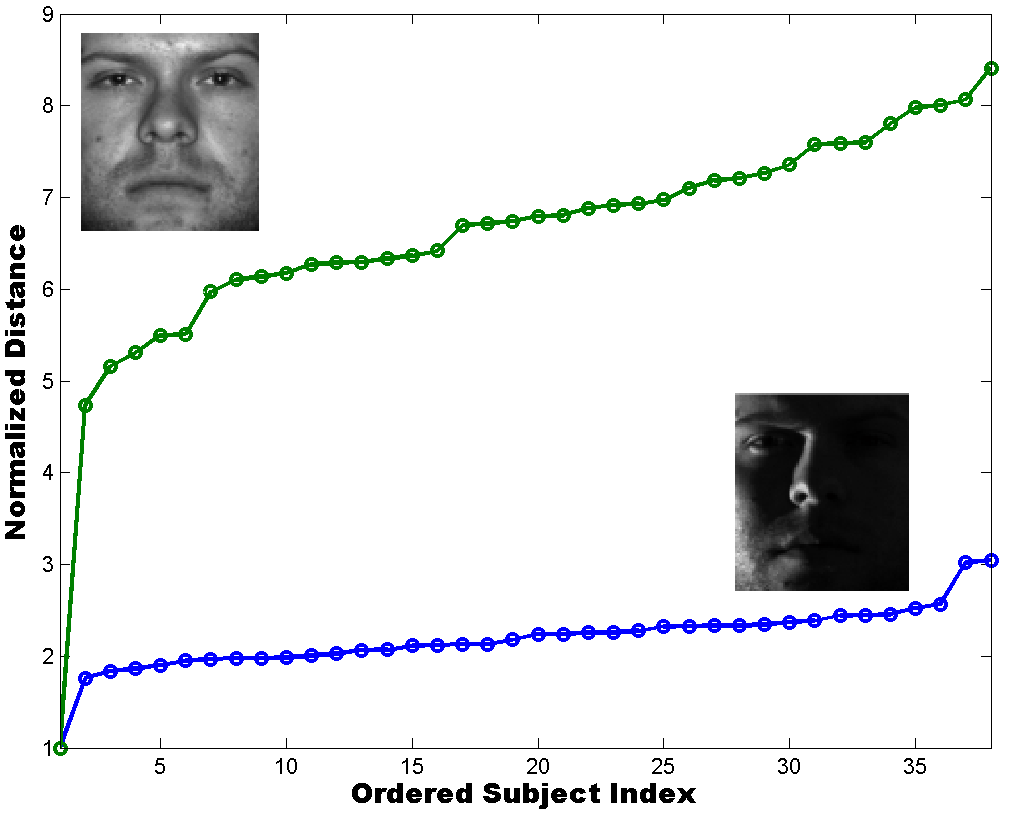}
\caption{Samples of moderately/extremely illuminated face images and their $\ell^1$ distances to other subject subspaces. The subjects have been ordered in ascending order of $\ell^1$ distance from the sample and the distances are normalized such that the first distance is $1$. Note that for the moderately illuminated sample, a distance gap of about $4.8$ is observed while this is only about $1.8$ for the extremely illuminated sample. }
\label{fig:EYB_extreme_gap}
\end{figure}
Our theory suggests $d$ should be increased to compensate for the weak gap (because the exponent $1/\alpha$ becomes significant). Our experimental results confirm this prediction. Specifically, for $r = 15$ (we took this to be higher than $9$ to account for the great variation due to extreme illuminations in this case), the HDL1 achieves $94.7\%$ accuracy while our method achieves only $79.3\%$ when $d = 25$ and $N_{back} = 5$ ($N_{back}$ is the number of back-research, i.e., ``refined scanning'' as in the algorithm description, in high dimensions). The recognition rate is boosted significantly when we increase $d$, or increase $N_{back}$ (this is another way of amplifying the success probability),   
\begin{table}[!htbp]
\caption{Recognition Rate ($\%$) on \textbf{Subset E} of EYB with varying $d$ and $N_{back}$. }
\label{table:EYB_extreme_rate}
\centering
\begin{tabular}{c||l|ccc}
\hline
	  			&HDL1	&$d=25$ 	&$d=50$		&$d=70$\\\hline
$r=15,N_{back}=5 $	&94.7	&79.3 &87.7	& 92.3 \\\hline
$r=15,N_{back}=10$	&94.7	&87.3	&92.0	&94.0 \\\hline
\end{tabular}
\end{table}
as evident from Table~\ref{table:EYB_extreme_rate}.

\paragraph{Recognition on Artificially Corrupted Images} 
In order to illustrate the robustness of $\ell^1$ NS approach for recognition and particularly the capability of our method to preserve such property of $\ell^1$, we emulated the robust recognition experiment on artificially corrupted images, as done in~\cite{Wright2009-PAMI}. To be specific, Subset $1$ and Subset $2$, which comprise images taken under near-frontal illuminations, are used for training; and Subset $3$ is used for testing.\footnote{The subset division completely matches the division in~\cite{Georghiades2001-PAMI}, which can also be found online: \url{http://cvc.yale.edu/projects/yalefacesB/subsets.html}. } We corrupted each original test image with (1) randomly-distributed sparse corruptions, and (2) structured occlusions. For the first setting, we replaced, respectively, $10\%$ to $90\%$ (with $10\%$ resolution) of randomly chosen pixels of the test images with i.i.d. uniform integer values in $\left[0, 255\right]$\footnote{In other words, any valid pixel value for 8-bit gray-scaled image.}. For the second, the \emph{mandril} image is scaled to, again $10\%$ to $90\%$ (with $10\%$ resolution), of the image size, and imposed on the image with randomly chosen locations. 
\begin{figure}[!htbp]
\centering
\includegraphics[width = 0.2\linewidth]{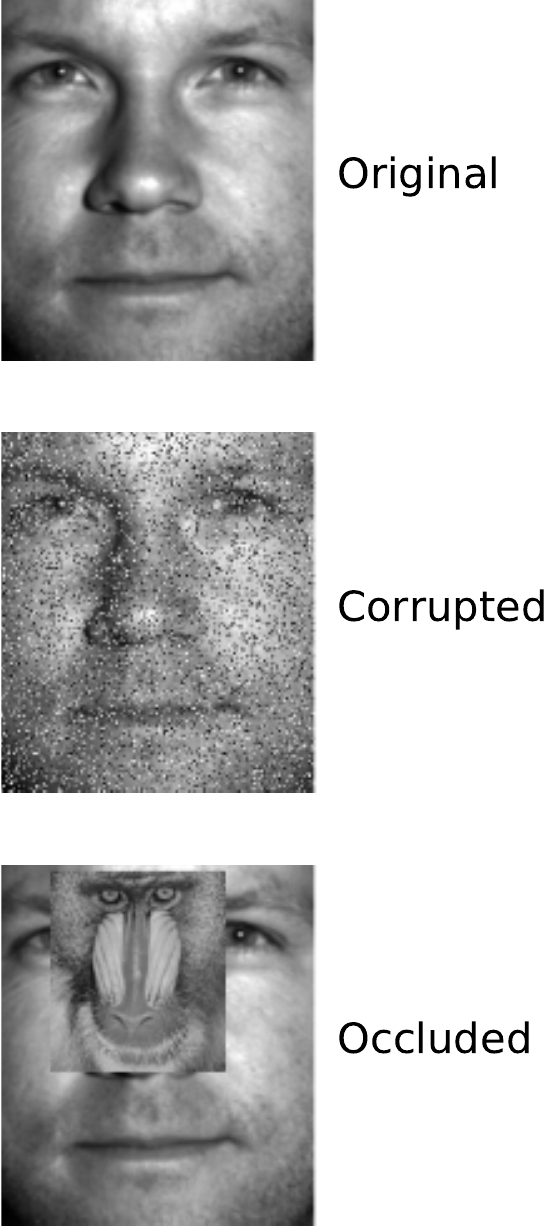}
\includegraphics[width = 0.6\linewidth]{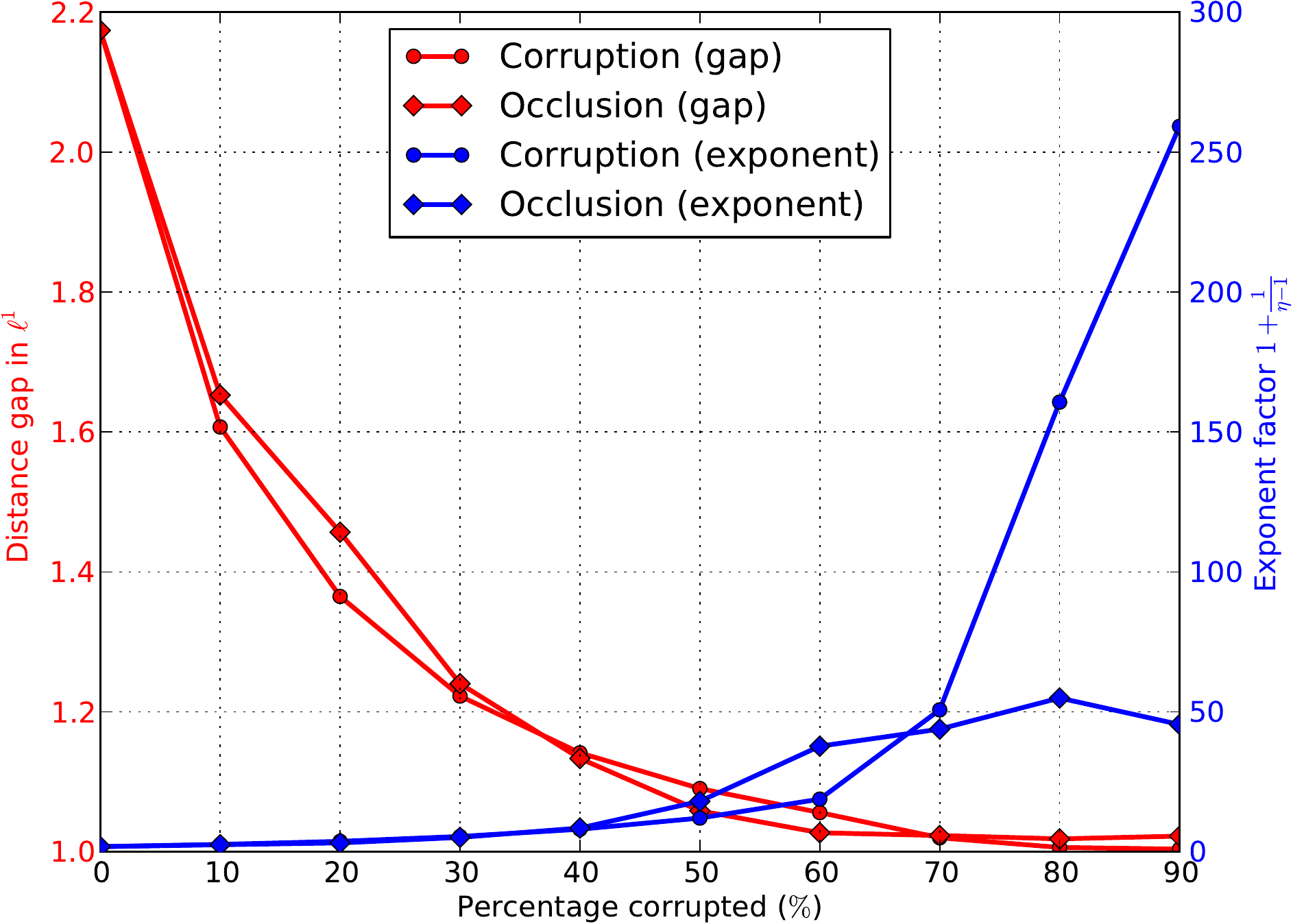}
\caption{Left: Sample of original images and the corrupted versions. In both corrupted images $20\%$ of the pixels are contaminated. Right: Evolution of the distance gap due to corruptions and the corresponding exponents calculated as $1+\frac{1}{\eta - 1}$ (in accordance with theorem~\ref{thm:main}). The distance gap is estimated by taking one random example from each test subject. }
\label{fig:EYB_corruption_example}
\end{figure}
Figure~\ref{fig:EYB_corruption_example} shows some typical samples of both cases, and also the effect of corruptions on distance gaps - corruptions significantly weaken the gaps. In particular, the gap drops to $1$ very rapidly as the corruption level increases, suggesting according to our theory that significant dimension reduction via projection is not likely beyond low corruption levels (say $20\%$ from the plot). 

To get a flavor of the level of approximation, we fix $k = 100$, $N_{rep} = 5$, $r = 9$, $N_{back} = 5$ and compare the HDL1 with our approximation scheme (dubbed LDL1) for $d = 100$, $d = 200$, and $d = 300$, respectively. To demonstrate the advantage of $\ell^1$ norm in terms of stability against corruptions, we also include comparison with the very natural $\ell^2$ NS variant (dubbed HDL2)\footnote{This is exactly the nearest subspace classifier that was compared to the SRC classifier in~\cite{Wright2009-PAMI}.}. 
\begin{figure}[!htbp]
\centering
\includegraphics[width = 0.45\linewidth]{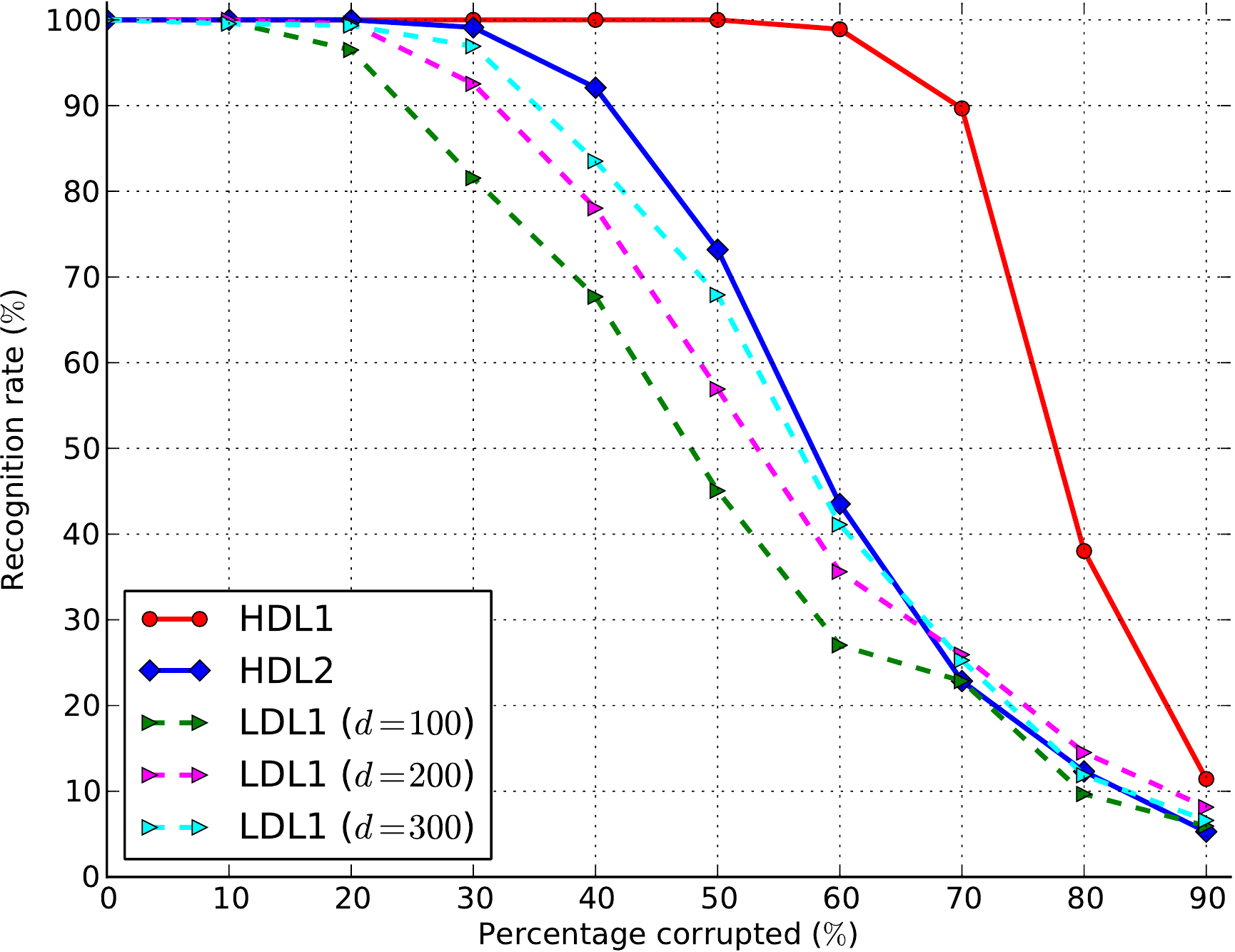}
\includegraphics[width = 0.45\linewidth]{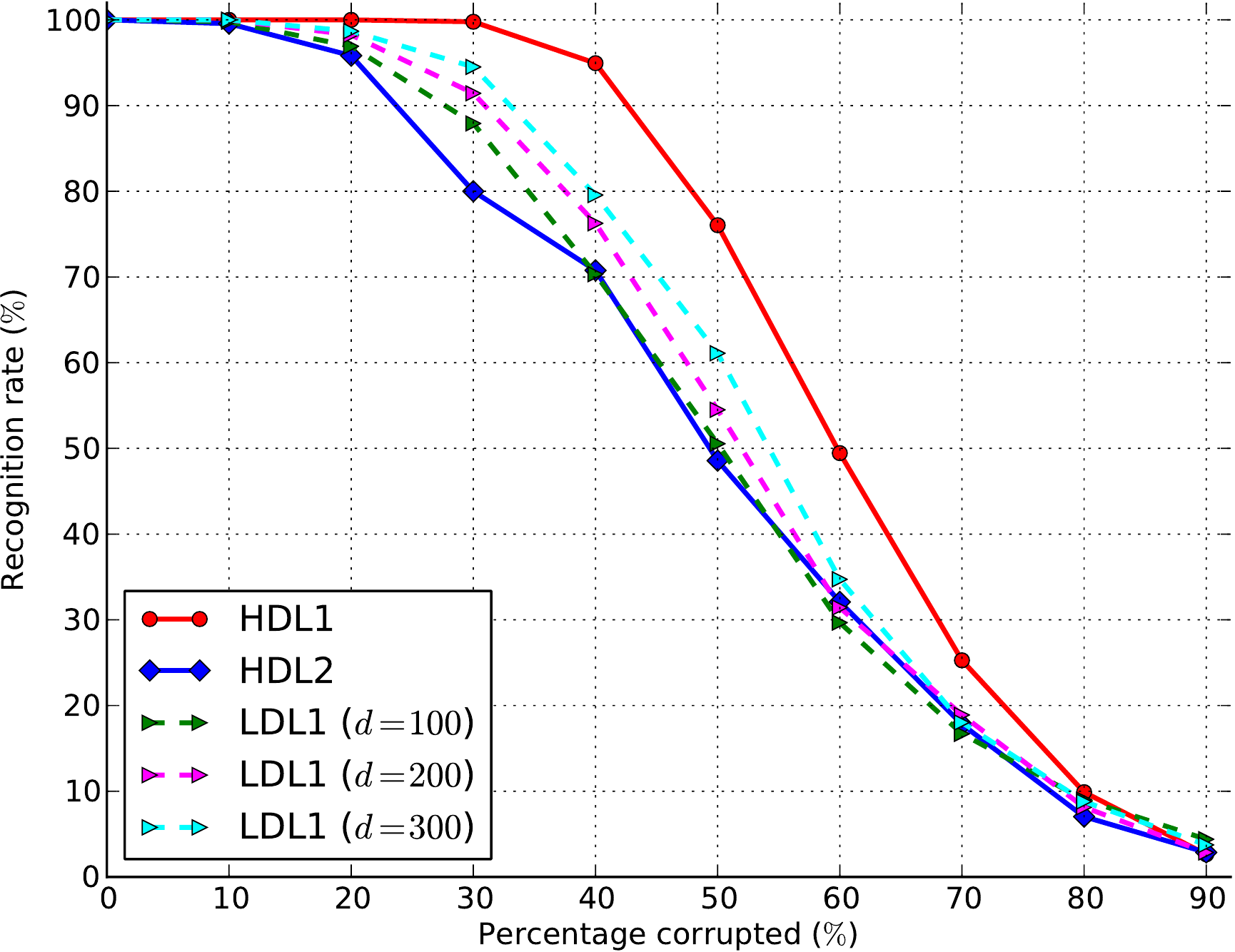}
\caption{Recognition rate under corruptions for on EYB. ($k = 100$, $N_{rep} = 3$, $r = 9$, throughout the experiments.) Left: under random corruptions; Right: under structured occlusion.}
\label{fig:EYB_corruption_result}
\end{figure}
Figure~\ref{fig:EYB_corruption_result} summarizes the recognition performances for each setting. Our method exhibits comparable level of performance with the HDL1 for corruptions less than or equal to $20\%$ and observable performance lag beyond that level. This is a reasonable price to pay as we insist on working in low dimensions for efficiency. In our current setting of the dimension, the performance of LDL1 (not HDL1) is even worse than HDL2 for the random corruption model, in particular when the corruption level is high. For the structured occlusion model, LDL1 is consistently better than HDL2. Increasing $d$ is likely to improve the approximation accuracy further.    

\subsection{Object Instance Recognition} To investigate the applicability of our proposal for large-scale recognition tasks, we took a subset of the multi-purpose Amsterdam Library of Object Images (ALOI) library~\cite{geusebroek2005amsterdam}\footnote{Available online: \url{http://staff.science.uva.nl/~aloi/}. }. This subset comprises images of $1000$ toy-like objects with fixed pose, taken under $24$ different illumination directions for each object, and hence includes $24$ images per object. We randomly took $12$ images of each object for training, and the rest for test. Although these objects in general have nonconvex shapes and non-Lembertian reflectance property, we still approximate the collection of images of each object with a nine-dimensional subspace as proposed in~\cite{Basri2003-PAMI}. This again turns the recognition problem naturally into a subspace search problem. 

Again we are interested in robust recognition. We added random corruption of varying percentage ($10\% \sim 70\%$) to the test images, similar to the above for face images. We fixed $r = 9$, $k = 100$, $N_{rep} = 30$, $N_{back} = $. Table~\ref{table:ALOI_corruption_result} compares the performance of HDL1 and HDL2 under image corruption.  
\begin{table}[!htbp]
\caption{Recognition rate under corruptions for the selected (fixed pose but varying illumination conditions) ALOI subset ($r = 9$, $k = 100$, $N_{rep} = 30$). }
\label{table:ALOI_corruption_result}
    \begin{tabular}{c|cccccccc}
    \hline 
    Corruption Level ($\%$)       & 0     & 10     & 20     & 30    & 40     & 50     & 60    & 70   \\ \hline \hline 
    HDL1 ($\%$)                   & 99.35 & 99.40  & 99.42  & 99.45 & 99.47  & 99.24  & 43.33 & 1.85 \\
    HDL2($\%$)                    & 99.72 & 96.29  & 59.22  & 24.30 & 7.87   & 1.68   & 0.53  & 0.13 \\
    LDL1($\%$, $d = 200$)  & 99.41 & 99.10  & 89.54  & 66.74 & 42.62     & ---      & ---     & ---    \\ \hline \hline 
    Distance Gap ($\tilde{\eta}$) & 4.2858    & 1.3912 & 1.2074 & 1.1339    & 1.0833 & 1.0476 & 1.0117     & ---    \\
    \hline 
    \end{tabular}

\end{table}

The $\ell^1$ NS method again exhibits impressive tolerance to these corruption, as compared to the $\ell^2$ variant.\footnote{Systematic report of recognition results on ALOI is rare, with many only on a subset, say $300$ objects, perhaps because of the significant scale. One exception is~\cite{elazary2010bayesian}, which reports recognition performance under many different settings with state-of-the-art visual recognition schemes. Particularly relevant to our result here is they evaluated recognition on the illumination subset we choose here with the biologically-inspired HMAX model. With $25\%$ of the data for training, they achieved $83.13\%$ recognition rate. } In particular, HDL1 tolerates corruptions up to $50\%$ almost perfectly, on the test set. By comparison, HDL2 fails badly for corruption level beyond $10\%$. Our approximation scheme, LDL1 with $d = 200$, turns out to be effective for corruptions lower than $20\%$ (remains almost $ \geq 90\%$ correct), and fails gradually beyond that. We did not try higher projection dimensions, as 1) the computational burden would expand rapidly, and 2) from the estimate in Figure~\ref{fig:EYB_corruption_example}, the exponent associated with the predicted dimensions by our theory would be significant for distance gap lower than $1.2$, leading to significant demand for large $d$. 

\subsection{Some Results on Running Time} \label{sec:running_time}

It is obvious the running time of our algorithm is largely determined by how fast we can solve the $\ell^1$ regression problem, i.e., $\min \norm{\mb y - \mb A\mb x}{1}$ for $\mb A \in \R^{\tilde{d} \times r}$, the cost of which will be denoted as $T_{\ell^1}\left(\tilde{d}, r\right)$. To be concrete, in our recognition tasks for object instance recognition, the straightforward exhaustive search in the high dimension $\R^D$ costs a total of $nT_{\ell^1}\left(D, r\right)$, whereas the two level search algorithm we propose costs $n N_{rep}T_{\ell^1}\left(d, r\right) + N_{back} T_{\ell^1}\left(D, r\right)$ if we project onto a lower-dimensional $\R^d$ and repeat $N_{rep}$ to boost the success probability, and then select the best $N_{back}$ for the refined scanning in the original space. So the proposed algorithm will be practically interesting when $T_{\ell^1}\left(d, r\right) \ll T_{\ell^1}\left(D, r\right)$. 

We first experimented with simulated examples. We generate $\mb A$ as an orthonormal basis for an $r$-dimensional subspace in $\R^D$, where $D = 2^\rho$ and $\rho$ varies from $4.5$ to $15$ with $0.5$ step size, $r = 10$. For each $D$, $\mb x_0 \in \R^r$ is generated as iid Gaussians, and $y_0 = \mb A \mb x_0$. We then perform normalization and corruption addition, the same as we did in Section~\ref{sec:synthesized_exp}, with the fraction of corruption $\theta$ taken from $\left\{0.2, 0.4, 0.6, 0.8, 1.0\right\}$. 
\begin{figure}[!htbp]
\centering
\includegraphics[width = 0.6\linewidth]{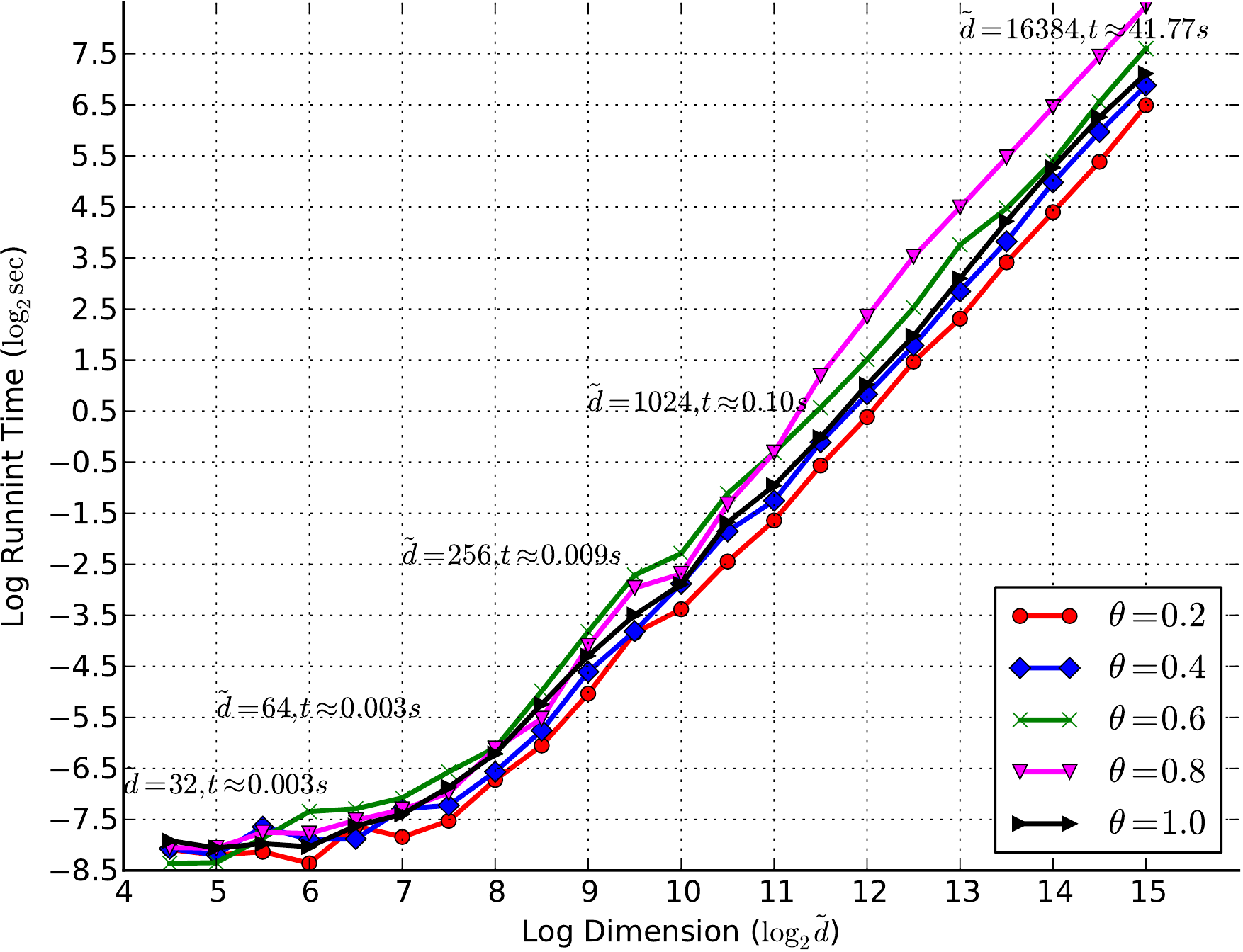}
\caption{(Log) Running Time vs. (Log) Dimension for the Simulated $\ell^1$ Regression Problems. It seems for the customized IPM solver, the complexity scales like $O\left(\tilde{d}^2\right)$. The computer runs 64bit Ubuntu 13.10, with Linux Kernel 3.11.0-17 and 64 bit Matlab 2012b. The processor is Xeon E5607 2.27G, and the RAM is 12G. We did the simulation using only one thread by turning on the {\tt -singleCompThread} flag for Matlab. }
\label{fig:l1_reg_time}
\end{figure}
We take the $\ell^1$ regression solver from $\ell^1$ magic~\cite{candes2005l1}, which implements the customized IPM outlined in Section 11.8.2 of~\cite{Boyd}. Figure~\ref{fig:l1_reg_time} plots the running time (in sec) vs. dimension ($\tilde{d}$), both in based-2 logarithm. To make the comparison fair as possible, we have turned on the {\tt -singleCompThread} flag to ensure Matlab is only using one thread for the simulation. It seems the running time scales approximately as $O\left(\tilde{d}^2\right)$. To see how that is relevant to our recognition problem, for $\theta = 0.2$, $T_{\ell^1}\left(256, 10\right) = 0.009 s$, whereas $T_{\ell^1}\left(16384, 10\right) = 41.77 s$. The running time differs by several orders of magnitude, leaving our algorithm significant advantage! 

To illustrate what this means in practice, we take a random instance from the Yale B recognition task with $10\%$ random corruptions and take $d = 100$. Previous experiment has confirmed this projection dimension works well for this case (see Figure~\ref{fig:EYB_corruption_result}). Again we take $N_{rep} = 5$ and $N_{back} = 5$, for the single-thread simulation, the high dimension exhaustive search costs $7496$ sec's, while the our two-level search algorithm only needs $467$ sec's \footnote{These daunting numbers can be significantly cut down by exploiting multi-core/GPU programming. We have exploited multicore programming in our actual experiments over the recognition tasks. }, over $16$ times faster! The cost of our algorithm is largely dictated by $N_{back}$ (empirically even smaller than because of potential ties). In larger dataset, when $N_{back}$ can be taken to be much smaller relative to $n$, the advantage could be more significant. 

\appendix

\section{Notation and Preliminaries} \label{app:preliminary}
We present detailed proofs to our technical lemmas throughout the appendix section. This part will provide some essential facts about stable distributions, in particular the Cauchy distribution. RV is short for random variable. 

\begin{definition}[Stable Distributions, page 43 of~\cite{uchaikin1999chance}]
An RV $Y$ is stable if and only if for arbitrary constants $c_1$ and $c_2$ there exist constants $a$ and $b$ such that 
\begin{align}
c_1 Y_1 + c_2 Y_2 \equiv_d a + b Y, 
\end{align}
where $Y_1 \equiv_d Y_2 \equiv_d Y$. It is said to be strictly stable if and only if $c_1 Y_1 + c_2 Y_2 \equiv_d b Y$ (i.e., one can take $a = 0$.). 
\end{definition}

\begin{theorem}[Characteristic Function of Stable Distributions, Theorem C.2 of~\cite{zolotarev1986one}]
A nondegenerate distribution $G$ is stable if and only if its characteristic function $\psi_G\left(t\right)$ satisfies: 
\begin{align}
\log \psi_G\left(t; \alpha, \beta, \gamma, \lambda\right) = \lambda\left(it\gamma - \left|t\right|^{\alpha} + i t\omega_A\left(t, \alpha, \beta\right)\right), 
\end{align}
where the real parameters $\alpha \in \left(0, 2\right]$, $\beta \in \left[-1, 1\right]$, $\gamma \in \left(-\infty, \infty\right)$ and $\lambda \in \left(0, \infty\right)$ and 
\begin{align}
\omega_A\left(t, \alpha, \beta\right)
= \begin{cases}
\left|t\right|^{\alpha - 1} \beta \tan\left(\pi \alpha/2\right) & \text{if $\alpha \neq 1$} \\
-\beta \left(2/\pi\right) \log\left|t\right| & \text{if $\alpha  = 1$}. 
\end{cases}
\end{align}
\end{theorem}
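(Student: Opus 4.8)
The plan is to prove both directions of this classical characterization through the route of infinite divisibility and the Lévy--Khintchine representation, rather than by solving the stability functional equation directly. First I would translate the defining relation $c_1 Y_1 + c_2 Y_2 \equiv_d a + bY$ into characteristic functions: writing $\phi(t) = \mathbb{E}[e^{itY}]$ with $Y_1 \equiv_d Y_2 \equiv_d Y$, stability is equivalent to
\begin{equation}
\phi(c_1 t)\,\phi(c_2 t) = e^{i a t}\,\phi(b t) \qquad \text{for all } t,
\end{equation}
for constants $a,b$ depending on $c_1,c_2$. Iterating this with $c_1 = \cdots = c_n = 1$ exhibits the law of $Y$ as an $n$-fold convolution of a common shifted, rescaled law for every $n$; hence $Y$ is \emph{infinitely divisible}. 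This structural reduction is the crux that lets me bring in general machinery.

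Given infinite divisibility, I would invoke the Lévy--Khintchine formula: $\phi$ never vanishes and
\begin{equation}
\log\phi(t) = i\gamma_0 t - \tfrac{1}{2}\sigma^2 t^2 + \int_{\R\setminus\{0\}}\Bigl(e^{itx} - 1 - itx\,\mathbf{1}_{\{|x|\le 1\}}\Bigr)\,\nu(dx),
\end{equation}
with $\sigma^2 \ge 0$ and Lévy measure $\nu$ satisfying $\int \min(1,x^2)\,\nu(dx) < \infty$. The stability constraint is then imposed on this representation. Matching $\phi(c_1 t)\phi(c_2 t) = e^{iat}\phi(bt)$ term by term forces the Gaussian part to vanish except in the self-similar Gaussian case (which yields $\alpha = 2$), and forces $\nu$ to be scale-invariant up to a multiplicative factor. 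Expressing $\nu$ via densities on the two half-lines, scale invariance pins them down to the power law $\nu(dx) = (C_+\,\mathbf{1}_{\{x>0\}} + C_-\,\mathbf{1}_{\{x<0\}})\,|x|^{-1-\alpha}\,dx$ for some $\alpha\in(0,2)$ and $C_\pm\ge 0$; the exponent $\alpha$ is exactly the stability index, and the balance $\beta = (C_+ - C_-)/(C_+ + C_-)\in[-1,1]$ becomes the skewness parameter.

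The remaining work is to evaluate the integral in closed form. Substituting the power-law $\nu$ and computing $\int_0^\infty (e^{itx}-1-itx\mathbf{1}_{\{x\le 1\}})x^{-1-\alpha}\,dx$ together with its mirror on the negative axis yields, upon collecting real and imaginary parts, a term proportional to $-|t|^\alpha$ plus the skewness correction $it\,\omega_A(t,\alpha,\beta)$; the constants $C_\pm$ absorb into $\lambda,\beta$ by rescaling, while a residual linear-in-$t$ drift absorbs into $\gamma$. I expect the \textbf{main obstacle} to be the case $\alpha = 1$: there the governing integral $\int_0^\infty (e^{itx}-1-itx\mathbf{1}_{\{x\le 1\}})x^{-2}\,dx$ is genuinely borderline --- the factor $\tan(\pi\alpha/2)$ present for $\alpha\neq 1$ diverges, and a careful limiting/principal-value computation is required to show the skewness contribution degenerates into the logarithmic term $-\beta(2/\pi)\log|t|$. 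Verifying in addition that the Lévy--Khintchine truncation only perturbs $\gamma$, and not the $|t|^\alpha$ or $\omega_A$ pieces, is the delicate bookkeeping. Finally, for the converse I would substitute the asserted form of $\log\psi_G$ back into the functional equation and check that $a,b$ can always be solved for given $c_1,c_2$, which confirms stability and closes the equivalence.
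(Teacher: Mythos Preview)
The paper does not prove this theorem at all: it is stated in Appendix~\ref{app:preliminary} as a quoted result (``Theorem C.2 of \cite{zolotarev1986one}'') and used purely as background for defining the parameters of stable laws. There is no argument in the paper to compare your proposal against.

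That said, your sketch is the standard route to this classical characterization and is essentially sound. The reduction to infinite divisibility via iteration of the stability relation, followed by the L\'evy--Khintchine representation and the scaling argument forcing the power-law L\'evy measure, is exactly how the result is developed in the references the paper cites (Zolotarev, and also Uchaikin--Zolotarev \cite{uchaikin1999chance}). Your identification of the $\alpha = 1$ case as the delicate point is accurate: the divergence of $\tan(\pi\alpha/2)$ there is what produces the logarithmic correction, and the bookkeeping with the truncation in the L\'evy--Khintchine integrand does only shift the location parameter $\gamma$. If you were to flesh this out, the one place to be careful is the explicit evaluation of $\int_0^\infty (e^{itx}-1-itx\mathbf{1}_{\{x\le 1\}})\,x^{-1-\alpha}\,dx$, which requires contour or Gamma-function identities to extract the constants cleanly; but none of this is needed for the paper, which treats the theorem as a black box.
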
 

We will use $G^A\left(x; \alpha, \beta, \gamma, \lambda\right)$ to denote the stable distribution with characteristic function $\psi_A\left(t; \alpha, \beta, \gamma, \lambda\right)$, following the convention in~\cite{uchaikin1999chance}. Also we write $G^A\left(x; \alpha, \beta\right)$ when $\gamma = 0$ and $\lambda = 1$, thinking of this setting as the canonical form. 

\begin{definition}[(Symmetric) $\ell^p$-Stable Distributions] \label{def:ell_p_stable}
An RV $X$ is called symmetric $\ell^p$-stable for some $p \in \left(0, 2\right]$ if the characteristic function 
\begin{align}
\psi_X\left(t\right) = \exp\left(-c\left|t\right|^p\right)
\end{align}
for some $c >0 $ and for all $t \in \R$. Its distribution is called symmetric $\ell^p$-stable distribution. 
\end{definition}

By comparing the characteristic functions, it is clear a symmetric $\ell^p$-stable distribution is the stable distribution $G^A\left(x; p, 0, 0, c\right)$ for some $c > 0$. It is also obvious that $\ell^p$ stable distributions exist for all $p \in \left(0, 2\right]$ by virtue of the existence of the stable distribution with the corresponding parameters. 

\begin{lemma}[Property of (Symmetric) $\ell^p$-Stable Distributions]
Consider iid RV's $X_1, \cdots, X_n$ obeying a symmetric $\ell^p$-stable distribution. Then for any real sequence $\left\{c_i\right\}_{i \in [n]}$, we have 
\begin{align}
\sum_{i=1}^n c_i X_i \equiv_d \left(\sum_{i=1}^n \left|c_i\right|^p\right)^{1/p} X, 
\end{align}
where $X$ has the same distribution as $X_i$'s. 
\end{lemma}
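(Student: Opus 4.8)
The plan is to verify the identity directly via characteristic functions, using the defining form $\psi_X(t) = \exp(-c|t|^p)$ from Definition~\ref{def:ell_p_stable} together with the fact that a probability distribution is uniquely determined by its characteristic function. Since $\equiv_d$ asserts only equality in law, matching characteristic functions is both necessary and sufficient, and the whole argument is essentially a one-line computation rather than anything requiring real ingenuity.

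First I would establish the scaling rule for a single summand: for fixed real $c_i$, the characteristic function of $c_i X_i$ is $\psi_{X_i}(c_i t) = \exp(-c|c_i|^p|t|^p)$, where I have used $|c_i t|^p = |c_i|^p |t|^p$. Second, I would invoke independence of the $X_i$ (hence of the $c_i X_i$) to factor the characteristic function of the sum into the product of these single-summand characteristic functions, which collapses to $\exp(-c(\sum_i |c_i|^p)|t|^p)$ since the exponents add.

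Third, I would apply the same scaling rule once more to the right-hand side: writing $s = (\sum_i |c_i|^p)^{1/p}$, the variable $sX$ has characteristic function $\exp(-c|s|^p|t|^p) = \exp(-c(\sum_i|c_i|^p)|t|^p)$, which is identical to that of the left-hand side for every $t \in \R$. Uniqueness of characteristic functions then yields the claimed distributional equality.

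Since nothing here is genuinely hard, the only points meriting attention are the degenerate case $c_1 = \cdots = c_n = 0$ (where both sides reduce to the constant $0$ and the identity holds trivially) and the clean appeal to the uniqueness/inversion theorem for characteristic functions in order to pass from matching transforms back to equality in distribution. I expect no substantive obstacle beyond this routine bookkeeping; the lemma is really just the observation that the exponent $|t|^p$ is what makes the $\ell^p$ weights $|c_i|^p$ combine additively.
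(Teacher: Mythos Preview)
Your proposal is correct and matches the paper's own proof essentially line for line: both compute the characteristic function of $\sum_i c_i X_i$ by using independence to factor into $\prod_i \exp(-c|c_i|^p|t|^p) = \exp(-c(\sum_i|c_i|^p)|t|^p)$ and then identify this as the characteristic function of $(\sum_i|c_i|^p)^{1/p}X$. The only difference is cosmetic---you explicitly flag the degenerate all-zero case and the appeal to uniqueness, which the paper leaves implicit.
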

\begin{proof}
Assume the characteristic function of $X_i$'s are $\psi\left(t\right) = \exp\left(-c\left|t\right|^p\right)$ for some $c > 0$. Then 
\begin{align}
& \psi_{\sum_{i=1}^n c_i X_i}\left(t\right) 
=  \expect{\exp\left(it\sum_{i=1}^n c_i X_i\right)}
=  \prod_{i=1}^n \expect{\left(itc_iX_i\right)} \\
= \; & \prod_{i=1}^n \exp\left(-c\left|c_i\right|^p \left|t\right|^p \right) = \exp\left(-c \sum_{i=1}^n \left|c_i\right|^p \left|t\right|^p\right) \\
= \; &  \expect{\exp\left(it\left(\sum_{i=1}^n \left|c_i\right|^p\right)^{1/p} X\right)} = \psi_{\left(\sum_{i=1}^n \left|c_i\right|^p\right)^{1/p} X}\left(t\right), 
\end{align}
completing the proof.  
\end{proof}

We will henceforth omit the word ``symmetric'' for simplicity when considering $\ell^p$-stable distributions. In fact, we will deal exclusively with the standard Cauchy RV's $X \sim \mc C\left(0, 1\right)$ with PDF $p_{\mc C}\left(x\right) = \frac{1}{\pi} \frac{1}{1+x^2}$ and the standard half-Cauchy RV's $X \sim \mc{HC}\left(0, 1\right)$ with PDF 
\begin{equation}
p_{\mc {HC}}(x) = 
\begin{cases}
\frac{2}{\pi} \frac{1}{1+x^2} & x \geq 0 \\
0  &  x < 0 
\end{cases}.  
\end{equation}
One remarkable aspect of the standard Cauchy is it is $\ell^1$-stable. Furthermore, by inverting the characteristic function as stated in Definition~\ref{def:ell_p_stable}, one can see all $\ell^1$-stable distribution has to be standard Cauchy or its scaled version (controlled by $c$) \cite{uchaikin1999chance}. These facts are fundamental to our subsequent analysis. In addition, the following two-sided bound for upper tail of a half-Cauchy RV will also be useful.
\begin{lemma}
For $X \sim \mc{HC}\left(0, 1\right)$, we have $\forall t \geq 1$
\begin{equation}
 \frac{1}{\pi} \frac{1}{t} \leq \prob{X\geq t} \leq \frac{2}{\pi} \frac{1}{t}. 
\end{equation}
\end{lemma}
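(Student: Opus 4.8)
The plan is to reduce the tail probability to an elementary integral and then sandwich the integrand between two comparable functions. First I would write the tail probability directly from the half-Cauchy density $p_{\mc{HC}}$:
$$\prob{X \geq t} = \int_t^\infty \frac{2}{\pi}\frac{1}{1+x^2}\,dx = \frac{2}{\pi}\int_t^\infty \frac{dx}{1+x^2}.$$
The whole argument then rests on bounding $\frac{1}{1+x^2}$ from above and below by constant multiples of $\frac{1}{x^2}$, whose integral over $[t,\infty)$ is simply $\frac{1}{t}$.

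For the upper bound I would use the trivial inequality $1 + x^2 \geq x^2$, valid for all $x$, giving $\frac{1}{1+x^2} \leq \frac{1}{x^2}$, and hence $\int_t^\infty \frac{dx}{1+x^2} \leq \int_t^\infty \frac{dx}{x^2} = \frac{1}{t}$. Multiplying by $\frac{2}{\pi}$ yields the claimed $\prob{X \geq t} \leq \frac{2}{\pi}\frac{1}{t}$.

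For the lower bound the hypothesis $t \geq 1$ enters crucially: for every $x$ in the range of integration we have $x \geq t \geq 1$, so $x^2 \geq 1$ and therefore $1 + x^2 \leq 2x^2$. This gives $\frac{1}{1+x^2} \geq \frac{1}{2x^2}$ throughout $[t,\infty)$, whence $\int_t^\infty \frac{dx}{1+x^2} \geq \frac{1}{2}\int_t^\infty \frac{dx}{x^2} = \frac{1}{2t}$, and multiplying by $\frac{2}{\pi}$ produces $\prob{X\geq t} \geq \frac{1}{\pi}\frac{1}{t}$.

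There is no genuine obstacle here; the only point worth flagging is that the restriction $t \geq 1$ is precisely the condition needed to make $1+x^2 \leq 2x^2$ hold on the entire integration range, which is what yields the factor of $2$ separating the two bounds. An alternative route evaluates the integral in closed form as $\frac{2}{\pi}\arctan(1/t)$ via the identity $\arctan t + \arctan(1/t) = \pi/2$, and then bounds $\frac{u}{2} \leq \arctan u \leq u$ on $(0,1]$ (the upper inequality from $\frac{d}{du}\arctan u \leq 1$, the lower from $\frac{1}{1+u^2} \geq \frac{1}{2}$ for $u \leq 1$); I prefer the direct integral comparison above, since it avoids invoking the arctangent identity and makes the role of the constraint $t \geq 1$ transparent.
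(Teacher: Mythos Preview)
Your proof is correct and is essentially identical to the paper's: both sandwich $\frac{1}{1+x^2}$ between $\frac{1}{2x^2}$ and $\frac{1}{x^2}$ on $[t,\infty)$ and integrate, with the paper likewise noting that the upper bound holds for any $t>0$.
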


\begin{proof}
We have 
\begin{align}
 \frac{1}{\pi} \frac{1}{t} =\frac{2}{\pi}\integral{t}{\infty}{\frac{1}{2x^2}}{dx} & \leq  \prob{X \geq t} = \frac{2}{\pi}\integral{t}{\infty}{\frac{1}{1+x^2}}{dx} \\  &\leq \frac{2}{\pi}\integral{t}{\infty}{\frac{1}{x^2}}{dx} = \frac{2}{\pi}\frac{1}{t}. 
\end{align} 
In fact, the upper bound holds for any $t > 0$. \qquad 
\end{proof}
\newline
For any matrix $\mb A$, we will use $\mb A_{i*}$ to denote its $i^{th}$ row, and $\mb A_{*j}$ its $j^{th}$ column. 
\section{Proof of Lemma~\ref{lemma:good_subspace}} \label{app:proof_expansion}
We first describe the behavior of sum of iid half-Cauchy's in the limit, based on the \emph{generalized central limit theorem} (GCLT), which we record below for the sake of completeness. 

\begin{theorem}[GCLT, Page 62 in~\cite{uchaikin1999chance}]
Let $X_1, \cdots, X_n$ be iid RV's with the distribution function $F_X\left(x\right)$ satisfying the conditions 
\begin{align}
1 - F_X\left(x\right) \sim cx^{-\mu}, & \quad x \to \infty \\
F_X\left(x\right) \sim d\left|x\right|^{-\mu},  & \quad x \to -\infty, 
\end{align}
with $\mu > 0$\footnote{Note that there are obvious typographical errors in (2.5.17) and (2.5.18) in the original theorem statement. This can be seen from, e.g., Theorem 2 of $\S 35$ of Chapter $7$ in~\cite{gnedenko1968limit}. }. Then there exist sequences $a_n$ and $b_n > 0$ such that the distribution of the centered and normalized sum 
\begin{equation}
Z_n = \frac{1}{b_n}\left(\sum_{i=1}^n X_i - a_n\right)
\end{equation}
weakly converges to the stable distribution with parameters 
\begin{equation}
\alpha = \begin{cases}
\mu & \mu \leq 2 \\
2 & \mu > 2
\end{cases}, 
\quad 
\beta = \frac{c-d}{c+d} 
\end{equation}
as $n \to \infty$: $F_{Z_n}\left(x\right) \Rightarrow G^A\left(x; \alpha, \beta\right)$. In particular, when $\mu = 1$, one can take 
\begin{equation}
a_n = \beta\left(c + d\right) n \log n, \quad b_n = \frac{\pi}{2}\left(c+d\right) n. 
\end{equation}
\end{theorem}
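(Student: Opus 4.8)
The statement is the classical generalized central limit theorem, and the natural route is through characteristic functions together with L\'evy's continuity theorem. The plan is to reduce the weak convergence $F_{Z_n} \Rightarrow G^A(\cdot;\alpha,\beta)$ to pointwise convergence of characteristic functions: if $\phi$ denotes the characteristic function of $X_1$, then by independence the characteristic function of the centered, scaled sum is $\phi_{Z_n}(t) = \exp(-i t a_n/b_n)\,[\phi(t/b_n)]^n$, so that $\log \phi_{Z_n}(t) = -i t a_n/b_n + n\log\phi(t/b_n)$. Since the candidate limit $\psi_{G^A}(t;\alpha,\beta)$ is continuous at $t=0$, it suffices to show $\log\phi_{Z_n}(t) \to \log\psi_{G^A}(t;\alpha,\beta)$ for every fixed $t$ and then invoke the continuity theorem. (When $\mu>2$ the variance is finite and the claim reduces to the ordinary CLT with Gaussian limit $\alpha=2$; the genuinely stable regime is $\mu<2$, on which I focus.)

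The heart of the argument is a small-argument expansion of $\log\phi(s)$ as $s\to 0$, driven entirely by the regularly varying tails. Writing $\phi(s)-1 = \int(e^{isx}-1)\,dF_X(x)$ and splitting the integral into a bulk part and the two tails, I would substitute the assumed asymptotics $1-F_X(x)\sim c x^{-\mu}$ and $F_X(-x)\sim d x^{-\mu}$ and estimate the tail contributions by integration by parts (a Tauberian-type computation). This yields, for $\alpha = \min(\mu,2) \in (0,2)$, an expansion of the form
\begin{equation}
\log\phi(s) = i\kappa s \;-\; \lambda_{c,d,\alpha}\,|s|^{\alpha}\bigl(1 - i\beta\,\mathrm{sgn}(s)\,w_\alpha(s)\bigr) + o(|s|^{\alpha}),
\end{equation}
where $\beta=(c-d)/(c+d)$, the constant $\lambda_{c,d,\alpha}$ is proportional to $(c+d)$ through $\Gamma$- and $\sin$-factors, and $w_\alpha(s)=\tan(\pi\alpha/2)$ when $\alpha\neq 1$, while $w_\alpha(s)\sim\tfrac{2}{\pi}\log(1/|s|)$ in the \emph{critical} case $\alpha=1$. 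The constant $\kappa$ is a truncated first moment, finite for $\alpha>1$ and otherwise meaningful only after centering.

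With this expansion in hand, the scaling constants are forced. One chooses $b_n$ so that $n\,\lambda_{c,d,\alpha}\,b_n^{-\alpha}$ tends to the normalizing constant of the canonical stable law, i.e.\ $b_n \asymp n^{1/\alpha}$; for the case $\mu=1$ relevant to the half-Cauchy sums appearing elsewhere in this paper, $\alpha=1$ and this reads $b_n = \tfrac{\pi}{2}(c+d)\,n$. Substituting $s=t/b_n$, the term $n\lambda_{c,d,1}|s|$ converges to $|t|$, and the logarithmic factor $w_1(t/b_n)\sim\tfrac{2}{\pi}\log(1/|t/b_n|)$ generates both the desired limiting piece $-it\beta\tfrac{2}{\pi}\log|t|$ and a residual drift proportional to $\beta(c+d)\,t\log n$. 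This divergent drift is exactly what the centering must absorb, giving $a_n = \beta(c+d)\,n\log n$ up to a finite linear shift that may be folded into $\gamma$; matching the remaining $i\kappa s$ contribution then fixes the purely linear term. Collecting everything yields $\log\phi_{Z_n}(t) \to \log\psi_{G^A}(t;\alpha,\beta)$, and the continuity theorem finishes the proof. (As a sanity check, the standard half-Cauchy has $c=2/\pi$, $d=0$, hence $\beta=1$, $b_n=n$ and $a_n=\tfrac{2}{\pi}n\log n$, consistent with the $\tfrac{2}{\pi}d\log d$ scale in Lemma~\ref{lemma:good_subspace}.)

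The main obstacle is the displayed expansion, and specifically the boundary case $\alpha=1$: here the first moment of $X$ fails to exist, so the naive Taylor expansion of $\phi$ breaks down and one must extract the logarithmic correction by a careful truncation-and-integration-by-parts estimate of $\int(e^{isx}-1)\,dF_X(x)$ over the tails. It is precisely this $\log(1/|s|)$ term that produces the $n\log n$ growth of the centering $a_n$, so establishing it with the correct constant $(c+d)$ and sign is the delicate step; the cases $\alpha\in(0,1)$ and $\alpha\in(1,2)$ are comparatively routine once the Tauberian estimate is set up, requiring only that one track whether the linear term is absorbed into $a_n$ (for $\alpha\le 1$) or survives as the mean (for $\alpha>1$).
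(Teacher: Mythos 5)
You should note at the outset that the paper contains no proof of this statement: it is imported verbatim (with a footnote correcting typographical errors) from \cite{uchaikin1999chance} and used as a black box, so your proposal can only be judged as a free-standing sketch of the classical theorem. As such, it follows the canonical Gnedenko--Kolmogorov/Feller route -- L\'evy's continuity theorem, an expansion of $\log\phi(s)$ as $s \to 0$ driven by the regularly varying tails, and the observation that in the critical case $\alpha = 1$ the $\log(1/|s|)$ correction in the exponent is what forces the $n\log n$ growth of the centering -- and your constants check out: $b_n = \tfrac{\pi}{2}(c+d)n$ and $a_n = \beta(c+d)n\log n$ specialize to $b_n = n$, $a_n = \tfrac{2}{\pi}n\log n$ for the half-Cauchy ($c = 2/\pi$, $d = 0$, $\beta = 1$), exactly as used in Lemma~\ref{lemma:good_subspace}. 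Two caveats keep this an outline rather than a proof. First, the displayed expansion of $\log\phi(s)$ is asserted rather than derived, and in the case $\alpha = 1$ it \emph{is} essentially the whole theorem: a complete write-up must handle the nonexistent mean via a truncated-moment centering (e.g.\ $a_n = n\, b_n\, \mathbb{E}\left[\sin(X/b_n)\right]$, as in Theorem 2 of \S 35 of \cite{gnedenko1968limit} -- the very reference the paper's footnote points to) and must justify replacing $\log\phi(s)$ by $\phi(s)-1$ uniformly on the relevant range; you correctly flag this as the delicate step but do not execute it. Second, you silently fold the boundary case $\mu = 2$ into the finite-variance regime, whereas with tails exactly of order $x^{-2}$ the variance diverges logarithmically and the Gaussian limit requires $b_n \asymp \sqrt{n\log n}$ via a slowly varying truncated second moment; this case is irrelevant to the paper's application (which only needs $\mu = 1$) but is part of the statement as quoted. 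Neither issue points to a wrong approach -- the plan is the standard and correct one, with the genuinely hard step accurately identified.
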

\begin{lemma}
Let $X_1, \cdots, X_n$ be iid half-Cauchy RV's. Consider the sequence 
\[
Z_n = \left(\sum_{i=1}^n X_i - \frac{2}{\pi}n \log n\right)/n. 
\]
One has 
\begin{equation}
F_{Z_n}\left(x\right) \Rightarrow G^A\left(x; 1, 1\right). 
\end{equation}
\end{lemma}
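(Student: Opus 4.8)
The plan is to invoke the generalized central limit theorem (GCLT) stated just above directly, so that the entire task reduces to computing the tail exponents of the half-Cauchy distribution and then reading off the resulting stable parameters together with the prescribed centering and scaling sequences.

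First I would record the distribution function of a standard half-Cauchy RV $X \sim \mc{HC}(0,1)$. Since its density vanishes on the negatives, $F_X(x) = 0$ for $x < 0$, while for $x \ge 0$ integrating the density gives $F_X(x) = \frac{2}{\pi}\arctan(x)$. From here the two tail conditions required by the GCLT are immediate: as $x \to \infty$, using $\frac{\pi}{2} - \arctan(x) = \arctan(1/x) \sim 1/x$, one gets $1 - F_X(x) = \frac{2}{\pi}\arctan(1/x) \sim \frac{2}{\pi}\,x^{-1}$, so the right-tail condition holds with $\mu = 1$ and $c = \frac{2}{\pi}$. The left tail is degenerate: $F_X(x) \equiv 0$ for $x < 0$, which matches $F_X(x) \sim d\left|x\right|^{-\mu}$ with $d = 0$.

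Next I would substitute these values into the GCLT's parameter formulas. Since $\mu = 1 \le 2$, the limiting index is $\alpha = \mu = 1$, and the skewness is $\beta = \frac{c-d}{c+d} = \frac{2/\pi}{2/\pi} = 1$, so the limit law is $G^A(x; 1, 1)$, exactly as claimed. For $\mu = 1$ the theorem further prescribes $a_n = \beta(c+d)\, n \log n = \frac{2}{\pi} n \log n$ and $b_n = \frac{\pi}{2}(c+d)\, n = n$. Plugging these into $Z_n = \frac{1}{b_n}\bigl(\sum_{i=1}^n X_i - a_n\bigr)$ reproduces precisely the sequence $\bigl(\sum_{i=1}^n X_i - \frac{2}{\pi} n \log n\bigr)/n$ in the statement, and the GCLT then yields $F_{Z_n}(x) \Rightarrow G^A(x; 1, 1)$.

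The only point needing a little care --- and the closest thing to an obstacle here --- is the degenerate left tail $d = 0$: I would confirm the GCLT remains applicable in this one-sided regime, noting that its hypotheses only require $\mu > 0$ together with $c + d > 0$ (here $c + d = \frac{2}{\pi} > 0$), and that $d = 0$ is exactly the fully-skewed boundary case $\beta = 1$, entirely consistent with $X$ being supported on $[0,\infty)$. Everything else is a direct substitution of constants.
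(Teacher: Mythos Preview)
Your proposal is correct and follows essentially the same route as the paper: compute the right-tail asymptotics of the half-Cauchy CDF to identify $\mu = 1$, $c = \tfrac{2}{\pi}$, $d = 0$, plug these into the GCLT to obtain $\alpha = 1$, $\beta = 1$, $a_n = \tfrac{2}{\pi} n \log n$, $b_n = n$, and read off the limit $G^A(x;1,1)$. The only cosmetic difference is that the paper obtains $1 - F_X(x) \sim \tfrac{2}{\pi} x^{-1}$ via the Taylor series of $\arctan(1/x)$ rather than the identity $\tfrac{\pi}{2} - \arctan(x) = \arctan(1/x)$ you use; both are equally valid.
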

\begin{proof}
We proceed by determining the parameters $\mu, c, d, \alpha, \beta$ and sequences $a_n$ and $b_n$ as appearing in the GCLT above. For any half-Cauchy RV $X$, we have 
\begin{equation}
1 - F_X\left(x\right) = \frac{2}{\pi}\integral{x}{\infty}{\frac{1}{1+x^2}}{dx} = \frac{2}{\pi} \left(\frac{\pi}{2} - \arctan x\right) = \frac{2}{\pi} \arctan \frac{1}{x}. 
\end{equation}
When $x \to \infty$, $\left|\frac{1}{x}\right| \leq 1$. We expand $\arctan \frac{1}{x}$ into an infinite series 
\begin{equation}
1 - F_X\left(x\right) = \frac{2}{\pi} \arctan \frac{1}{x} = \frac{2}{\pi} \sum_{m=0}^{\infty} \frac{\left(-1\right)^m \left(1/x\right)^{2m+1}}{2m+1} \sim \frac{2}{\pi} \frac{1}{x} \; \text{as} \; x \to \infty. 
\end{equation}
So we have $\mu = 1$, $c = \frac{2}{\pi}$. Since $F_X\left(x\right) = 0$ for any $x \leq 0$, $d = 0$. Hence we have 
\begin{equation}
\alpha = \mu = 1, \; \beta = \frac{c+d}{c-d} = 1, 
\end{equation}
with the centering and normalizing sequences 
\begin{equation}
a_n = \beta \left(c+d\right) n \log n = \frac{2}{\pi} n \log n, \; b_n = \frac{\pi}{2}\left(c+d\right) n = n. 
\end{equation}
Hence the sequence $Z_n$ converges weakly to $G^A\left(x; 1, 1\right)$ in distribution. 
\end{proof}

A plot\footnote{We use implementation available online \url{http://math.bu.edu/people/mveillet/html/alphastablepub.html}. The convention used here (designated with subscript ``ST'') is almost identical to Zolotarev's form A (designated with subscript ``A") in~\cite{uchaikin1999chance}, with the following correspondences: $\alpha_{ST} = \alpha_A$, $\beta_{ST} = \beta_A$, $\gamma_{ST} = \lambda_A$, $\delta_{ST} = \gamma_A\lambda_A$. } of $G^A\left(x; 1, 1\right)$ is included in Figure~\ref{fig:stable_g11}, which will be useful to the following proof. 

\begin{figure}[!htbp]
\centering
\includegraphics[width = 0.5\linewidth]{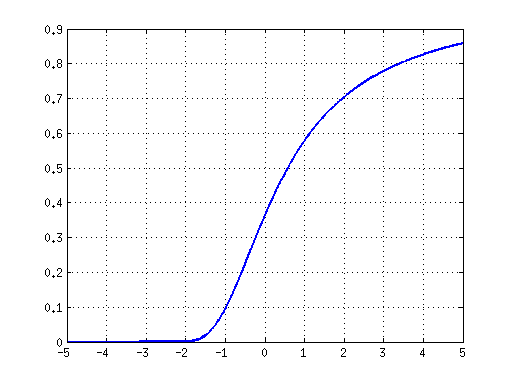}
\caption{Plot of the stable distribution $G^A\left(x; 1, 1\right)$. }
\label{fig:stable_g11}
\end{figure}
\begin{proof}[of Lemma~\ref{lemma:good_subspace}]
By $\ell^1$ stability of Cauchy, we have
\begin{equation}
\norm{\mb P\mb w}{1} = \norm{\sum_{i=1}^d \mb P_{i*}\mb w}{1} \equiv_d \norm{\mb w}{1} \norm{\sum_{i=1}^d \Psi_i}{1} \equiv_d \norm{\mb w}{1} \sum_{i=1}^d \Phi_i, 
\end{equation}
where $\Psi_1, \cdots, \Psi_d$ are iid Cauchy and $\Phi_1, \cdots, \Phi_d$ their corresponding half-Cauchy's. So we are interested in behavior of the sequence 
\begin{equation}
p_d \doteq \prob{\sum_{i=1}^d \Phi_i \leq \frac{2}{\pi} d \log d} = \prob{\frac{\sum_{i=1}^d \Phi_i - \frac{2}{\pi} d \log d}{d} \leq 0}. 
\end{equation}
Again we consider the sequence $S_d = \left(\sum_{i=1}^d \Phi_i - \frac{2}{\pi} d \log d\right)/d$. Since $S_d \Rightarrow G^A\left(x; 1, 1\right)$ as $d \to \infty$, and all stable distributions have continuous distribution function, we have for $x = 0$, 
\begin{equation}
p_d = \prob{S_d \leq 0} \to G^A\left(0; 1, 1\right) \; \text{as} \; d \to \infty. 
\end{equation}
So there exists $N \in \N$, such that $\forall d > N$, $p_d \geq 0.3$, where we observe that the numerical value $G^A\left(0; 1, 1\right)$ is strictly greater than $0.3$. So one can take the numerical constant $c$ in the lemma as 
\begin{equation}
c = \min\left(p_2, \cdots, p_N, 0.3\right) \geq 0. 
\end{equation} 
To see $c > 0$, note that $\forall d \in \N \setminus \left\{1\right\}$, 
\begin{align}
p_d 
& = \prob{\sum_{i=1}^d \Phi_i \leq \frac{2}{\pi} d \log d} \geq \prob{\Phi_i \leq \frac{2}{\pi} \log d, \forall i \in [d]} \\
& = \left(\prob{\Phi_1 \leq \frac{2}{\pi} \log d}\right)^d = \left[\frac{2}{\pi} \arctan\left(\frac{2}{\pi} \log d\right)\right]^d > 0. 
\end{align}
Hence we complete the proof. \qquad 
\end{proof}
\section{Proof of Lemma~\ref{lemma:contract}} \label{app:proof_contract}
We will use $\mb 1_{\mathrm{conditional}}$ as indicator function that assumes either $1$ (when the conditional is asserted) or $0$ (otherwise). 

\begin{proof}[{\bf of Lemma~\ref{lemma:contract}}]
Similar to the above it is enough to bound $\sum_{i=1}^d \Phi_i$. For the integer grid $1 < 2 < \cdots < k$, we have 
\begin{equation}
 \Phi_i \geq  \mb{1}_{\Phi_i \geq 1} + \mb{1}_{\Phi_i \geq 2}  + \cdots +  \mb{1}_{\Phi_i \geq k}
\end{equation}
and hence 
\begin{equation} \label{eq:lemma32_lower_grid}
 \sum_{i=1}^{d} \Phi_i \geq \sum_{j=1}^k \sum_{i=1}^{d} \mb{1}_{\Phi_i \geq j}. 
\end{equation}
Notice that $\vartheta_j \doteq \sum_{i=1}^{d} \mb{1}_{\Phi_i \geq j}$ is the sum of $d$ independent Bernoulli RV's with rate  $\prob{\Phi_1 \geq j}$ and hence $\expect{\vartheta_j} = d\prob{\Phi_1 \geq j}$. An application of the Chernoff bound gives us 
\begin{equation}
\prob{\vartheta_j < \left(1-\delta\right)d\prob{\Phi_1 \geq j}} \leq \exp\left(-\frac{\delta^2d\prob{\Phi_1 \geq j}}{2}\right). 
\end{equation} 
Now suppose the event that $\vartheta_j \geq \left(1-\delta\right)d\prob{\Phi_1 \geq j}$ for all $j \in [k]$ occurs, we would have 
\begin{align}
\quad \sum_{i=1}^{d} \Phi_i 
& \geq \sum_{j=1}^k \sum_{i=1}^{d} \mb{1}_{\Phi_i \geq j} = \sum_{j=1}^{k} \vartheta_j  \quad \text{(by~\eqref{eq:lemma32_lower_grid} and definition of $\vartheta_k$)} \\
& \geq d\left(1-\delta\right)\sum_{j=1}^k \prob{\Phi_1 \geq j} \quad \text{(by our assumption above)}\\
& = d\left(1-\delta\right) \frac{2}{\pi}\sum_{j=1}^k \integral{j}{\infty}{\frac{1}{1+x^2}}{dx} \quad \text{($\Phi_1$ is half-Cauchy)}\\
& = d\left(1-\delta\right) \frac{2}{\pi}\sum_{j=1}^k \arctan\left(1/j\right) \\
& \geq d\left(1-\delta\right) \frac{2}{\pi} \log(k + 1). \quad \text{(by Lemma~\ref{lem:sum_arctangent} below)} 
\end{align}
Hence 
\begin{align}
\prob{\sum_{i=1}^d \Phi_i  < \left(1-\delta\right)d \frac{2}{\pi}\log\left(k + 1\right)} 
& \leq \prob{\exists\; j \in [k], \vartheta_j < \left(1-\delta\right)d\prob{\Phi_1 \geq j}} \\
& \leq \sum_{j=1}^k \exp\left(-\frac{\delta^2d \prob{\Phi_1 \geq j}}{2}\right) \quad \text{(union bound)}\\
& \leq k\exp\left(-\frac{\delta^2 d}{2\pi k}\right). 
\end{align}
It is always true that 
\begin{align}
\prob{\sum_{i=1}^d \Phi_i < \left(1-\delta\right)d \frac{2}{\pi}\log d^{1-\alpha}} 
\leq & \;  \prob{\sum_{i=1}^d \Phi_i \leq \left(1-\delta\right)d \frac{2}{\pi}\log \left(\lfloor d^{1-\alpha} \rfloor + 1\right)}. 
\end{align}
Now by setting $k = \lfloor d^{1-\alpha} \rfloor \geq 1$ for the above bound we derived, we have 
\begin{align}
& \;  \prob{\sum_{i=1}^d \Phi_i \leq \left(1-\delta\right)d \frac{2}{\pi}\log \left(\lfloor d^{1-\alpha} \rfloor + 1\right)} \\
\leq & \; \lfloor d^{1-\alpha} \rfloor \exp\left(-\frac{\delta^2 d}{2\pi} \frac{1}{\lfloor d^{1-\alpha} \rfloor}\right) \\
\leq & \; d^{1-\alpha} \exp\left(-\frac{\delta^2 d}{2\pi} \frac{1}{ d^{1-\alpha}}\right), 
\end{align}
which leads to the result we have claimed. 
\qquad 
\end{proof}
\begin{lemma} \label{lem:sum_arctangent}
$\forall k \in \N$, $\sum_{j=1}^k \arctan\left(1/j\right) \geq \log\left(k+1\right)$. 
\end{lemma}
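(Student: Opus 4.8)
The plan is to reduce the sum inequality to a clean pointwise inequality that telescopes. The natural guess is that each summand individually dominates the corresponding increment of $\log\left(k+1\right)$, namely that
\begin{equation}
\arctan\left(1/j\right) \;\geq\; \log\left(1 + 1/j\right) \;=\; \log\frac{j+1}{j} \quad \text{for every } j \in \N.
\end{equation}
If this holds, then summing over $j = 1, \dots, k$ and using the telescoping product gives
\begin{equation}
\sum_{j=1}^k \arctan\left(1/j\right) \;\geq\; \sum_{j=1}^k \log\frac{j+1}{j} \;=\; \log \prod_{j=1}^k \frac{j+1}{j} \;=\; \log\left(k+1\right),
\end{equation}
which is exactly the claim. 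So the entire lemma rests on the scalar inequality above.

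To establish the scalar inequality, I would prove the slightly more general statement that $\arctan\left(x\right) \geq \log\left(1 + x\right)$ for all $x \in \left[0, 1\right]$, and then specialize to $x = 1/j \in \left(0, 1\right]$. The cleanest route is a derivative comparison: set $f\left(x\right) = \arctan\left(x\right) - \log\left(1+x\right)$, note $f\left(0\right) = 0$, and compute
\begin{equation}
f'\left(x\right) \;=\; \frac{1}{1+x^2} - \frac{1}{1+x} \;=\; \frac{x\left(1-x\right)}{\left(1+x^2\right)\left(1+x\right)},
\end{equation}
which is nonnegative on $\left[0,1\right]$. Hence $f$ is nondecreasing there, so $f\left(x\right) \geq f\left(0\right) = 0$ on $\left[0,1\right]$, giving the desired bound.

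I do not anticipate a genuine obstacle here; the only point requiring a little care is confirming that the relevant argument $x = 1/j$ always lies in the range $\left(0,1\right]$ where the pointwise inequality is valid, which holds since $j \geq 1$. (One should note that the inequality $\arctan\left(x\right) \geq \log\left(1+x\right)$ fails for large $x$, since $\arctan$ is bounded while $\log\left(1+x\right)$ diverges, so restricting to $x \leq 1$ is essential; fortunately the reciprocals $1/j$ never leave this regime.) The telescoping and the monotonicity argument are both routine, so the proof is short and self-contained.
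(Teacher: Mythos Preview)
Your argument is correct and is essentially the same as the paper's: both reduce the claim to the pointwise inequality $\arctan x \geq \log(1+x)$ on $[0,1]$ and then telescope (the paper phrases this as an induction, which is equivalent). If anything, your version is slightly more complete, since you actually verify the scalar inequality via the derivative computation $f'(x) = x(1-x)/[(1+x^2)(1+x)] \geq 0$, whereas the paper merely asserts it.
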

\begin{proof}
It is true for $k=1$ as $\pi/4 > \log(2)$. Now suppose the claim holds for $k-1$, i.e., $\sum_{j=1}^{k-1} \arctan\left(1/j\right) \geq \log\left(k\right)$, we need to show it holds for $k$. It suffices to show $\arctan(1/k) \geq \log\left(1+1/k\right)$. This follows from the fact that $\arctan x \geq \log \left(1 + x\right)$ for $x \in \left[0, 1\right]$. \qquad 
\end{proof}

We next show in some sense the bound we obtained in Lemma~\ref{lemma:contract} above cannot be significantly improved. 
\begin{lemma}
For any $d \in \N$ and any $\beta \in \left(0, 1\right)$ such that $d^{\beta} \geq 2$, if $\Phi_1, \cdots, \Phi_d$ are iid Half-Cauchy, then 
\begin{equation}
\prob{\sum_{i=1}^d \Phi_i \leq \frac{2}{\pi} \beta d \log d + O\left(d\right)} \geq \frac{\exp\left(-Cd^{1-\beta}\right)}{1 + \log d}, 
\end{equation}
where $C$ is some numerical constant. 
\end{lemma}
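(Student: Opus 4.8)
The plan is to follow the same reduction used throughout the appendix: since the statement is purely about $\sum_{i=1}^d \Phi_i$ with $\Phi_i$ iid half-Cauchy, I work directly with this sum (no projection or $\ell^1$-stability step is needed here). The two factors in the target bound, namely $\exp\left(-Cd^{1-\beta}\right)$ and $1/\left(1+\log d\right)$, arise from two genuinely separate mechanisms, and I would combine them multiplicatively by conditioning on a single truncation event. Concretely, introduce the truncation level $M = d^{\beta}$ (so $M \ge 2$ by hypothesis) and the event $A = \left\{\Phi_i \le M \text{ for all } i \in [d]\right\}$. The heavy upper tail of the half-Cauchy is exactly what makes $A$ costly, and this cost is what produces the exponential factor.

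First I would lower bound $\prob{A}$. Using the half-Cauchy tail bound established above, $\prob{\Phi_1 > M} \le \frac{2}{\pi M}$, together with the elementary inequality $\log\left(1-x\right) \ge -2x$ valid on $\left[0,1/2\right]$ (applicable since $\frac{2}{\pi M} \le \frac{1}{\pi} < \frac12$ for $M \ge 2$), gives $\prob{A} = \prob{\Phi_1 \le M}^d \ge \left(1 - \frac{2}{\pi M}\right)^d \ge \exp\left(-\frac{4}{\pi} d^{1-\beta}\right)$, so the factor $\exp\left(-Cd^{1-\beta}\right)$ holds with $C = 4/\pi$. Next I would control the conditional mean. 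A direct integration yields $\expect{\Phi_1 \mid \Phi_1 \le M} = \frac{\log\left(1+M^2\right)}{2\arctan M}$, hence $\mu \doteq \expect{\sum_{i=1}^d \Phi_i \mid A} = d\,\frac{\log\left(1+M^2\right)}{2\arctan M}$. As $M \to \infty$ this ratio tends to $\frac{2}{\pi}\log M$; the conditioning inflation $1/\prob{\Phi_1 \le M} = 1 + O\left(1/M\right)$ contributes only a correction of order $d\,\frac{\log M}{M}$, which is $O\left(d\right)$ uniformly over $M \ge 2$ because $\log M/M$ is bounded. Substituting $\log M = \beta \log d$ gives $\mu = \frac{2}{\pi}\beta d \log d + O\left(d\right)$.

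Finally I would apply Markov's inequality under the conditional law. For the target value $T = \mu + \tau$ with slack $\tau = \Theta\left(d\right)$, nonnegativity of $\sum_i \Phi_i$ gives $\prob{\sum_{i=1}^d \Phi_i \le T \mid A} \ge 1 - \mu/T = \frac{\tau}{\mu + \tau}$. Since $\mu = \Theta\left(d \log d\right)$ while $\tau = \Theta\left(d\right)$, this conditional probability is $\Theta\left(1/\log d\right) \ge \frac{c}{1 + \log d}$ for a suitable constant $c$, while $T = \frac{2}{\pi}\beta d \log d + O\left(d\right)$ is exactly of the claimed form. Multiplying the two estimates, $\prob{\sum_{i=1}^d \Phi_i \le T} \ge \prob{A}\,\frac{c}{1+\log d} \ge \frac{\exp\left(-C' d^{1-\beta}\right)}{1 + \log d}$, where the constant $c$ is absorbed into $C'$ using $d^{1-\beta} \ge 1$.

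The main obstacle is the bookkeeping of the conditional mean: one must verify that conditioning on $A$ perturbs $\mu$ only at the $O\left(d\right)$ level and not at the $O\left(d\log d\right)$ level, so that the leading coefficient $\frac{2}{\pi}\beta$ is preserved exactly. This requires checking that both the inflation factor $1/\prob{\Phi_1 \le M}$ and the lower-order terms of $\log\left(1+M^2\right)$ stay negligible against $\frac{2}{\pi}\beta d\log d$ across the entire admissible range $d^{\beta} \ge 2$ — including the boundary regime where $M$ is a small constant and the scales $\Theta\left(d\log d\right)$ and $\Theta\left(d\right)$ coincide. The second delicate point is calibrating the Markov slack $\tau$ so that $T$ keeps the sharp constant $\frac{2}{\pi}\beta$ while the surviving probability is exactly of order $1/\log d$; this trade-off is precisely what forces the $1/\left(1+\log d\right)$ factor into the statement, and it is what makes the bound match the upper tail estimate of Lemma~\ref{lemma:contract} up to this factor.
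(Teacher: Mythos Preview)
Your proposal is correct and follows essentially the same route as the paper: condition on the truncation event $\{\Phi_i \le d^{\beta}\ \forall i\}$, lower bound its probability via $(1-p_k)^d \ge \exp(-Cd^{1-\beta})$, bound the conditional mean of $\sum_i \Phi_i$ by $\frac{2}{\pi}\beta d\log d + O(d)$, and finish with Markov's inequality using slack of order $d/\log d$ relative to the mean. The only cosmetic difference is that the paper bounds the conditional mean through the integer-grid indicator decomposition $\Phi_i \le \sum_{j=0}^{k}\mb 1_{\Phi_i \ge j}$ together with $\expect{\vartheta_j \mid \vartheta_k = 0} \le \expect{\vartheta_j}$ (which sidesteps the inflation-factor bookkeeping), whereas you compute the truncated mean $\frac{\log(1+M^2)}{2\arctan M}$ directly; both yield the same $\frac{2}{\pi}\beta d\log d + O(d)$.
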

\begin{proof}
Let $k = d^{\beta}$. Note that when $\Phi_i \leq k$, we have 
\begin{equation} \label{eq:lemmac1_upper_grid}
\Phi_i \leq \mb 1_{\Phi_i \geq 0} + \mb 1_{\Phi_i \geq 1} + \cdots + \mb 1_{\Phi_i \geq k}. 
\end{equation}
We again define $\vartheta_j \doteq \sum_{i=1}^d \mb 1_{\Phi_i \geq j}$ and $p_j \doteq \prob{\Phi_1 \geq j}$, then we have
\begin{equation}
\prob{\vartheta_k = 0} = \left(1-p_k\right)^d \geq \exp\left(2dp_k\log(1/2)\right) \geq \exp\left(-Cd^{1-\beta}\right), 
\end{equation}
where the second inequality above follows from the fact $\log\left(1-y\right) \geq 2y \log\left(1/2\right)$ for $y \in \left[0, 1/2\right]$. Moreover we note that 
\begin{equation} \label{eq:lemmac1_expect_ieq}
\expect{\vartheta_j \; \vert \; \vartheta_k = 0} \leq \expect{\vartheta_j} = dp_j, 
\end{equation}
so we have 
\begin{align}
 \; & \prob{\sum_{i=1}^d \Phi_i > t  \; \vert\; \vartheta_k = 0} \\
 \leq \; & \prob{\sum_{i=1}^d \sum_{j=0}^k \mb 1_{\Phi_i \geq k} > t  \; \vert\; \vartheta_k = 0} \quad \text{($\vartheta_k = 0$ implies $\Phi_i \leq k$ for all $i$, and~\eqref{eq:lemmac1_upper_grid})}\\
\leq \; & \prob{d + \sum_{j=1}^k \vartheta_j > t \; \vert \; \vartheta_k = 0} \quad \text{(exchange summation order and substitute into $\vartheta_k$)} \\
\leq \; & \frac{d + \sum_{j=1}^k \expect{\vartheta_j \; \vert \;  \vartheta_k = 0}}{t} \quad \text{(by Markov inequality and linearity of expectation)}\\
\leq \; & \frac{d + d \sum_{j=1}^k p_j}{t} \quad \text{(by~\eqref{eq:lemmac1_expect_ieq})}\\
\leq \; & \frac{d + d/2 + 2d/\pi \integral{1}{k}{x^{-1}}{dx}}{t} \text{(substitute $p_j$ and upper bound finite sum by integral)}\\
= \; & \frac{\frac{2}{\pi}\beta d\log d + \frac{3}{2}d}{t}. 
\end{align}
We set 
\begin{equation}
t = \left(1 + \frac{1}{\log d}\right) \left(\frac{2}{\pi}\beta d\log d + \frac{3}{2}d\right) = \frac{2}{\pi}\beta d\log d + O\left(d\right). 
\end{equation}
Then we have 
\begin{align}
\prob{\sum_{i=1}^d \Phi_i \leq t} 
\geq \; & \prob{\vartheta_k = 0} \prob{\sum_{i=1}^d \Phi_i \leq t \; \vert \; \vartheta_k = 0} \\
\geq \; & \exp\left(-C d^{1-\beta}\right) \left(1-\frac{\log d}{1+\log d}\right), 
\end{align}
yielding the result. \qquad 
\end{proof}
\section{Proof of Lemma~\ref{lemma:lipschitz}} \label{app:proof_lips}

We will need the definition of well-conditioned basis and some existence lemma to proceed. 
\begin{definition}[Well-Conditioned Basis for Subspaces~\cite{dasgupta2008sampling}]
Let $\mc S$ be a $r$-dimensional linear subspace in $\R^D$. For $p \in [1, \infty)$, let $\|\cdot\|_q$ be the dual norm of $\|\cdot\|_p$. Then a matrix $\mb U \in \R^{D\times r}$ is $\left(\alpha, \beta, p\right)$ -well-conditioned basis for $\mc S$ if: (1) columns of $\mb U$ are linearly independent; (2) $\|\mb U\|_p \leq \alpha$; and (3) $\forall \mb z \in \R^r$, $\|\mb z\|_q \leq \beta \|\mb U \mb z\|_p$. $\mb U$ is said to be a $p$-well-conditioned basis for $\mc S$ if $\alpha$ and $\beta$ are $r^{\mc{O}\left(1\right)}$ (i.e., polynomial in $r$) and independent of $D$.   
\end{definition}

The next lemma asserts the existence of $1$-well-conditioned basis for any $r$-dimensional subspaces, justified by the existence of the Auerbach basis.

\begin{lemma}[Existence of $1$-Well-Conditioned Basis, ~\cite{dasgupta2008sampling}]\label{lemma:exist_wcb}
For any linear subspace $\mc S$ of dimension $r$, there exists a $\left(r, 1, 1\right)$-well-conditioned basis. 
\end{lemma}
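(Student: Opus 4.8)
The plan is to realize the well-conditioned basis as an Auerbach basis of $\mc S$, taken with respect to the $\ell^1$ norm it inherits from $\R^D$, and then to verify the three defining conditions directly, obtaining $\alpha = r$, $\beta = 1$, $p = 1$.

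First I would recall Auerbach's lemma in the form needed here: for the $r$-dimensional normed space $(\mc S, \norm{\cdot}{1})$ there exist a basis $\mb u_1, \ldots, \mb u_r \in \mc S$ and dual (coordinate) functionals $f_1, \ldots, f_r$ on $\mc S$ with $f_i(\mb u_j) = \delta_{ij}$, such that $\norm{\mb u_j}{1} = 1$ for every $j$ and $\norm{f_i}{\infty} \doteq \sup\{\, |f_i(\mb v)| : \mb v \in \mc S,\ \norm{\mb v}{1}\le 1\,\} = 1$ for every $i$. I would justify this by the classical volume-maximization argument: after fixing a linear identification $\mc S \cong \R^r$, the set of $r$-tuples drawn from the unit $\ell^1$ ball of $\mc S$ is compact (finite dimension), so the continuous map sending a tuple to the absolute value of the determinant of its coordinate matrix attains a maximum $V > 0$ at some $(\mb u_1,\ldots,\mb u_r)$. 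Each $\mb u_j$ then satisfies $\norm{\mb u_j}{1} = 1$, since otherwise rescaling it would strictly increase the determinant.

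Next I would extract the dual functionals and bound their norms. Writing any $\mb v \in \mc S$ as $\mb v = \sum_j f_j(\mb v)\,\mb u_j$ in the chosen basis, multilinearity of the determinant shows that replacing $\mb u_i$ by $\mb v$ changes the volume to $|f_i(\mb v)|\,V$. If $\norm{\mb v}{1}\le 1$ this new tuple still lies in the unit ball, so $|f_i(\mb v)|\,V \le V$, giving $|f_i(\mb v)|\le 1$ and hence $\norm{f_i}{\infty}\le 1$; the reverse bound is immediate from $|f_i(\mb u_i)| = 1$. With these in hand I would form $\mb U \in \R^{D\times r}$ whose columns are $\mb u_1,\ldots,\mb u_r$ and check the conditions: the columns are independent (they form a basis), $\norm{\mb U}{1} = \sum_{j=1}^r \norm{\mb u_j}{1} = r$, and for any $\mb z \in \R^r$, setting $\mb v = \mb U\mb z = \sum_j z_j \mb u_j$ we have $z_i = f_i(\mb v)$, so $|z_i| \le \norm{f_i}{\infty}\norm{\mb v}{1} = \norm{\mb U \mb z}{1}$ for each $i$; taking the maximum over $i$ yields $\norm{\mb z}{\infty}\le \norm{\mb U\mb z}{1}$. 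These are exactly conditions (1)--(3) with $\alpha = r$, $\beta = 1$, $p = 1$ (so $q = \infty$).

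The verification step is pure bookkeeping once the balanced basis is in hand, so the only real content is Auerbach's lemma itself. I therefore expect the main obstacle to be the compactness/volume-maximization argument -- specifically, arguing cleanly that optimality simultaneously forces $\norm{\mb u_j}{1}=1$ and, via multilinearity, $\norm{f_i}{\infty}=1$. Since $\mc S$ is finite-dimensional this is entirely standard, and one may alternatively cite Auerbach's lemma directly from the Banach-space literature and pass straight to the verification.
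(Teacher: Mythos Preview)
Your proposal is correct and matches the paper's own treatment: the paper does not give a proof but simply cites the result from~\cite{dasgupta2008sampling} and remarks that it is ``justified by the existence of the Auerbach basis,'' which is precisely the construction you carry out. Your verification of conditions (1)--(3) is exactly what is needed, and the volume-maximization argument for Auerbach's lemma is the standard one.
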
 

\begin{proof}[{\bf of Lemma~\ref{lemma:lipschitz}}]
Fix a $1$-well-conditioned basis $\mb A$ for $\mc S$. Suppose that 
\begin{equation} \label{eqn:desired-ineq} 
\sum_{j=1}^{r+1} \norm{\mb P\mb A_{*j}}{1} \leq t \sum_{j=1}^{r+1} \norm{\mb A_{*j}}{1}.
\end{equation} 
Since any vector $\mb w \in \mc S$ can be written as $\mb w = \mb A \mb x$ for some $\mb x \in \mathbb{R}^{r + 1}$, 
\begin{align}
\norm{\mb P\mb w}{1} 
& = \norm{\mb P\mb A \mb x}{1} = \norm{\mb P\sum_{j=1}^{r+1} \mb A_{*j} x_j}{1} \leq \sum_{j=1}^{r+1} \left|x_j\right| \norm{\mb P \mb A_{*j}}{1} \\
& \leq \norm{\mb x}{\infty} \sum_{j=1}^{r+1} \norm{\mb P \mb A_{*j}}{1} 
\leq \norm{\mb x}{\infty} t\sum_{j=1}^{r+1} \norm{\mb A_{*j}}{1} \\
& \leq \norm{\mb A \mb x}{1} t\left(r+1\right) = t \left(r+1\right) \norm{\mb w}{1}, 
\end{align}
where the last inequality follows from the definition of $1$-well-conditioned basis. Hence, whenever \eqref{eqn:desired-ineq} holds, $L \le t ( r + 1)$, and so
\begin{equation}
\mathbb{P}\left[ L > t (r+1) \right] \le \mathbb{P} \left[ \sum_{j=1}^{r+1} \norm{\mb P\mb A_{*j}}{1} \ge t \sum_{j=1}^{r+1} \norm{\mb A_{*j}}{1} \right].
\end{equation}
We finish by upper bounding the probability on the right hand side, which we define as $\varpi$. For all $i \in [d],  j \in [r+1] $, let $\Psi_{i,j} = \left|\mb P_{i*} \mb A_{*j}\right|/\norm{\mb A_{*j}}{1}$. Obviously $\Psi_{i, j}$'s are all Half-Cauchy RV's and also $\Psi_{i,j}$'s indexed by the same $j$ are independent. Now 
\begin{align}
   \varpi = \prob{\sum_{j=1}^{r+1} \norm{\mb P\mb A_{*j}}{1} \ge t \sum_{j=1}^{r+1} \norm{\mb A_{*j}}{1}} 
= \prob{\sum_{j=1}^{r+1}\left(\norm{\mb A_{*j}}{1} \sum_{i=1}^d  \Psi_{i, j}\right)\geq t \sum_{j=1}^{r+1} \norm{\mb A_{*j}}{1}}. 
\end{align}
Next we partition the probability space and relax a bit to obtain 
\begin{align}
\varpi = \; & \prob{\sum_{j=1}^{r+1}\left(\norm{\mb A_{*j}}{1} \sum_{i=1}^d  \Psi_{i, j}\right) \geq t \sum_{j=1}^{r+1} \norm{\mb A_{*j}}{1} \; \vert \; \exists \; \Psi_{i, j} > B}\prob{\exists \; \Psi_{i, j} > B} +  \nonumber \\
& \quad \prob{\sum_{j=1}^{r+1}\left(\norm{\mb A_{*j}}{1} \sum_{i=1}^d  \Psi_{i, j}\right) \geq t \sum_{j=1}^{r+1} \norm{\mb A_{*j}}{1} \; \vert \; \Psi_{i, j} \leq B, \forall\; i, j} \prob{\Psi_{i, j} \leq B, \forall\; i, j} \\
\leq \; & \prob{\exists \; \Psi_{i, j} > B} +  \nonumber \\
& \quad \prob{\sum_{j=1}^{r+1}\left(\norm{\mb A_{*j}}{1} \sum_{i=1}^d  \Psi_{i, j}\right) \geq t \sum_{j=1}^{r+1} \norm{\mb A_{*j}}{1} \; \vert \; \Psi_{i, j} \leq B, \forall\; i, j} \prob{\Psi_{i, j} \leq B, \forall\; i, j}. 
\end{align}
Applying union bound to the first term and Markov inequality to the conditional probability in the second term, we have 
\begin{align}
\varpi \leq \; & \frac{2d\left(r+1\right)}{\pi B} + \frac{\sum_{j=1}^{r+1}\left(\norm{\mb A_{*j}}{1} \expect{\sum_{i=1}^d  \Psi_{i, j} \; \vert \; \Psi_{i, j} \leq B, \forall\; i, j}\right)}{t \sum_{j=1}^{r+1} \norm{\mb A_{*j}}{1}} \prob{\Psi_{i, j} \leq B, \forall\; i, j} \\
= \; & \frac{2d\left(r+1\right)}{\pi B} +  \frac{\left(\sum_{j=1}^{r+1}\norm{\mb A_{*j}}{1}\right) \expect{\sum_{i=1}^d  \Psi_{i, j} \; \vert \; \Psi_{i, j} \leq B, \forall\; i, j}}{t \sum_{j=1}^{r+1} \norm{\mb A_{*j}}{1}} \prob{\Psi_{i, j} \leq B, \forall\; i, j}\\
= \; & \frac{2d\left(r+1\right)}{\pi B} +  \frac{d\; \expect{\Psi_{1, 1} \; \vert \; \Psi_{1, 1} \leq B}}{t } \prob{\Psi_{i, j} \leq B, \forall\; i, j}, 
\end{align}
where we in the last step we take $j = 1$ with loss of generality as $\Psi_{i, j}$'s are iid half Cauchy for any fixed $j$. We now define a new RV $\Psi_{1, 1}^B$ as: 
\begin{equation}
\Psi_{1, 1}^B = \begin{cases}
\Psi_{1, 1}  & \Psi_{1, 1} \leq B \\
0    & \Psi_{1, 1} > B
\end{cases}, 
\end{equation}
and note the fact that $\expect{\Psi_{1, 1} \; \vert \; \Psi_{1, 1} \leq B} = \expect{\Psi_{1, 1}^B }/\prob{\Psi_{1, 1} \leq B}$, hence 
\begin{align}
\varpi \leq \; \frac{2d\left(r+1\right)}{\pi B} +  \frac{d\; \expect{\Psi_{1, 1}^B }}{t \prob{\Psi_{1, 1} \leq B} } \prob{\Psi_{i, j} \leq B, \forall\; i, j} 
\leq \; \frac{2d\left(r+1\right)}{\pi B} + \frac{d\; \expect{\Psi_{1, 1}^B }}{t}, 
\end{align}
where we have used the fact $\prob{\Psi_{i, j} \leq B, \forall\; i, j} \leq  \prob{\Psi_{1, 1} \leq B}$. 
We arrive at the claimed results by substituting the expectation 
\begin{equation}
\expect{\Psi_{1, 1}^B} = \frac{2}{\pi} \integral{0}{B}{\frac{x}{1+x^2}}{dx} = \frac{1}{\pi} \left. \log\left(1+x^2\right)\right|_{0}^B = \frac{1}{\pi} \log\left(1+B^2\right). 
\end{equation}
This completes the proof. 
\end{proof}
\section{Summing up: Proof of Theorem~\ref{thm:main}} \label{app:proof_main}

\begin{proof}[{\bf of Theorem~\ref{thm:main}}] By Lemma \ref{lemma:good_subspace}, with probability at least $c$,
\begin{eqnarray}
d_{\ell^1}(\mb P \mb q, \mb P \mc S_1)\; \le\;  \left(\frac{2}{\pi} d \log d\right)  d_{\ell^1}(\mb q, \mc S_1).
\end{eqnarray}
We apply Lemmas \ref{lemma:contract} and \ref{lemma:lipschitz} to obtain a probabilistic lower bound on $d_{\ell^1}( \mb P \mb q, \mb P \mc S_j )$ for each $j = 2 \dots n$. As above, let $\tilde{\mc S}_j = \mc S_j \oplus \{ \mb q \}$ denote the direct sum of $\mc S_j$ and the query point. Let $N_j$ denote an $\eps$-net for the intersection of $\tilde{\mc S}_j$ with the $\ell^1$ ball, with size at most $( 3 / \eps )^{r+1}$. Standard arguments guarantee the existence of such a net. 

Applying Lemma \ref{lemma:contract} to each of the $N_j$, we obtain that
\begin{equation}
\norm{\mb P \mb w}{1} \;\ge\; (1-\alpha)(1-\delta) \frac{2}{\pi} d \log d 
\end{equation}
for every $\mb w \in N_j$ and every $j \in \{ 2, \dots, n \}$, simultaneously, with probability at least 
\begin{equation}
1 - \left(n-1\right) \left(\frac{3}{\eps}\right)^{r+1} d^{1-\alpha} \exp\left(-\frac{\delta^2}{2\pi}d^{\alpha}\right).
\end{equation}
At the same time, applying Lemma \ref{lemma:lipschitz} to each $\tilde{S}_j$, we obtain that 
\begin{equation}
\norm{ \mb P \mb w }{1} \;\le\; t ( r+ 1 ) \norm{\mb w }{1}
\end{equation}
simultaneously for every $\mb w \in \tilde{\mc S}_j$, for each $j \in \{ 2, \dots, n \}$, with probability at least 
\begin{equation}
1 - \frac{2d\left(r+1\right)\left(n-1\right)}{\pi B} {-} \frac{2d\left(n-1\right)}{\pi t} \log \sqrt{1+ B^2}.
\end{equation}
Here, $B > 0$ can be chosen freely to obtain the tightest possible bound on the probability of failure. For notational convenience, write 
\begin{equation}
\xi = \frac{ t( r+ 1) \eps }{\frac{2}{\pi} d \log d }, 
\end{equation}
and notice that on the intersection of the good events introduced above,   for every $\mb h \in \tilde{\mc S}_j$ with $\norm{\mb h}{1} \le \eps$, 
\begin{equation}
\norm{\mb P \mb h }{1} \;\le\; \left(\frac{2}{\pi} d \log d \right)\, \xi.
\end{equation}
Consider an arbitrary $\mb w \in \mc S_j$. We can write
\begin{equation}
\frac{\mb q - \mb w}{\norm{\mb q - \mb w}{1}} = \mb z + \mb h, 
\end{equation}
with $\mb z \in N_j$, $ \mb h \in \tilde{\mc S}_j$, and $\norm{\mb h}{1} \le \eps$. Applying $\mb P$ to both sides and using the triangle inequality, we obtain that 
\begin{eqnarray}
\norm{\mb P \mb q - \mb P \mb w}{1} &\ge& \left( \norm{\mb P \mb z }{1} - \norm{\mb P \mb h }{1} \right) \norm{\mb q - \mb w }{1} \nonumber \\
&\ge& \left( \frac{2}{\pi} d \log d \right) \, \left( ( 1- \alpha ) ( 1- \delta ) - \xi \right) \, \norm{\mb q - \mb w }{1}. 
\end{eqnarray}
Hence, on the intersection of the good events introduced above, for each $j = 2 \dots n$, 
\begin{eqnarray}
d_{\ell^1}(\mb P \mb q, \mb P \mc S_j) &\ge& \left( \frac{2}{\pi} d \log d \right) \, \left( ( 1- \alpha ) ( 1- \delta ) - \xi \right) \, d_{\ell^1}(\mb q, \mc S_j ) \nonumber \\
&\ge& \left( \frac{2}{\pi} d \log d \right) \, \left( ( 1- \alpha ) ( 1- \delta ) - \xi \right) \, \eta \, d_{\ell^1}(\mb q, \mc S_1 ) \nonumber \\
&\ge& \left( ( 1- \alpha ) ( 1- \delta ) - \xi \right) \, \eta \, d_{\ell^1}(\mb P \mb q, \mb P \mc S_1 ).
\end{eqnarray}
So, as long as
\begin{equation} \label{eqn:success-condition}
(1-\alpha)(1-\delta) - \xi > 1/\eta,
\end{equation}
the algorithm will succeed, except on an event of probability at most 
\begin{align}
\phi \;\doteq\; 1 - c & + \left(n-1\right) \left(\frac{3}{\eps}\right)^{r+1} d^{1-\alpha} \exp\left(-\frac{\delta^2}{2\pi}d^{\alpha}\right) \nonumber \\
& + \frac{2d\left(r+1\right)\left(n-1\right)}{\pi B} + \frac{2d\left(n-1\right)}{\pi t} \log \sqrt{1+ B^2}. \label{eqn:failure-prob}
\end{align}
Our remaining task is to show that with the specified choice of $d$, \eqref{eqn:success-condition} is satisfied, and the failure probability $\phi$ in \eqref{eqn:failure-prob} is bounded away from one by a constant. 

Set $\zeta = 1 - \frac{1}{\eta} - \alpha$. By assumption, $\zeta > 0$. We will set $\delta = \zeta / 3$, and ensure that $\xi \le \zeta / 3$, which will imply that 
\begin{equation}
(1-\alpha)(1-\delta) - \xi  \;\ge\; 1 - \alpha - 2 \zeta / 3 \;>\; 1/ \eta,
\end{equation}
ensuring that \eqref{eqn:success-condition} is satisfied. We choose 
\begin{equation}
B = \frac{4}{c} \left(\frac{2 d \left(r+1\right)\left(n-1\right)}{\pi}\right), \quad t = \frac{4}{c}\left( \frac{2}{\pi} d\left(n-1\right)\right) \cdot 2 \cdot 4 \cdot \log \left[\max\left(\frac{8}{c \pi}, d, r+1, n-1\right)\right]. \nonumber
\end{equation}
These choices ensure that the quantity $2 d (r+1) (n-1) / \pi B$ in \eqref{eqn:failure-prob} is at most $c/4$. Moreover, using that $B \ge 16 / \pi \ge (1+\sqrt{5})/2$ and the crude bound $\log \sqrt{1+B^2} \le 2 \log B $ for all $B \ge (1 + \sqrt{5})/2$, we can show that the final term in \eqref{eqn:failure-prob} is at most $c / 4$, giving 
\begin{eqnarray}
\phi &\le& 1 - \frac{c}{2} + \left(n-1\right) \left(\frac{3}{\eps}\right)^{r+1} d^{1-\alpha} \exp\left(-\frac{\delta^2}{2\pi}d^{\alpha}\right)  \nonumber \\
 &=& 1 - \frac{c}{2} {+} \exp\left( -\frac{\zeta^2}{ 18 \pi} d^\alpha + (1-\alpha) \log d + (r+1) \log( 3/\eps) + \log( n-1) \right).
\end{eqnarray}
It remains to choose $\eps$ and bound the exponential term above. We set 
\begin{equation}
\eps \;=\; \left(\frac{2}{\pi} d \log d\right) \; \frac{\zeta}{3} \; \frac{1}{t (r + 1)}.
\end{equation}
This ensures that $\xi = \frac{t (r+1)\eps}{(2/\pi) d \log d} \le \frac{\zeta}{3}$, as promised. Plugging in for $t$, we obtain 
\begin{equation}
\eps \;\ge\; \frac{ C_1 \, c \, \zeta \, \log d }{ ( n -1 ) ( r + 1 ) \log \left[ \max \left( d, r + 1, n - 1 \right) \right]}, 
\end{equation}
where $C_1$ is a numerical constant. Using the assumption that $n > r$, we can simplify this bound to 
\begin{equation}
\eps \;\ge\; \frac{C_2 \, c \, \zeta}{n^2 \, \log n },
\end{equation}
with $C_2$ numerical. The exponential term in \eqref{eqn:failure-prob} is then at most 
\begin{equation}
\exp\left( - C_3 \zeta^2  d^\alpha + (1-\alpha) \log d +C_4 r \log\left( \frac{n}{c \zeta } \right) \right)
\end{equation}
To ensure that this term is bounded by $c / 4$, and hence the probability of failure is bounded away from one by a constant, it suffices to ensure that 
\begin{equation}
d \;\ge\; C_5 \left( \frac{\log d + r \log \left( \frac{n
}{c \zeta} \right) + \log\left( \frac{4}{c} \right)}{ \zeta^2 } \right)^{1/\alpha}.
\end{equation}

\end{proof}
\section{Proof of Theorem~\ref{thm:lower_bound}} \label{app:proof_lower_bound}

From a high level, our proof proceeds by exploiting the approximate subspace search to solve sparse recovery problem. Invoking some known lower bounds for sparse recovery problem, we arrive at the bound as stated in Theorem~\ref{thm:lower_bound}. We first record/show some useful results. 
\begin{proposition}[Number of Measurements for Stable Sparse Recovery, Theorem 5.2 \cite{do2010lower}]\label{prop:sparse_random_lower}
For any constant $C \geq 1$, if any distribution $\mu$ over $\R^{m \times t}$ and any algorithm $\msc A$ obey: $\forall \mb x \in \R^t$ and $\mb A \sim \mu$, $\hat{\mb x} = \msc A \left(\mb A \mb x\right)$ and 
\begin{equation} \label{eq:prop_sparse_random_lower}
\norm{\mb x - \hat{\mb x}}{1} \leq C \min_{\norm{\mb x'}{0} \leq k} \norm{\mb x - \mb x'}{1} 
\end{equation}
with probability at least $p > 3/4$, we must have $m \geq C_1\frac{1}{2+2\log\left(2C+3\right)}k \log t/k$ for some numerical constant $C_1$. 
\end{proposition}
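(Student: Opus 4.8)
This is a known lower bound on the measurement complexity of stable ($\ell^1/\ell^1$) sparse recovery, and I would prove it by an information-theoretic, one-way communication argument. The plan is to exhibit an exponentially large family of $k$-sparse signals that any $C$-approximate scheme is forced to disambiguate, and then argue that the $m$ measurements cannot carry enough information to do so unless $m$ is as large as claimed. The dependence on $C$ enters precisely because each (suitably discretized) measurement conveys only $O(\log(2C+3))$ usable bits.

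First I would build the hard instance. Using a greedy/Gilbert--Varshamov packing, construct a set $\mc F\subseteq\{0,1\}^t$ of $k$-sparse binary vectors with $\log\left|\mc F\right|\ge c_0\,k\log(t/k)$ and pairwise Hamming (hence $\ell^1$) distance at least $k/2$. Since a codeword $\mb x\in\mc F$ is exactly $k$-sparse, $\min_{\norm{\mb x'}{0}\le k}\norm{\mb x-\mb x'}{1}=0$; to create the slack the stable guarantee needs (and, crucially, to limit information), I would perturb each codeword to $\mb z=\mb x+\mb e$ with the scale of $\mb e$ chosen so that $(1+C)\norm{\mb e}{1}<\tfrac14 k$. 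Then for any $\hat{\mb x}$ satisfying \eqref{eq:prop_sparse_random_lower} for $\mb z$, the tail obeys $\min_{\norm{\mb x'}{0}\le k}\norm{\mb z-\mb x'}{1}\le\norm{\mb e}{1}$, so the triangle inequality gives $\norm{\mb x-\hat{\mb x}}{1}\le(1+C)\norm{\mb e}{1}<\tfrac14 k$, which is below half the minimum codeword distance; rounding $\hat{\mb x}$ to the nearest codeword therefore returns $\mb x$ exactly. Thus, with probability at least $p$, the pair $(\mb A\mb z,\msc A)$ decodes the uniformly random codeword $\mb x$.

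Second I would do the information accounting. Without loss of generality I orthonormalize the rows of $\mb A$ (the algorithm can absorb the change of basis), so the measurement map is a contraction and $\mb A\mb z$ has bounded dynamic range; I then discretize each of the $m$ coordinates onto a grid fine enough that the induced signal-space perturbation stays within the decoding radius above. Each coordinate costs $O(\log(2C+3))$ bits, for a message of $O\bigl(m\log(2C+3)\bigr)$ bits. On the other hand, since decoding succeeds with probability $p>3/4$, Fano's inequality forces the message to carry at least $\tfrac34\log\left|\mc F\right|-1=\Omega\!\left(k\log(t/k)\right)$ bits. Combining the two estimates yields $m\ge C_1\,\dfrac{k\log(t/k)}{2+2\log(2C+3)}$, the claimed bound, with the explicit denominator being exactly the per-measurement bit budget.

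The main obstacle is driving the per-measurement cost down to $O(\log(2C+3))$ rather than the $O\bigl(\log(Ct/k)\bigr)$ one gets from a single-scale discretization (equivalently, from a naive Gaussian-channel capacity bound): such an argument loses a spurious $\log t$ factor, because measurement coordinates have magnitude $\sim\sqrt k$ while the precision needed to stay inside the decoding radius is $\sim k/(C\sqrt{tm})$. I would remove this factor by replacing the single code with a multi-scale direct sum of $\Theta(\log(t/k))$ geometrically weighted copies, reducing each level to an \emph{augmented indexing} instance in which only a constant number of bits per measurement are needed, and letting amortization across levels supply the $\log(t/k)$ factor. The delicate calculation is verifying that the tails contributed by the coarser levels telescope, so that at every level the stable-recovery slack still exceeds that level's decoding radius; this is where the precise form of the constant $2C+3$ is pinned down.
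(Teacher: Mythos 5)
The paper does not actually prove this proposition: it is imported verbatim from Theorem 5.2 of \cite{do2010lower}, with the explicit $\frac{1}{2+2\log(2C+3)}$ dependence simply read off from that paper's proof, so the only meaningful comparison is against the Do Ba--Indyk--Price--Woodruff argument itself. Your sketch is a faithful reconstruction of exactly that argument --- the Gilbert--Varshamov family of $k$-sparse binary codewords with minimum distance $k/2$ and $\log\left|\mc F\right| = \Omega(k\log(t/k))$, the decoding-radius step $(1+C)\norm{\mb e}{1} < \tfrac14 k$ forcing exact recovery of the codeword, the orthonormal-row discretization of $\mb A\mb z$, and the multi-scale augmented-indexing reduction with geometric level ratio $2C+3$ --- and hence matches the source proof in approach; your one imprecision is attributing the $\log(t/k)$ factor to ``amortization across levels,'' whereas it comes from the code size $\log\left|\mc F\right|$ itself, the amortization over the $\lambda$ levels serving only to cancel $\lambda$ from both sides of the bit accounting and reduce the per-measurement cost to $O(\log(2C+3))$.
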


The dependency of $m$ on the approximation factor $C$ is directly extracted from the proof to Theorem 5.2 in~\cite{do2010lower}. 

\begin{proposition}[Approximate Subset Query, Theorem 3.1 \cite{price2011efficient}]\label{prop:subset_query_bound}
There is a randomized sparse binary matrix $\mb A$ with $O\left(\frac{c}{\eps}k\right)$ rows and recovery algorithm $\msc A$, such that $\forall \mb x \in \R^t$ and $\mc S \subset [t]$ with $\left|\mc S\right| = k$, $\mb x' = \msc A\left(\mb A\mb x, \mc S\right) \in \R^t$ has $\mathrm{supp}\left(\mb x'\right) \subset \mc S$ and  
\begin{equation}
\norm{\mb x' - \mb x_{\mc S}}{1} \leq \eps \norm{\mb x - \mb x_{\mc S}}{1}
\end{equation}
with probability at least $1-1/k^c$.\footnote{For any vector $\mb x$, $\mb x_{\Omega}$ is a vector of same length of $\mb x$, with coordinates in $\Omega^c$ set to $0$; ${\mb x_{\bar{\Omega}}}$ is a restriction of $\mb x$ to its subvector indexed by $\Omega$. Similarly for matrices. }
\end{proposition}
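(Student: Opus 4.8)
Since the statement merely restates Theorem 3.1 of \cite{price2011efficient}, the plan is to exhibit a concrete set-query sketch and verify, together, its row count, the support guarantee, the $\eps$-relative error, and the $1-1/k^c$ success probability. The plan is to realize $\mb A$ as $R = \Theta(c)$ independent hash tables, each hashing the coordinate set $[t]$ uniformly into $B = \Theta(k/\eps)$ buckets. Concretely I would draw independent hash functions $h_1,\dots,h_R:[t]\to[B]$ and let each of the $RB$ rows of $\mb A$ be the indicator of one bucket: the row for bucket $b$ in table $r$ has a $1$ in column $i$ iff $h_r(i)=b$. Then every column of $\mb A$ has exactly $R$ nonzeros, so $\mb A$ is sparse and binary with $RB = O(ck/\eps)$ rows, and the measurement $(\mb A\mb x)_{(r,b)} = \sum_{i:\,h_r(i)=b} x_i$ is the sum of the entries of $\mb x$ landing in bucket $b$ of table $r$.

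\textbf{Recovery and the support guarantee.} Because $\mc S$ is given, the decoder $\msc A$ knows, for each table $r$, exactly which elements of $\mc S$ occupy each bucket. I would run a peeling decoder on the bipartite graph between $\mc S$ (one left node per element) and the $RB$ buckets, where $j\in\mc S$ is joined to $(r,h_r(j))$ for every $r$. Initialize $\mb x' = \mb 0$; repeatedly find a bucket containing exactly one not-yet-recovered element $j$ of $\mc S$, set $x'_j$ to the current bucket value minus the contributions $x'_{j'}$ of the already-recovered $\mc S$-elements $j'$ sharing that bucket, mark $j$ recovered, and continue until no such bucket remains. By construction $\mathrm{supp}(\mb x')\subset\mc S$, so the support guarantee comes for free.

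\textbf{The noise budget (the easy part).} When peeling recovers every element of $\mc S$, each $j$ is resolved from a bucket in which it is the unique remaining $\mc S$-element, so after subtracting the other $\mc S$-contributions the residual is exactly $x_j$ plus the tail mass $\sum_{i\notin\mc S,\,h_{r_j}(i)=b_j} x_i$ of that bucket. Hence $|x'_j - x_j|$ is at most the $\ell^1$ tail mass in $j$'s recovery bucket, giving $\norm{\mb x' - \mb x_{\mc S}}{1} \le \sum_{i\notin\mc S}|x_i|\,N_i$, where $N_i$ counts how many recovery buckets coordinate $i$ lands in. Since the tail hashing is independent of the $\mc S$-hashing that determines the at most $k$ recovery buckets, $\expect{N_i}\le k/B = \Theta(\eps)$, so $\expect{\norm{\mb x' - \mb x_{\mc S}}{1}} \le \Theta(\eps)\norm{\mb x - \mb x_{\mc S}}{1}$; a Markov bound together with a rescaling of the constant hidden in $B$ then yields $\norm{\mb x' - \mb x_{\mc S}}{1}\le\eps\norm{\mb x - \mb x_{\mc S}}{1}$.

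\textbf{The main obstacle.} The crux is proving that the peeling decoder recovers \emph{all} of $\mc S$ except with probability $k^{-c}$, i.e. that the random $R$-uniform hypergraph induced by $\{(h_1(j),\dots,h_R(j))\}_{j\in\mc S}$ has an empty $2$-core (no nonempty stopping set of $\mc S$-elements all of whose buckets are blocked by a second element). I would bound the probability of a minimal stopping set of size $s$ by $\binom{k}{s}$ times the probability that its $Rs$ incident edges all fall on buckets each receiving at least two of the $s$ elements; with $B=\Theta(k/\eps)\gg k$ this sum is dominated by the smallest $s$ and is $k^{-\Omega(R)}$, so $R=\Theta(c)$ forces it below $\tfrac12 k^{-c}$. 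Making this union bound tight, and simultaneously upgrading the noise bound from in-expectation to the high-probability form (for instance by isolating the $O(1/\eps)$ heaviest tail coordinates, bounding their probability of hitting a recovery bucket, and applying a Bernstein-type inequality to the remainder), is the technical heart; the final $1-1/k^c$ follows by a union bound over these two failure events with constants chosen so each is at most $\tfrac12 k^{-c}$.
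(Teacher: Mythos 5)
You should first note what the paper actually does with this statement: it offers \emph{no proof} of Proposition~\ref{prop:subset_query_bound} at all. The proposition is imported verbatim as Theorem~3.1 of \cite{price2011efficient} and is used purely as a black box inside the proof of Theorem~\ref{thm:lower_bound} (it supplies the second stage, value estimation on the identified support, of the sparse-recovery reduction). So the only meaningful benchmark for your attempt is Price's original argument, and measured against that your sketch, while in the right family of constructions (hashing into $B = \Theta(k/\eps)$ buckets, $R = \Theta(c)$ repetitions, decoding using knowledge of $\mc S$), has a genuine flaw at its ``easy part.''

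The noise-budget step is incorrect as written. Your peeling decoder subtracts the \emph{recovered} values $x'_{j'}$ of previously peeled elements, not their true values $x_{j'}$, so when $j$ is resolved from bucket $(r,b)$ the residual is not ``exactly $x_j$ plus the tail mass'': it is $x_j$ plus the tail noise of that bucket \emph{plus the accumulated errors of every already-peeled element of $\mc S$ sharing that bucket}. Errors therefore propagate along the peeling order, an early error being recounted once for each later element whose recovery bucket it contaminates, and the claimed bound $\norm{\mb x' - \mb x_{\mc S}}{1} \le \sum_{i \notin \mc S} \left|x_i\right| N_i$ simply does not follow; in the worst case the total error grows with the size of the connected components of the collision structure. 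This is exactly the difficulty Price's proof confronts head-on: he shows the random collision graph decomposes into small components with high probability and bounds the accumulated error component by component. The natural dodge --- only peel $j$ from a bucket where it is isolated from \emph{all} of $\mc S$, so no propagation occurs --- breaks the row count: per table the collision probability is $\Theta(\eps)$, so requiring every $j$ to be isolated in some table forces $k\,\eps^{R} \le k^{-c}$, i.e.\ $R = \Omega\left(c \log k / \log\left(1/\eps\right)\right)$ repetitions and $O\left(\frac{c}{\eps} k \log k\right)$ rows rather than $O\left(\frac{c}{\eps}k\right)$. On top of this, the two steps you yourself flag as ``the technical heart'' (the empty-$2$-core bound at level $k^{-c}$, and upgrading the in-expectation noise bound to the $1 - 1/k^{c}$ high-probability form) are left as gestures, so even apart from the propagation issue the attempt is a program, not a proof.
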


\begin{proof}[of Theorem~\ref{thm:lower_bound}]. Consider the following distribution $\mu$ (on $\mb A$) and algorithm $\msc A$ for the $k$-sparse recovery problem as defined in Proposition~\ref{prop:sparse_random_lower}. 
\begin{itemize}
\item $\mu$ is a distribution on $\mb A = \left[\begin{smallmatrix} \mb A_{C} \\ \mb A_{B}  \end{smallmatrix}\right]$, where $\mb A_C$ comprises $\ell$ blocks of projection matrices, $\mb A_C^1, \cdots, \mb A_C^{\ell} \in \R^{m \times t}$, from the same distribution $\nu$, stacked vertically, $\mb A_B \in \R^{m' \times t}$ is a randomized sparse binary matrix with $m' = O\left(\frac{c\ell}{\eps}k\right)$ rows from a distribution that verifies Proposition~\ref{prop:subset_query_bound}. The distribution $\nu$ and parameters $m$, $\ell$, $c$, $\eps$, and $C$ are specified below. 

\item For any $\mb x \in \R^t$, $\msc A$ comprises two steps given $\mb A\mb x$: 
\begin{enumerate}
\item Identifying a subset of coordinates of $\mb x$ that probably contains large (in magnitude) elements. Suppose we target at detecting the support of the largest $k$ elements of $\mb x$. This is equivalent to identifying the nearest, out of the ${t \choose k}$ $k$-dimensional canonical subspaces [spanned by any $k$ of the $t$ canonical basis vectors (i.e., $\mb e_1, \cdots, \mb e_t$)], to $\mb x$ in the sense of $\ell^1$ point-to-subspace distance. \\
\\
Let $\nu$ and $m$ be a distribution-projection dimension pair that satisfies the hypothesis of Theorem~\ref{thm:lower_bound} with the parameter tuples $\left(k, {t \choose k}, \eta_{\min}, \gamma\right)$. In particular, this means if the canonical subspaces and $\mb x$ obey the gap condition dictated by $\eta_{\min}$, given $\mb A_C^i \mb x$, $\forall i \in [\ell]$, we can identify the $k$ significant supports as desired with probability at least $\gamma$. This is not true for all $\mb x$ however. Instead, w.l.o.g. assuming the first canonical subspace is the nearest, consider the following ``partitioning''\footnote{The division may not be partitioning in strictly mathematical sense since $\mc I$ or $\mc J$ may be empty. } of canonical subspaces $\mc S_1, \cdots, \mc S_{{t \choose k}}$: 
	\begin{itemize}
	\item $\left\{1\right\}$
	\item $\mc I \doteq \left\{\kappa \in \left[ {t \choose k}\right]\setminus \left\{1\right\}: d_{\ell^1}\left(\mb x, \mc S_{\kappa}\right) < \eta_{\min} d_{\ell^1}\left(\mb x, \mc S_1\right) \right\}$
	\item $\mc J \doteq \left\{\kappa \in \left[ {t \choose k}\right]\setminus \left\{1\right\}: d_{\ell^1}\left(\mb x, \mc S_{\kappa}\right) \geq \eta_{\min} d_{\ell^1}\left(\mb x, \mc S_1\right) \right\}$
	\end{itemize}
Then $\mc I = \emptyset$ corresponds to cases when distance gap $\eta_{\min}$ is obeyed, so $\forall i \in [\ell]$
\begin{equation}
\prob{\mathop{\arg\min}_{\kappa \in \left[ {t \choose k}\right]} d_{\ell^1}\left(\mb A_C^i \mb x, \mb A_C^i \mc S_{\kappa}\right) \in \left\{1\right\} \cup \mc I} = \prob{\mathop{\arg\min}_{\kappa \in \left[ {t \choose k}\right]} d_{\ell^1}\left(\mb A_C^i \mb x, \mb A_C^i \mc S_{\kappa}\right) = 1} \geq \gamma. 
\end{equation}
 If $\mc J = \emptyset$, $\forall i \in [\ell]$
\begin{equation}
1 = \prob{\mathop{\arg\min}_{\kappa \in \left[ {t \choose k}\right]} d_{\ell^1}\left(\mb A_C^i \mb x, \mb A_C^i \mc S_{\kappa}\right) \in \left\{1\right\} \cup \mc I} \geq \gamma. 
\end{equation}
When $\mc I \neq \emptyset$ and $\mc J \neq \emptyset$, we consider in addition a spurious set $\mc I'$ with $\left|\mc I'\right| = \left|\mc I\right|$, which consists of random duplicates of subspaces in $\mc J$. So in this case  
\begin{align}
	& \prob{\mathop{\arg\min}_{\kappa \in \left[ {t \choose k}\right]} d_{\ell^1}\left(\mb A_C^i \mb x, \mb A_C^i \mc S_{\kappa}\right) \in \left\{1\right\} \cup \mc I} \\
\geq \; & \prob{\mathop{\arg\min}_{\kappa \in \left\{1\right\} \cup \mc J} d_{\ell^1}\left(\mb A_C^i \mb x, \mb A_C^i \mc S_{\kappa}\right) = 1} \\
\geq \; & \prob{\mathop{\arg\min}_{\kappa \in \left\{1\right\} \cup \mc J \cup \mc I'} d_{\ell^1}\left(\mb A_C^i \mb x, \mb A_C^i \mc S_{\kappa}\right) = 1} \geq \gamma. 
\end{align}
So in any case $\mb A_C^i$, $i \in [\ell]$ is enough to guarantee a constant probability of success $\gamma$, to identify one subspace that is within $\eta_{\min}$ of the best in terms of distance to $\mb x$. Denote the corresponding supports identified by the $\ell$ independent runs by $\Omega_i$, $\forall i \in [\ell]$ and $\Pi \doteq \cup_{i=1}^{\ell} \Omega_i$, we have 
\begin{equation}
\prob{\exists\; \mc S \subset \Pi: \left|\mc S\right| = k, \norm{\mb x_{\mc S^c}}{1} \leq \eta_{\min} \min_{\left|\mc T\right| = k}\norm{\mb x_{\mc T^c}}{1}} \geq 1 - \left(1-\gamma\right)^{\ell}. 
\end{equation} 
We choose 
\begin{equation}
\ell = -\log 5/\log\left(1-\gamma\right)
\end{equation} 
to make this probability at least $4/5$. 

\item Estimating the value of $\mb x$ on the support from Step $1$. We denote $k' = \left|\Pi\right| \leq k\ell$. Given $\Pi$, by Proposition~\ref{prop:subset_query_bound}, we can obtain an $\hat{\mb x}$ with $\mb A_{B}\mb x$ that obeys: $\mathrm{supp}\left(\hat{\mb x}\right) \subset \Pi$, and 
\begin{equation}
\norm{\hat{\mb x} - \mb x_{\Pi}}{1} \leq \eps \norm{\mb x - \mb x_{\Pi}}{1}
\end{equation}
with probability at least $15/16$, provided 
\begin{equation}
k^c \geq 16 \Longrightarrow c \geq \log 16/ \log k. 
\end{equation}
\end{enumerate} 
\end{itemize}

Putting together above constructions, with probability at least $4/5 \times 15/16 = 3/4$, $\hat{\mb x}$ above satisfies  
\begin{align}
\norm{\hat{\mb x} - \mb x}{1} 
& = \norm{\hat{\mb x} - \mb x_{\Pi}}{1} + \norm{\mb x_{\Pi^c}}{1} \leq \eps \norm{\mb x - \mb x_{\Pi}}{1} + \norm{\mb x_{\Pi^c}}{1} \\
& \leq \left(1+\eps\right)\norm{\mb x_{\Pi^c}}{1} \leq \left(1 + \eps\right) \eta_{\min}\min_{\norm{\mb x'}{0} \leq k} \norm{\mb x - \mb x'}{1}. 
\end{align}
Hence this $\left(\mu, \msc A\right)$ pair respects the hypothesis in Proposition~\ref{prop:sparse_random_lower} and so $\mb A$ must have at least $C_1 k\log\left(t/k\right)/\left[2 + 2 \log \left(2 \left(1+\eps\right) \eta_{\min} + 3\right)\right]$ rows for some constant $C_1$, or each $\mb A_C^i$ must have 
\begin{align}
& \frac{1}{\ell}\left[\frac{C_1 k\log\left(t/k\right)}{2 + 2 \log \left(2 \left(1+\eps\right) \eta_{\min} + 3\right)} - C_2 \frac{c\ell k}{\eps} \right] \nonumber \\
= \; &  C_1' \frac{1}{2 + 2 \log \left(2 \left(1+\eps\right) \eta_{\min} + 3\right)}\log \frac{1}{1-\gamma} k\log\left(t/k\right) - C_2' \frac{k}{\log k} \frac{1}{\eps}. 
\end{align}
rows, for some constants $C_2$, $C_1'$ and $C_2'$. Note that we have $n = {t \choose k}$ subspaces in each subspace search problem, hence by taking $\eps = 1/2$ (corresponding to requiring $C = 1.5\eta_{\min}$ approximation for the $k$-sparse recovery problem we started with) we have $d \geq C_3 \frac{1}{\log 3\left(\eta_{\min} + 1\right)}\log\frac{1}{1-\gamma}\log n - C_4 \frac{k}{\log k}$ for some numerical constants $C_3, C_4$, or translating to the parameter of Theorem~\ref{thm:lower_bound}: 
\begin{equation}
d \geq C_3 \frac{1}{\log 3\left(\eta_{\min} + 1\right)}\log\frac{1}{1-\gamma}\log n - C_4 \frac{r}{\log r}. 
\end{equation}
\\
\\
On the other hand, consider the ${D \choose r}$ canonical subspaces $\left\{\mc S_1, \cdots, \mc S_{{D \choose r}}\right\}$ spanned by any $r$ subset of the canonical basis $\left\{\mb e_1, \cdots, \mb e_D\right\}$. Let $\mb 0 \neq \mb q \in \mc S$, where $\mc S$ is another $r$-dimensional subspace and $\mc S \neq \mc S_i$, $\forall i \in [{D \choose r}]$ and moreover $\mb q \notin \mc S_i, \forall i$. Note that in this case $t = 1$ and $\eta = \infty$. For any projection matrix $\mb P \in \R^{d \times D}$, $\mb P\mb q$ is either $\mb 0$ or spans a $1$-dimensional subspace. 
\begin{itemize}
\item To identify the original subspace unambiguously with nontrivial probability (i.e., better than random guess in any case of ties), $\mb P\mb q$ cannot be zero, as  $\forall i$, $\mb P\mc S_i$ is again a subspace. 
\item When $\mb P \mb q \neq \mb 0$, a necessary condition for unambiguous identifiability is $\mb P \mb q \notin \mb P \mc S_i$, $\forall i$, or 
\begin{equation}
\mb P \mb q \neq \mb P_{\mc S_i}\mb y, \forall \mb y \in \R^r, \forall i \in \left[{D \choose r}\right], 
\end{equation}
where $\mb P_{\mc S_i}$ is the submatrix indexed by the canonical basis vectors associated with the subspace $\mc S_i$. Equivalently, 
\begin{equation} \label{eq:lower_bound_d_main}
\mb P_{\mc S_i^c} \mb q_{\mc S_i^c} \neq \mb P_{\mc S_i}\mb y, \forall \mb y \in \R^r, \forall i \in \left[{D \choose r}\right]. 
\end{equation}
\end{itemize} 
If $m \leq r$, then by rank argument, $\exists i \in [{D \choose r}]$, such that $\mathrm{span}\left(\mb P_{\mc S_i}\right) = \mathrm{span}\left(\mb P\right)$, and hence $\mb P_{\mc S_i^c} \mb q_{\mc S_i^c} \in \mathrm{span}\left(\mb P_{\mc S_i}\right)$, or $\exists \mb y \in \R^r$, such that $\mb P_{\mc S_i^c} \mb q_{\mc S_i^c} = \mb P_{\mc S_i}\mb y$, contradicting~\eqref{eq:lower_bound_d_main}. So we must have $d \geq r$. \qquad 
\end{proof}

\bibliographystyle{siam}
\bibliography{L1}
\end{document}